%% file: main.tex
\documentclass{article}
\usepackage{iclr2027_conference,times}
\usepackage[T1]{fontenc}
\input{math_commands.tex}

\usepackage{amsmath,amssymb,amsthm,bm,mathtools}
\usepackage{booktabs,multirow,graphicx,subcaption,float}
\usepackage{placeins}
\usepackage{algorithm}
\usepackage[noend]{algpseudocode}
\usepackage{microtype,xcolor,url,hyperref}
\definecolor{linkblue}{RGB}{0,72,135}
\hypersetup{
  colorlinks=true,
  linkcolor=linkblue,
  citecolor=linkblue,
  urlcolor=linkblue,
  bookmarksdepth=2
}
\renewcommand{\eqref}[1]{\hyperref[#1]{\textup{(\ref*{#1})}}}
\usepackage{tikz}
\usetikzlibrary{arrows.meta,positioning,fit,calc}

\newcommand{\sg}{\operatorname{sg}}

\newcommand{\Sphere}{\mathbb S}
\newcommand{\ours}{\textsc{FBDM}}

\newtheorem{proposition}{Proposition}
\newtheorem{lemma}[proposition]{Lemma}
\newtheorem{remark}{Remark}
\newtheorem{theorem}{Theorem}
\newtheorem*{mytheorem}{\protect\hyperref[thm:population-error]{Theorem~\protect\ref*{thm:population-error}}}
\theoremstyle{definition}
\newtheorem{definition}{Definition}
\newtheorem{assumption}{Assumption}
\theoremstyle{plain}

\title{Learning a Flow to Self-Supervised Representations}

\author{
\begin{tabular}{c}
\textbf{Yuling Jiao\textsuperscript{1} \quad
Wensen Ma\textsuperscript{2} \quad
Houduo Qi\textsuperscript{3,2} \quad
Defeng Sun\textsuperscript{2}}
\end{tabular}
}

\iclrfinalcopy

\begin{document}
\maketitle
\lhead{Preprint}
\begingroup
\makeatletter
\renewcommand{\thefootnote}{}
\renewcommand{\@makefntext}[1]{\noindent #1}
\footnotetext{%
\raggedright
\textsuperscript{1}School of Artificial Intelligence, National Center for Applied Mathematics in Hubei, Hubei Key Laboratory of Computational Science, Wuhan University, Wuhan, China.\\
\textsuperscript{2}Department of Applied Mathematics, The Hong Kong Polytechnic University, Hung Hom, Kowloon, Hong Kong SAR, China.\\
\textsuperscript{3}Department of Data Science and Artificial Intelligence, The Hong Kong Polytechnic University, Hung Hom, Kowloon, Hong Kong SAR, China.}
\addtocounter{footnote}{-1}
\makeatother
\endgroup

\begin{abstract}
Explicit geometric references offer a direct way to structure self-supervised
representations. Existing adversarial distribution-matching formulations,
however, require costly encoder--critic optimization
\citep{jiao2026distribution}. We introduce Flow-Based Distribution Matching
(FBDM), a non-adversarial framework that learns this reference-directed
geometry through spherical conditional velocity regression.
An ETF-inspired reference allows its number of components $K'$ to exceed the
auxiliary flow dimension $d^\star$ while retaining structured geometric
separation. We assign both augmented views of each image to the same target,
while limiting how many images each reference center can receive.
An explicit alignment loss further pulls the two views' representations
closer together.
Experiments across benchmarks ranging from CIFAR to ImageNet show that FBDM
achieves performance nearly on par with DM and remains competitive with
existing SSL methods.
Matched training-cost comparisons show a 1.48--1.83$\times$ speedup over DM
with a negligible increase in GPU memory usage.
We also provide a theoretical explanation for the usefulness of the learned
representations: under stated conditions, we bound the downstream
misclassification rate in terms of the FBDM pretraining loss.
\end{abstract}

\input{sections/introduction}
\input{sections/related_work}
\input{sections/method}
\input{sections/experiments}
\input{sections/theory}
\input{sections/limitations_conclusion}

\clearpage
\section*{Reproducibility Statement}
The appendix gives the complete objective, assignment procedure, clock,
augmentation and optimization settings, initialization, evaluation
rules, and the distinction between online screening and fixed evaluation. The
public DM code used for the matched training-cost comparison is available at
\url{https://github.com/vincen-github/DM}. The FBDM code is available at
\url{https://github.com/vincen-github/FBDM}.

\section*{AI Use Statement}
The authors developed the research roadmap, initial FBDM idea, and initial
algorithmic implementation. Throughout the project, they set the conceptual
direction, proposed and evaluated experimental directions, identified the
central weakness to address in the theoretical analysis, and specified the
goal of relating the pretraining objective to downstream misclassification.

Within this author-defined framework, OpenAI's ChatGPT and Codex served as
substantial research-assistance tools. They helped develop and implement the
theoretical details, including the population-level analysis, intermediate
mathematical arguments and bounds, and proof drafts toward the authors'
specified goal. They also assisted with algorithmic refinement, experimental
design, implementation and review of training and evaluation code, debugging,
interpretation of intermediate experimental results, literature search,
manuscript drafting and revision, figure and table preparation, and LaTeX.

The authors critically examined AI-assisted suggestions, derivations, and
experimental proposals through iterative discussion; chose which ideas and
experiments to pursue; verified the implementations, proofs, citations, and
experimental outputs; and made all final scientific and presentation
decisions. The authors take full responsibility for the accuracy, integrity,
and conclusions of the method, theoretical analysis, experiments, citations,
and reported results.

\bibliography{references}
\bibliographystyle{iclr2027_conference}

\clearpage
\appendix
\input{sections/appendix}

\end{document}

%% file: math_commands.tex
\usepackage{amsmath,amsfonts,bm}

\def\eqref#1{equation~\ref{#1}}

\def\1{\bm{1}}

\DeclareMathAlphabet{\mathsfit}{\encodingdefault}{\sfdefault}{m}{sl}
\SetMathAlphabet{\mathsfit}{bold}{\encodingdefault}{\sfdefault}{bx}{n}



%% file: sections/introduction.tex
\section{Introduction}
\label{sec:introduction}
Self-supervised learning (SSL) learns transferable representations without
labels, but must prevent collapse: perfect view agreement is useless if all
images map to the same vector. Contrastive, predictor-based, and
redundancy-reduction methods address this through negative pairs, asymmetry,
or statistical constraints
\citep{chen2020simple,he2020momentum,grill2020bootstrap,chen2021exploring,
zbontar2021barlow,hua2021feature,ermolov2021whitening,bardes2022vicreg}.
Recent work continues to develop these objectives and their analysis
\citep{huang2024ldreg,duan2025advssl,simeoni2025dinov3,
zhang2026structural,luthra2026alignment}.
These approaches largely specify what representations should \emph{not}
become, leaving their desired global distribution implicit.

Distribution Matching (DM) offers a generative perspective: it prescribes an
explicit reference distribution $\nu$ and trains the encoder $f$ by minimizing
$W_1(f_{\#}P_{\mathcal A},\nu)$, where $W_1$ denotes the Wasserstein-1 distance
and $f_{\#}P_{\mathcal A}$ is the encoded augmentation distribution. In practice, this distance is estimated by an
adversarial critic \citep{jiao2026distribution}.
Two restrictions remain. Its coordinate reference requires $K'\le d^\star$,
coupling the number of latent templates to the matching dimension.
Its critic-based objective requires opposing updates: the critic increases the
estimated discrepancy while the encoder decreases it
\citep{arjovsky2017wasserstein}. Tracking the evolving encoder requires multiple
critic steps per encoder update and careful hyperparameter tuning.
Can we retain explicit distribution design without the adversarial critic,
while allowing more templates in a compact flow space?

Our answer is \ours. Generative learning is itself a distribution-transport
problem, and flow matching provides analytic velocity targets instead of
critic-estimated distances \citep{lipman2023flow,liu2023rectified}.
FBDM uses this tool for representation learning rather than sample generation:
its flow objective guides the encoder toward an explicit geometric reference.
The auxiliary velocity field provides a training signal but is discarded
downstream, without repeated critic optimization.
An ETF-inspired simplex-spectral reference permits $K'>d^\star$;
a two-view consensus assigns each image a shared destination, and an explicit
starting-point penalty reinforces alignment. Figure~\ref{fig:overview}
summarizes the pipeline. Training needs neither critic maximization nor a
numerical ODE solve; only the backbone is retained downstream.
Experiments across benchmarks ranging from CIFAR to ImageNet show that this
approach learns competitive representations, while matched training-cost
comparisons demonstrate a
1.48--1.83$\times$ speedup over DM with modest additional GPU memory
(Section~\ref{sec:experiments}).

\input{figures/overview}

Our population theory links the full FBDM loss to classification from the
initial representation. Under augmentation-quality, regularity, and
transport-identifiability assumptions, Theorem~\ref{thm:population-error}
bounds nearest-centroid error by an augmentation-dependent residual plus a
square-root loss term, in a positive-margin, small-loss regime.
Velocity regression controls terminal Wasserstein error; identifiability and
ODE stability transfer class separation to the initial representation, while
alignment and augmentation quality control within-class variation.

Our contributions are:
\begin{itemize}
\item An ETF-inspired reference permits $K'>d^\star$ with structured geometric
separation, decoupling reference granularity from the auxiliary flow dimension.
\item Building on this reference, FBDM replaces critic-based distribution
matching with velocity regression. It achieves comparable performance to DM
and competitive SSL results across benchmarks ranging from CIFAR to ImageNet,
with 1.48--1.83$\times$ higher training throughput and about 0.50~GiB
additional peak allocated memory under matched hardware.
\item A conditional population guarantee bounds downstream misclassification
by the full FBDM loss and augmentation quality, for a nearest-centroid
classifier on the initial normalized representation.
\end{itemize}

%% file: figures/overview.tex
\begin{figure}[t]
\centering
\resizebox{\linewidth}{!}{%
\begin{tikzpicture}[
  font=\scriptsize,
  box/.style={draw,rounded corners=2pt,minimum height=7mm,align=center,fill=blue!4},
  op/.style={draw,rounded corners=2pt,minimum height=7mm,align=center,fill=orange!7},
  prior/.style={draw,circle,inner sep=1.4pt,fill=purple!14},
  arr/.style={-{Latex[length=1.6mm]},thick},
  flow/.style={-{Latex[length=1.8mm]},very thick,draw=teal!70!black},
  node distance=5mm and 7mm]
  \node[box] (x) {image $x$};
  \node[box,right=of x,yshift=5mm] (a1) {$A_1(x)$};
  \node[box,right=of x,yshift=-5mm] (a2) {$A_2(x)$};
  \node[box,right=of a1] (enc1) {$f_\theta,h_\psi$};
  \node[box,right=of a2] (enc2) {$f_\theta,h_\psi$};
  \node[op,right=9mm of $(enc1)!0.5!(enc2)$] (cons) {stop-gradient\\consensus $q$};
  \node[op,right=of cons] (assign) {capacitated\\assignment};
  \node[prior,above right=2mm and 10mm of assign] (c1) {};
  \node[prior,right=4mm of c1] (c2) {};
  \node[prior,below right=3mm and -1mm of c1] (c3) {};
  \node[prior,below=4mm of c2] (c4) {};
  \node[draw,dashed,rounded corners,inner sep=2mm,
        fit=(c1)(c2)(c3)(c4)] (priorbox) {};
  \node[align=center,anchor=south] at ($(priorbox.north)+(0,1mm)$)
    {fixed simplex-spectral\\reference mixture};

  \node[circle,draw=blue!55!black,thick,minimum size=21mm,inner sep=0pt,
        fill=blue!2,right=14mm of assign,yshift=-16mm] (sphere) {};
  \draw[gray!45] ($(sphere.center)+(-10.5mm,0)$)
    arc[start angle=180,end angle=360,x radius=10.5mm,y radius=3.2mm];
  \draw[gray!35,dashed] ($(sphere.center)+(10.5mm,0)$)
    arc[start angle=0,end angle=180,x radius=10.5mm,y radius=3.2mm];
  \draw[gray!45] ($(sphere.center)+(0,-10.5mm)$)
    .. controls ($(sphere.center)+(4.2mm,-5mm)$) and
                ($(sphere.center)+(4.2mm,5mm)$) ..
       ($(sphere.center)+(0,10.5mm)$);
  \draw[gray!35,dashed] ($(sphere.center)+(0,-10.5mm)$)
    .. controls ($(sphere.center)+(-4.2mm,-5mm)$) and
                ($(sphere.center)+(-4.2mm,5mm)$) ..
       ($(sphere.center)+(0,10.5mm)$);
  \coordinate (zstart) at ($(sphere.center)+(-6mm,-3.5mm)$);
  \coordinate (zmid) at ($(sphere.center)+(0mm,-1mm)$);
  \coordinate (zend) at ($(sphere.center)+(5.5mm,4mm)$);
  \draw[flow] (zstart) .. controls ($(sphere.center)+(-2mm,-7mm)$) and
    ($(sphere.center)+(3mm,-0.5mm)$) .. (zend);
  \fill[blue!70!black] (zstart) circle (1.1pt)
    node[below left=-0.3mm] {$z_0$};
  \fill[teal!70!black] (zmid) circle (0.9pt)
    node[above left=-0.5mm] {$z_t$};
  \fill[purple!80!black] (zend) circle (1.1pt)
    node[above right=-0.5mm] {$\mathbf r^{(n)}$};
  \draw[-{Latex[length=1.6mm]},thick,red!70!black]
    ($(zmid)+(-0.5mm,-0.5mm)$) -- ++(4.5mm,2.8mm)
    node[right=-0.2mm] {$u_t$};
  \node[align=center,anchor=north] at ($(sphere.south)+(0,-1mm)$)
    {spherical transport};

  \node[box,right=10mm of sphere] (vel) {$v_\phi(z_t,t)$};
  \node[op,right=of vel] (loss) {$\|v_\phi-u_t\|^2$\\$+\lambda\|z_1-z_2\|^2$};
  \draw[arr] (x) -- (a1);
  \draw[arr] (x) -- (a2);
  \draw[arr] (a1) -- (enc1);
  \draw[arr] (a2) -- (enc2);
  \draw[arr] (enc1) -- node[above] {$z_1$} (cons);
  \draw[arr] (enc2) -- node[below] {$z_2$} (cons);
  \draw[arr] (cons) -- (assign);
  \draw[arr] (assign) -- (priorbox.west);
  \draw[arr] (enc2.south) |- (zstart);
  \draw[arr] (priorbox.south) |- (zend);
  \draw[arr] (sphere) -- (vel);
  \draw[arr] (vel) -- (loss);
\end{tikzpicture}
}
\caption{\textbf{FBDM in one view.} Two augmented views are encoded on the unit
sphere. Their stop-gradient consensus selects a capacity-constrained reference
component and a locally perturbed target $\mathbf r^{(n)}$. Conditional flow matching
regresses the target velocity $u_t$ along the resulting geodesic $z_t$, while an
explicit start-point loss aligns the views. Only the backbone is retained
downstream.}
\label{fig:overview}
\end{figure}

%% file: sections/related_work.tex
\section{Related Work}
\label{sec:related}
SSL theory studies latent-class transfer, alignment and uniformity, spectral
structure, and downstream generalization
\citep{saunshi2019theoretical,wang2020understanding,haochen2021provable,
haochen2022beyond,huang2023towards,duan2025advssl}.
DM connects representation quality to explicit Wasserstein matching
\citep{jiao2026distribution}.
Flow matching instead learns distributional transport by velocity regression
\citep{lipman2023flow,fukumizu2025flow}, including geodesic constructions on
manifolds \citep{chen2024geometries}.
REPA reports that generative diffusion features lag dedicated SSL features on
classification probes, and uses pretrained features to improve generation
\citep{yu2025repa}. Self-Flow learns representations within multimodal
synthesis \citep{chefer2026selfflow}. SenFlow \citep{ukita2026senflow}
jointly trains an encoder and a conditional flow generator in an
autoencoder-like design, targeting both recognition and generation.
Generation remains an objective in all three.

FBDM instead uses
reference-directed velocity regression to train an encoder for downstream
recognition, without generating data or using a pretrained teacher.
Appendix~\ref{app:related} discusses additional SSL methods, probability-flow
ODE theory, and frame geometry.

%% file: sections/method.tex
\section{Flow-Based Distribution Matching}
\label{sec:method}

\paragraph{Notation.}
Vectors and matrices are bold; $\mathbf X$, $\mathbf Z$, and $\mathbf R$
denote random vectors.
We reserve $d$ for the input-image dimension and $d^\star$ for the auxiliary
flow-space dimension. Write $[N]=\{1,\ldots,N\}$,
$\Sphere^{q-1}=\{\mathbf u\in\mathbb R^q:\|\mathbf u\|_2=1\}$, and
$\mathbf I_q,\mathbf1$ for the identity matrix and all-ones vector.
For a set $\mathcal S$, $\operatorname{Unif}(\mathcal S)$ is its
uniform law; $\sg$ is stop-gradient and $\hookrightarrow$ an injective map.

\subsection{Representations and a Flexible Reference}
\label{subsec:reference}
Given unlabeled images $\mathcal D_N=\{\mathbf x_i\}_{i=1}^N\subset\mathbb R^d$,
each iteration draws $B$ images. For each $\mathbf x^{(n)}$, draw two views
independently from the augmentation kernel
$\mathcal A(\,\cdot\mid\mathbf x^{(n)})$. Since assignment compares directions,
normalize the backbone--projector output:
\begin{equation}
\mathbf z_j^{(n)}
=\frac{h_\psi(f_\theta(\mathbf x_j^{(n)}))}
{\|h_\psi(f_\theta(\mathbf x_j^{(n)}))\|_2}
\in\Sphere^{d^\star-1},\qquad j\in\{1,2\}.
\label{eq:representation}
\end{equation}
We now specify a destination law, pair views with its components, and
construct the resulting transport signal.

\paragraph{More components without a larger flow space.}
DM uses $\mathbf c_k^{\mathrm{DM}}=s_k\mathbf e_{\pi(k)}$, where
$\mathbf e_i$ is the $i$th standard basis vector in $\mathbb R^{d^\star}$, with independent
$s_k\sim\operatorname{Unif}(\{-1,1\})$ and
$\pi:[K']\hookrightarrow[d^\star]$ \citep{jiao2026distribution}.
Thus $K'\le d^\star$: discovering more latent groups requires increasing the
matching dimension $d^\star$. To allow finer template granularity
without increasing this dimension, we instead construct $K'$ unit centers
$\mathcal C=\{\mathbf c_k\}_{k=1}^{K'}$ with $K'>d^\star$.
Figure~\ref{fig:dm-etf-reference} illustrates this decoupling.

\input{figures/dm_etf_reference}

\paragraph{From an ideal geometry to a compact reference.}
To give these components a geometrically separated arrangement, we use an
equiangular tight frame (ETF) as the ideal template: it minimizes the largest
absolute inner product between distinct unit centers
(Appendix~\ref{app:proof-welch}). However, an exact ETF need not exist for an
arbitrary pair $(K',d^\star)$. We therefore use a Gram-first construction:
for the guaranteed pair $(K',K'-1)$, we prescribe the ideal simplex Gram
matrix
\begin{equation}
\mathbf G_\Delta=\frac{K'}{K'-1}
\left(\mathbf I_{K'}-\frac{\mathbf1\mathbf1^\top}{K'}\right)
\in\mathbb R^{K'\times K'}.
\label{eq:simplex-gram}
\end{equation}
Its diagonal entries are $1$ and off-diagonal entries are $-1/(K'-1)$,
directly encoding unit norms and equiangularity. Appendix~\ref{app:spectral-construction}
verifies that $\mathbf G_\Delta$ is positive semidefinite with rank $K'-1$
and spectrally realizes an exact ETF of $K'$ vectors in $\mathbb R^{K'-1}$.
We then project this ideal geometry to the target matching dimension
$2\le d^\star<K'-1$.

Let $\mathbf U_{d^\star}$ and $\bm\Lambda_{d^\star}$ collect the leading
$d^\star$ orthonormal eigenvectors and eigenvalues. The projected coordinate
matrix is
\begin{equation}
\mathbf Y=\mathbf U_{d^\star}\bm\Lambda_{d^\star}^{1/2}
\in\mathbb R^{K'\times d^\star}.
\label{eq:projected-coordinates}
\end{equation}
Lemma~\ref{lem:spectral} shows that \eqref{eq:projected-coordinates} minimizes
the total squared error in the simplex's pairwise inner products before
normalization. We then obtain spherical centers by normalizing each row:
\[
\mathbf c_k=\frac{\mathbf Y_{k,:}}{\|\mathbf Y_{k,:}\|_2},
\qquad k\in[K'].
\]
The retained eigenspace is chosen to give nonzero rows and distinct normalized
directions. Since normalization only rescales each nonzero row by a positive
scalar, it preserves the directions and pairwise angles of the projected rows.
The construction recovers the exact simplex at $d^\star=K'-1$ and an
ETF-inspired reference in lower dimensions. Appendix~\ref{app:spectral-construction}
gives the eigenpair definitions and Algorithm~\ref{alg:reference-centers}
summarizes the construction.

To retain local variation rather than force all images in a component to
the same point, perturb its center by a spherical direction, with $0\le\varepsilon<1$:
\begin{equation}
Q_k^\varepsilon=\operatorname{Law}\!\left(
\frac{\mathbf c_k+\varepsilon\bm\eta}
{\|\mathbf c_k+\varepsilon\bm\eta\|_2}\right),
\qquad
\nu_\varepsilon=\frac1{K'}\sum_{k=1}^{K'}Q_k^\varepsilon,
\quad
\bm\eta\sim\operatorname{Unif}(\Sphere^{d^\star-1}).
\label{eq:reference-law}
\end{equation}
Here $K'$ controls template granularity and $\varepsilon$ local spread;
components are label-free templates, not prescribed ground-truth classes.

\subsection{Two-View Capacitated Assignment}
\label{subsec:assignment}
Flow matching must pair each view with a destination. To align a positive pair
implicitly, we choose one shared destination using its detached consensus:
\begin{equation}
\mathbf q^{(n)}
=\frac{\sg(\mathbf z_1^{(n)})+\sg(\mathbf z_2^{(n)})}
{\|\sg(\mathbf z_1^{(n)})+\sg(\mathbf z_2^{(n)})\|_2}.
\label{eq:consensus}
\end{equation}
Detachment keeps the combinatorial assignment outside the gradient path.
The normalized sum is defined for non-antipodal views.

A directly balanced construction assumes $K'\mid B$, meaning that $K'$ divides
$B$, and allocates
$m_0=B/K'$ images to each center. However, minibatches need not be semantically
balanced, and fixed per-center quotas may force images sharing similar
semantics toward incompatible components.
To relax this fixed quota, we use an epoch-indexed capacity
$m_{\mathcal D_N}(e)>B/K'$, where $e$ denotes the training epoch, and select
only $B$ of the available slots:
\begin{equation}
\mathcal P_{\mathcal D_N}(e)=[K']\times[m_{\mathcal D_N}(e)],
\qquad \kappa:[K']\times\mathbb N\to[K'],\quad \kappa((k,\ell))=k.
\label{eq:capacity}
\end{equation}
The injective assignment
\begin{equation}
\alpha^*\in\arg\max_{\alpha:[B]\hookrightarrow\mathcal P_{\mathcal D_N}(e)}
\sum_{n=1}^B(\mathbf q^{(n)})^\top\mathbf c_{\kappa(\alpha(n))}
\label{eq:hungarian}
\end{equation}
caps occupancy without enforcing equality. Draw
$\bm\eta^{(n)}\sim\operatorname{Unif}(\Sphere^{d^\star-1})$ and set
\begin{equation}
\mathbf r^{(n)}
=\frac{\mathbf c_{\kappa(\alpha^*(n))}+\varepsilon\bm\eta^{(n)}}
{\|\mathbf c_{\kappa(\alpha^*(n))}+\varepsilon\bm\eta^{(n)}\|_2}.
\label{eq:target}
\end{equation}
The resulting $\mathbf r^{(n)}$ is the shared endpoint toward which both
$\mathbf z_j^{(n)}$ are transported along the velocity field; sharing it across
the two views implements implicit alignment.
The capacity schedule controls per-center availability;
Appendix~\ref{app:configs}, Table~\ref{tab:dataset-capacity}, gives the
dataset-specific choices.

\subsection{Spherical Transport and Its Learning Signal}
\label{subsec:flow}
Because representations and destinations lie on the sphere, a shortest
great-circle path is a natural interpolation \citep{chen2024geometries}.
For non-antipodal $\mathbf z,\mathbf r$, let
$\omega=\arccos(\mathbf z^\top\mathbf r)$ and define progress $s\in[0,1]$ by
\begin{equation}
\bm\gamma_{\mathbf z,\mathbf r}(s)
=\frac{\sin((1-s)\omega)}{\sin\omega}\mathbf z
+\frac{\sin(s\omega)}{\sin\omega}\mathbf r.
\label{eq:slerp}
\end{equation}
The $\omega=0$ case is understood by continuity.
Appendix~\ref{app:derive-slerp} derives the spherical path and its tangent
velocity. A comparison with linear interpolation is provided in
Appendix~\ref{app:linear-interpolation}.

Fix a training epoch $e$ and suppress its dependence in the notation.
Assume that the assignment in Section~\ref{subsec:assignment} has produced the
representation--endpoint pairs, and denote their induced coupling by
$\Pi_f=\operatorname{Law}(\mathbf Z_0,\mathbf R)$. For
$(\mathbf Z_0,\mathbf R)\sim\Pi_f$, a continuously differentiable increasing
clock $s(t)$ with $s(0)=0,s(1)=1$ defines
\begin{equation}
\mathbf Z_t=\bm\gamma_{\mathbf Z_0,\mathbf R}(s(t)),
\qquad p_t=\operatorname{Law}(\mathbf Z_t).
\label{eq:clocked-law}
\end{equation}
Its analytic velocity is the signal to be regressed:
\begin{equation}
\mathbf U_t=\frac{\mathrm d\mathbf Z_t}{\mathrm dt}
=\dot s(t)\left.\partial_s\bm\gamma_{\mathbf Z_0,\mathbf R}(s)
\right|_{s=s(t)}.
\label{eq:clocked-target}
\end{equation}
The conditional mean
$\bm v^\star(\mathbf z,t)=\mathbb E[\mathbf U_t\mid\mathbf Z_t=\mathbf z]$
realizes the distributional path through
\begin{equation}
\partial_t p_t(\mathbf z)+\nabla\!\cdot
\left(p_t(\mathbf z)\bm v^\star(\mathbf z,t)\right)=0.
\label{eq:continuity}
\end{equation}
Here $\nabla\!\cdot$ is divergence; when $p_t$ has no ambient density,
\eqref{eq:continuity} is understood in the standard weak sense.
Conditional expectation connects path construction to velocity regression
\citep{lipman2023flow}:
\begin{equation}
\mathcal R_{\mathrm{CFM}}(\phi;q_T)
=\frac1{d^\star}\mathbb E_{\Pi_f,T}
\left[\|\bm v_\phi(\mathbf Z_T,T)-\mathbf U_T\|_2^2\right],
\qquad T\sim q_T,
\label{eq:pathwise-cfm-risk}
\end{equation}
where $T$ is independent of the endpoints.

\paragraph{Time reparameterization and uniform-progress sampling.}
For semantic encoding, reaching an appropriate reference component matters
more than precise positioning within it. We therefore choose clocks with
decreasing $\dot s(t)$. Through \eqref{eq:clocked-target}, this strengthens
component-directed motion early and attenuates within-component refinement
late without changing the path. For example, the rational schedule used on
several datasets has $\dot s(t)=(1+a)/(1+at)^2$, which decreases for $a>0$.
The reported rational and exponential choices appear in
Table~\ref{tab:time-schedules} and follow the distance-scheduling perspective
of \citet{chen2024geometries}. Uniformly sampling $t$ would overrepresent the
slow late segment, so inverse-transform sampling instead draws
$S_j^{(n)}\sim\operatorname{Unif}([0,1])$ and sets
\begin{equation}
T_j^{(n)}=s^{-1}(S_j^{(n)}).
\label{eq:path-uniform}
\end{equation}
This gives $q_T(t)=\dot s(t)$ and uniform progress coverage: early states carry
stronger targets rather than being sampled more often.
The network takes $(\mathbf Z_j^{(n)},T_j^{(n)})$ and predicts
$\mathbf U_j^{(n)}$, where
\begin{equation}
\mathbf Z_j^{(n)}=\bm\gamma_{\mathbf z_j^{(n)},\mathbf r^{(n)}}(S_j^{(n)}),
\qquad
\mathbf U_j^{(n)}=\dot s(T_j^{(n)})
\partial_s\bm\gamma_{\mathbf z_j^{(n)},\mathbf r^{(n)}}(S_j^{(n)}).
\label{eq:conditional-pair}
\end{equation}

\subsection{Objective, Updates, and Inference}
\label{subsec:empirical-risk}
A shared endpoint provides only implicit alignment.
The no-alignment diagnostic in Appendix~\ref{app:alignment-optimization} motivates an
explicit penalty on the starting representations:
\begin{align}
\widehat{\mathcal R}_{\mathrm{FM},B}
&=\frac1{2Bd^\star}\sum_{n=1}^B\sum_{j=1}^2
\|\bm v_\phi(\mathbf Z_j^{(n)},T_j^{(n)})-\mathbf U_j^{(n)}\|_2^2,
\label{eq:fm}\\
\widehat{\mathcal R}_{\mathrm{align},B}
&=\frac1{Bd^\star}\sum_{n=1}^B
\|\mathbf z_1^{(n)}-\mathbf z_2^{(n)}\|_2^2,
\label{eq:alignment}\\
\widehat{\mathcal R}_B
&=\widehat{\mathcal R}_{\mathrm{FM},B}
+\lambda\widehat{\mathcal R}_{\mathrm{align},B},\qquad\lambda\ge0.
\label{eq:objective}
\end{align}
The empirical risk averages this objective over random minibatches and
augmentation, perturbation, and progress randomness:
$\widehat{\mathcal R}_N=\mathbb E_{\mathcal I,\xi}[\widehat{\mathcal R}_B]$.
Here $\mathcal I$ is a uniform size-$B$ subset of $[N]$ and $\xi$ collects
the other sampling variables.
Since assignment couples the samples, this is a batch-level expectation;
\eqref{eq:empirical-risk} gives the explicit sampling law.

\input{figures/fbdm_algorithm}

Algorithm~\ref{alg:fbdm} uses detached assignments.
The reported ResNet-18 runs attenuate the FM gradient entering the encoder and
projector, allowing the velocity field to track their evolving geometry.
Appendix~\ref{app:grad-scale} specifies these backward-only updates; they are
not ordinary gradients of the unmodified scalar risk.
After pretraining, only the $512$-dimensional backbone feature
$f_\theta(\mathbf x)$ is used for linear and $5$-NN evaluation.

%% file: figures/dm_etf_reference.tex
\begin{figure}[!t]
\centering
\includegraphics[width=\linewidth]{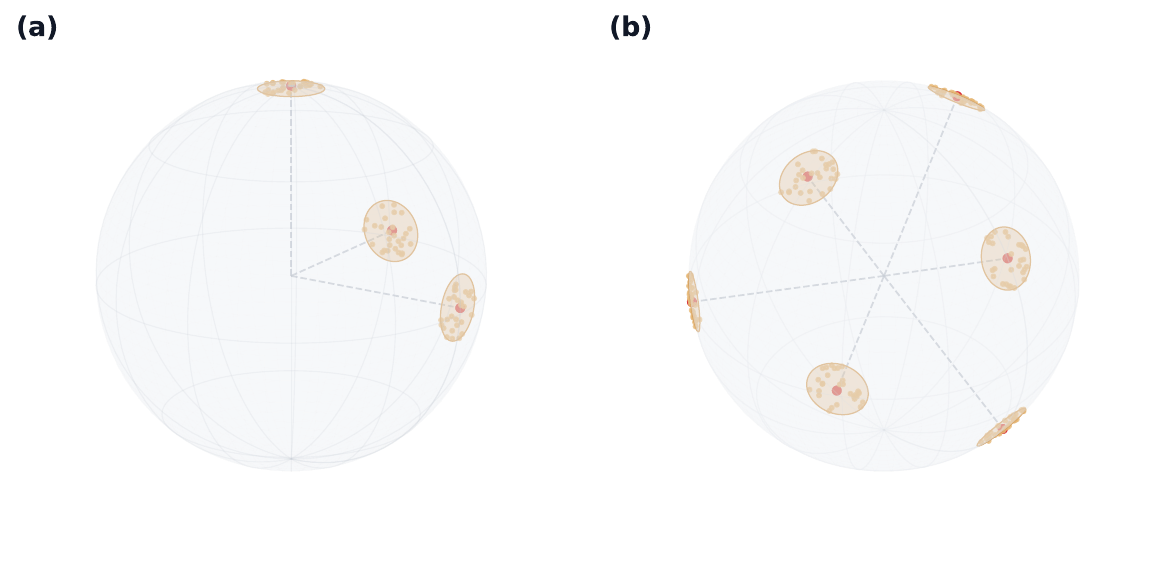}
\caption{\textbf{From a coordinate reference to an overcomplete spherical
reference.} (a) \citet{jiao2026distribution} assigns at most one template to each orthogonal coordinate
direction, which requires $K'\le d^\star$. (b) Our construction permits $K'>d^\star$ and
places more reference components on the unit sphere in the $d^\star$-dimensional
flow space. The pale
point clouds depict local samples
$(\mathbf c_k+\varepsilon\bm\eta)/
\|\mathbf c_k+\varepsilon\bm\eta\|_2$ around each center, with
$\bm\eta\sim\operatorname{Unif}(\Sphere^{d^\star-1})$; they are not
additional centers. The illustration is
schematic, while Proposition~\ref{prop:welch} gives the precise ETF/Welch
geometry.}
\label{fig:dm-etf-reference}
\end{figure}

%% file: figures/fbdm_algorithm.tex
\begin{algorithm}[t]
\caption{Core FBDM pretraining loop.}
\label{alg:fbdm}
\begin{algorithmic}[1]
\Require $\mathcal D_N$, $\mathcal A$, $\mathcal C$, $\varepsilon$, $\lambda$,
$m_{\mathcal D_N}(e)$, $s(t)$; networks $f_\theta$, $h_\psi$, $\bm v_\phi$
\For{each epoch $e$ and minibatch $\mathcal B\subset\mathcal D_N$}
  \State Draw two views and compute $\mathbf z_j^{(n)}$ by
  \eqref{eq:representation}
  \State Form $\mathbf q^{(n)}$ and solve the capacitated assignment
  \eqref{eq:hungarian}
  \State Draw one endpoint $\mathbf r^{(n)}$ by \eqref{eq:target} and share it
  across both views
  \State Draw $S_j^{(n)}$, obtain $T_j^{(n)}$, and form
  $(\mathbf Z_j^{(n)},\mathbf U_j^{(n)})$ by
  \eqref{eq:path-uniform}--\eqref{eq:conditional-pair}
  \State Evaluate \eqref{eq:objective} and update $(\theta,\psi,\phi)$
\EndFor
\State \textbf{return} $f_\theta$
\end{algorithmic}
\end{algorithm}

%% file: sections/experiments.tex
\section{Experiments}
\label{sec:experiments}

\subsection{Accuracy and Representation Quality}

We evaluate CIFAR-10, CIFAR-100, STL-10, and Tiny ImageNet with frozen
ResNet-18 features using linear classification and cosine $5$-NN
(Table~\ref{tab:ssl-comparison}), and report ImageNet-1K separately under
ResNet-50 pretraining for 100 epochs at global batch size 512
(Table~\ref{tab:imagenet-comparison}). Full configurations are given in
Appendix~\ref{app:configs}.

\begin{table}[ht]
\centering
\caption{Top-1 accuracy (\%) on CIFAR-10, CIFAR-100, STL-10, and Tiny ImageNet
with ResNet-18. \emph{Linear} evaluates the frozen representation; $5$-NN uses
$k=5$. Non-FBDM results except DM follow the published benchmark
\citep{weng2022investigation}; DM results are taken from its source
paper~\citep{jiao2026distribution}.}
\label{tab:ssl-comparison}
\resizebox{\textwidth}{!}{%
\begin{tabular}{lcccccccc}
\toprule
& \multicolumn{2}{c}{CIFAR-10}
& \multicolumn{2}{c}{CIFAR-100}
& \multicolumn{2}{c}{STL-10}
& \multicolumn{2}{c}{Tiny ImageNet}\\
\cmidrule(lr){2-3}
\cmidrule(lr){4-5}
\cmidrule(lr){6-7}
\cmidrule(lr){8-9}
Method
& Linear & $5$-NN
& Linear & $5$-NN
& Linear & $5$-NN
& Linear & $5$-NN\\
\midrule
\textbf{FBDM}
& $92.37$ & $89.68$
& $66.59$ & $56.74$
& $89.79$ & $86.23$
& $48.55$ & $33.01$\\
\midrule
DM~\citep{jiao2026distribution}
& $92.11$ & $89.17$
& $67.71$ & $56.18$
& $90.22$ & $85.51$
& \multicolumn{2}{c}{--}\\
Barlow Twins~\citep{zbontar2021barlow}
& $88.51$ & $86.53$
& $65.78$ & $55.76$
& $88.36$ & $83.71$
& $47.44$ & $32.65$\\
SimCLR~\citep{chen2020simple}
& $91.80$ & $88.42$
& $66.83$ & $56.56$
& $90.51$ & $85.68$
& $48.84$ & $32.86$\\
BYOL~\citep{grill2020bootstrap}
& $91.73$ & $89.45$
& $66.60$ & $56.82$
& $91.99$ & $88.64$
& $51.00$ & $36.24$\\
SimSiam~\citep{chen2021exploring}
& $90.51$ & $86.82$
& $66.04$ & $55.79$
& $88.91$ & $84.84$
& $48.29$ & $34.21$\\
Shuffled-DBN~\citep{hua2021feature}
& $90.45$ & $88.15$
& $66.07$ & $56.97$
& $89.20$ & $84.51$
& $48.60$ & $32.14$\\
Zero-ICL~\citep{zhang2022zerocl}
& $88.12$ & $86.64$
& $61.91$ & $53.47$
& $86.35$ & $82.51$
& $46.25$ & $32.74$\\
VICReg~\citep{bardes2022vicreg}
& $90.32$ & $88.41$
& $66.45$ & $56.78$
& $90.78$ & $85.72$
& $48.71$ & $33.35$\\
W-MSE~\citep{ermolov2021whitening}
& $91.55$ & $89.69$
& $66.10$ & $56.69$
& $90.36$ & $87.10$
& $48.20$ & $34.16$\\
CW-RGP~\citep{weng2022investigation}
& $91.92$ & $89.54$
& $67.51$ & $57.35$
& $90.76$ & $87.34$
& $49.23$ & $34.04$\\
\bottomrule
\end{tabular}}
\end{table}

\begin{table}[H]
\centering
\captionsetup{font=small,skip=3pt}
\caption{ImageNet-1K linear-evaluation top-1 accuracy (\%) with ResNet-50
after 100 epochs of pretraining at global batch size 512. The SimCLR value is the reproduction
reported by \href{https://github.com/AndrewAtanov/simclr-pytorch}
{\texttt{AndrewAtanov/simclr-pytorch}}.}
\label{tab:imagenet-comparison}
\scriptsize
\setlength{\tabcolsep}{10pt}
\renewcommand{\arraystretch}{0.92}
\begin{tabular}{lc}
\toprule
Method & Linear\\
\midrule
\textbf{FBDM} & $61.41$\\
SimCLR~\citep{chen2020simple} & $60.14$\\
\bottomrule
\end{tabular}
\vspace{-0.35em}
\end{table}

Across the benchmark suite from CIFAR to ImageNet, FBDM achieves competitive
representation quality without an adversarial critic. Direct regression to
the same assigned ImageNet-1K targets does not reproduce its accuracy under
the matched protocol (Appendix~\ref{app:direct-regression},
Table~\ref{tab:direct-regression-imagenet}).

\subsection{Throughput and GPU Memory}

Table~\ref{tab:systems} compares native DM and FBDM under a matched single-GPU
protocol on CIFAR-10, CIFAR-100, and STL-10; measurement details are given in
Appendix~\ref{app:systems-protocol}.

\begin{table}[H]
\centering
\captionsetup{font=small,skip=3pt}
\caption{Matched DM--FBDM training cost on one Tesla V100-SXM2-16GB. Memory is
peak CUDA allocated/reserved GiB. V100-hours are projected from the median
seconds per epoch for 1000 epochs.}
\label{tab:systems}
\scriptsize
\setlength{\tabcolsep}{4.5pt}
\renewcommand{\arraystretch}{0.94}
\begin{tabular}{llcccc}
\toprule
Dataset & Method & Peak memory (GiB) & Seconds/epoch & V100-h/1000 ep. & Speedup vs. DM\\
\midrule
\multirow{2}{*}{CIFAR-10}  & DM   & $4.78/5.14$ & $75.61$  & $21.00$ & \multirow{2}{*}{$\mathbf{1.83\times}$}\\
                            & FBDM & $5.28/5.82$ & $41.35$  & $11.49$ & \\
\multirow{2}{*}{CIFAR-100} & DM   & $4.78/5.14$ & $76.37$  & $21.21$ & \multirow{2}{*}{$\mathbf{1.69\times}$}\\
                            & FBDM & $5.28/5.82$ & $45.27$  & $12.58$ & \\
\multirow{2}{*}{STL-10}    & DM   & $7.06/8.17$ & $173.35$ & $48.15$ & \multirow{2}{*}{$\mathbf{1.48\times}$}\\
                            & FBDM & $7.56/8.94$ & $117.39$ & $32.61$ & \\
\bottomrule
\end{tabular}
\vspace{-0.35em}
\end{table}

FBDM is 1.83$\times$, 1.69$\times$, and 1.48$\times$ faster per epoch on
CIFAR-10, CIFAR-100, and STL-10, saving 9.52, 8.64, and 15.55 projected
V100-hours per 1000 epochs. It uses about 0.50~GiB more allocated and
0.67--0.77~GiB more reserved memory; absolute times remain hardware dependent.

%% file: sections/theory.tex
\section{Theoretical Analysis}
\label{sec:theory}
We analyze FBDM at the population level to connect its training objective to
downstream classification. Pretraining and classification use the same distribution
$(\mathbf X,Y)\sim P$, with labels hidden during pretraining, $Y\in[K]$,
and $p_k=P(Y=k)>0$; write $p_{\min}=\min_k p_k$.
We treat the normalized map in \eqref{eq:representation} as a single encoder
$f:\mathcal X\to\Sphere^{d^\star-1}$, where $\mathcal X\subseteq\mathbb R^d$
contains images and their views.

\subsection{Questions Motivating the Analysis}
The velocity field is used only during pretraining and discarded downstream.
This raises three linked questions:
\begin{enumerate}
\item Why can a velocity field used only during pretraining improve the encoder
representation after the field is discarded?
\item Under what conditions does label-free matching to separated reference
components induce semantic class separation rather than an arbitrary partition?
\item How do the resulting between-class separation and alignment-controlled
within-class variation yield a small downstream misclassification rate?
\end{enumerate}
The proof, detailed in Appendix~\ref{app:population-proof}, follows four steps.
First, flow regularity converts the label-free FM risk into a bound on the
terminal $W_1$ discrepancy (Lemma~\ref{lem:fm-endpoint}). Second, transport
identifiability turns this marginal bound into class--component leakage control,
placing terminal class means near distinct reference-component means
(Lemma~\ref{lem:leakage-means} and \eqref{eq:terminal-class-separation}). Third,
ODE stability transfers this terminal separation back to the encoder space
(\eqref{eq:class-mean-separation}). Finally, augmentation and alignment control
within-class variation (Lemmas~\ref{lem:alignment-stability} and
\ref{lem:augmentation-core-distances}); together with the recovered separation,
this yields the downstream classification bound.

\subsection{Population Objective and Assumptions}
\label{subsec:population-setup}
The augmentation kernel $\mathcal A$ is fixed. Given $\mathbf X$, let
$\widetilde{\mathbf X}_1$ and $\widetilde{\mathbf X}_2$ be conditionally
independent draws from $\mathcal A(\,\cdot\mid\mathbf X)$. Denote the law of
one view by $P_{\mathcal A}$ and their joint law by
$\Pi_{\mathcal A}^+=\operatorname{Law}(\widetilde{\mathbf X}_1,
\widetilde{\mathbf X}_2)$. The reference is also fixed: we consider $K'=K$ and
$\nu_\varepsilon=\sum_k w_kQ_k^\varepsilon$, with $w_k>0$ and $\sum_k w_k=1$.
No equal class priors or equality $p_k=w_k$ is required.
Set $\mathcal S_k^\varepsilon=\operatorname{supp}(Q_k^\varepsilon)$ and
\begin{equation}
\Delta_{\mathcal C}=\min_{k\ne\ell}\|\mathbf c_k-\mathbf c_\ell\|_2,
\qquad
r_\varepsilon=\max_k\sup_{\mathbf r\in\mathcal S_k^\varepsilon}
\|\mathbf r-\mathbf c_k\|_2,
\qquad \Delta_{\mathcal C}>2r_\varepsilon.
\label{eq:theory-reference-geometry}
\end{equation}
Thus the reference supports are disjoint. Recall the representation--endpoint
coupling $\Pi_f=\operatorname{Law}(\mathbf Z_0,\mathbf R)$ from
Section~\ref{subsec:flow}. For the population analysis, its source marginal is
the encoded augmentation law
$\operatorname{Law}(\mathbf Z_0)=f_\#P_{\mathcal A}$, and its endpoint marginal
is $\operatorname{Law}(\mathbf R)=\nu_\varepsilon$. The latent label $Y$
remains on the underlying population space but is never used during pretraining.
With paths \eqref{eq:clocked-law}, targets \eqref{eq:clocked-target}, and
independent $T\sim q_T$, define
\begin{align}
\mathcal R_{\mathrm{FM}}
&=\frac1{d^\star}\mathbb E_{\Pi_f,T}
[\|\bm v_\phi(\mathbf Z_T,T)-\mathbf U_T\|_2^2],
\label{eq:population-fm-risk}\\
\mathcal R_{\mathrm{align}}
&=\frac1{d^\star}\mathbb E_{\Pi_{\mathcal A}^+}
[\|f(\widetilde{\mathbf X}_1)-f(\widetilde{\mathbf X}_2)\|_2^2],
\label{eq:population-alignment-risk}\\
\mathcal R_{\mathrm{FBDM}}(f,\phi)
&=\mathcal R_{\mathrm{FM}}(f,\phi)+\lambda\mathcal R_{\mathrm{align}}(f),
\qquad \lambda>0.
\label{eq:population-risk}
\end{align}
See \eqref{eq:fixed-positive-law} for $\Pi_{\mathcal A}^+$ and
Table~\ref{tab:theory-notation} for the analysis notation.

\begin{assumption}[$(\sigma,\delta)$-augmentation and encoder regularity]
\label{ass:augmentation-quality}
The kernel $\mathcal A$ samples uniformly from a fixed finite family
$\mathcal T=\{A_1,\ldots,A_{N_{\mathrm{aug}}}\}$ of measurable maps
$\mathcal X\to\mathcal X$, including the identity. For
$P_k=\operatorname{Law}(\mathbf X\mid Y=k)$, each class has a measurable core
$C_k^\circ\subseteq\operatorname{supp}(P_k)$ such that
\begin{equation}
P_k(C_k^\circ)\ge\sigma,\qquad
\sup_{\mathbf x,\mathbf x'\in C_k^\circ}
\min_{A,A'\in\mathcal T}\|A(\mathbf x)-A'(\mathbf x')\|_2\le\delta,
\label{eq:sigma-delta-augmentation}
\end{equation}
where $\sigma\in(0,1]$, $\delta\ge0$, and $f$ is $L_f$-Lipschitz.
\end{assumption}
Here $\sigma$ measures class-core coverage, while $\delta$ measures how closely
suitable views connect same-class images
\citep{huang2023towards,duan2025advssl,jiao2026distribution}.

\begin{assumption}[Uniform flow regularity and time coverage]
\label{ass:regular-flow}
The field $\bm v_\phi:\mathbb R^{d^\star}\times[0,1]\to\mathbb R^{d^\star}$
is measurable in $t$, with
\begin{equation}
\|\bm v_\phi(\mathbf z,t)-\bm v_\phi(\mathbf z',t)\|_2
\le\ell_\phi(t)\|\mathbf z-\mathbf z'\|_2,\qquad
\int_0^1\ell_\phi(t)\,\mathrm dt\le L_v<\infty.
\label{eq:flow-lipschitz}
\end{equation}
Here $\ell_\phi\ge0$ and $L_v$ is fixed over the admissible fields.
Also $\int_0^1\|\bm v_\phi(\mathbf0,t)\|_2\,\mathrm dt<\infty$.
Conditional paths are absolutely continuous, with
$\mathbb E\int_0^1\|\mathbf U_t\|_2^2\,\mathrm dt<\infty$ and
$\mathbb E\|\mathbf U_T\|_2^2<\infty$.
The fixed time density satisfies $q_T\ge q_{\min}>0$ almost everywhere.
\end{assumption}
For \eqref{eq:path-uniform}, $q_T(t)=\dot s(t)$. The schedules used in our
experiments satisfy $q_{\min}>0$; their explicit forms and parameters are
listed in Appendix~\ref{app:configs}, Table~\ref{tab:time-schedules}.

Let $\Phi_t$ solve
\begin{equation}
\frac{\mathrm d\Phi_t(\mathbf z)}{\mathrm dt}
=\bm v_\phi(\Phi_t(\mathbf z),t),\quad
\Phi_0(\mathbf z)=\mathbf z,\quad
\widehat{\mathbf Z}_t:=\Phi_t(\mathbf Z_0),\quad
\widehat p_t:=\operatorname{Law}(\widehat{\mathbf Z}_t),\quad t\in[0,1].
\label{eq:learned-ode}
\end{equation}
This learned terminal law need not equal the analytic endpoint law
$\operatorname{Law}(\mathbf R)=\nu_\varepsilon$. For probability laws $\mu$
and $\nu$ on $\mathbb R^{d^\star}$ with finite first moments, let
$\Pi(\mu,\nu)$ denote their set of couplings and define
\begin{equation}
W_1(\mu,\nu):=
\inf_{\gamma\in\Pi(\mu,\nu)}
\int\|\mathbf z-\mathbf r\|_2\,
\gamma(\mathrm d\mathbf z,\mathrm d\mathbf r).
\label{eq:theory-wasserstein}
\end{equation}
Although $W_1$ does not appear in the training objective,
Lemma~\ref{lem:fm-endpoint} shows that the FM risk controls this terminal
distributional mismatch:
\begin{equation}
W_1(\widehat p_1,\nu_\varepsilon)
\le e^{L_v}
\sqrt{\frac{d^\star}{q_{\min}}\mathcal R_{\mathrm{FM}}}.
\label{eq:main-fm-wasserstein}
\end{equation}

\paragraph{Semantic identifiability.}
The label-free bound \eqref{eq:main-fm-wasserstein} controls the overall
marginal transport but cannot by itself identify classes.
Following \citet{jiao2026distribution}, we therefore impose a quantitative
condition on the optimal terminal--reference coupling.
Let $\Gamma^\star\in\Pi(\widehat p_1,\nu_\varepsilon)$ attain the infimum in
\eqref{eq:theory-wasserstein}; thus it is the joint law of
$(\widehat{\mathbf Z}_1,\mathbf R^\star)$ only. Labels enter the analysis
afterward. Recall $\mathcal S_j^\varepsilon=\operatorname{supp}(Q_j^\varepsilon)$
and define
\begin{equation}
\pi_k(\widehat{\mathbf z}):=
P(Y=k\mid\widehat{\mathbf Z}_1=\widehat{\mathbf z}),
\qquad
q_{kj}^\star:=\int
\pi_k(\widehat{\mathbf z})
\mathbf 1\{\mathbf r\in\mathcal S_j^\varepsilon\}\,
\Gamma^\star(\mathrm d\widehat{\mathbf z},\mathrm d\mathbf r).
\label{eq:label-component-mass}
\end{equation}
Let $S_K$ denote the set of permutations of $[K]$ and define
\begin{equation}
\tau^\star\in\arg\max_{\tau\in S_K}
\sum_{k=1}^K q_{k,\tau(k)}^\star.
\label{eq:optimal-class-component-permutation}
\end{equation}
Then
\begin{equation}
q_k^\star=q_{k,\tau^\star(k)}^\star,\qquad
\operatorname{Leak}(\Gamma^\star)=
\max_k\{p_k+w_{\tau^\star(k)}-2q_k^\star\}.
\label{eq:population-leakage}
\end{equation}
Each summand counts outgoing and incoming mass for a class--component pair.
This analytical coupling is distinct from the training-induced
representation--endpoint coupling $\Pi_f$.

\begin{assumption}[Transport identifiability]
\label{ass:transport-identifiability}
There is such an optimal coupling with
\begin{equation}
\operatorname{Leak}(\Gamma^\star)\le
C_{\mathrm{id}}W_1(\widehat p_1,\nu_\varepsilon),
\label{eq:leakage-control}
\end{equation}
where $C_{\mathrm{id}}>0$ is a constant.
\end{assumption}
Appendix~\ref{app:leakage-sufficient} gives a sufficient soft geometric
condition;
neither labels nor this analytical condition are used by the training algorithm.
Substituting the flow-matching guarantee
\eqref{eq:main-fm-wasserstein} into the identifiability condition
\eqref{eq:leakage-control} yields
\begin{equation}
\operatorname{Leak}(\Gamma^\star)
\le C_{\mathrm{id}}e^{L_v}
\sqrt{\frac{d^\star}{q_{\min}}\mathcal R_{\mathrm{FM}}}.
\label{eq:fm-controls-leakage}
\end{equation}
The remaining argument turns this small leakage into separated terminal class
means, transfers that separation back through the flow, and compares it with
the encoder's within-class radius.

\subsection{A Population Misclassification Guarantee}
\label{subsec:classification-guarantee}
For each $k$, let $\mathbf X\sim P_k$ and
$A\sim\operatorname{Unif}(\mathcal T)$ be independent. Define the augmented
class means and nearest-centroid classifier by
\begin{equation}
\mathbf m_k=\mathbb E[f(A(\mathbf X))],
\qquad
G_f(\mathbf x)=\arg\min_{k\in[K]}\|f(\mathbf x)-\mathbf m_k\|_2^2,
\label{eq:population-classifier}
\end{equation}
with fixed tie-breaking, and $\operatorname{Err}(G_f)=P\{G_f(\mathbf X)\ne Y\}$.
Its scores $2\mathbf m_k^\top f(\mathbf x)-\|\mathbf m_k\|_2^2$ are affine in
the initial representation; classification does not run the ODE.

\begin{theorem}[Population FBDM loss controls classification error]
\label{thm:population-error}
Suppose Assumptions~\ref{ass:augmentation-quality}--\ref{ass:transport-identifiability}
hold. Fix a view-stability tolerance $\varepsilon_{\mathrm{al}}>0$ such that
\begin{equation}
e^{-L_v}(\Delta_{\mathcal C}-2r_\varepsilon)
>4L_f\delta+8\varepsilon_{\mathrm{al}}+12(1-\sigma).
\label{eq:positive-augmentation-margin}
\end{equation}
There is an explicit threshold $r_0>0$, depending only on these fixed
parameters and specified in \eqref{eq:small-risk-constants}, such that
$\mathcal R_{\mathrm{FBDM}}(f,\phi)\le r_0$ implies
\begin{equation}
\operatorname{Err}(G_f)\le(1-\sigma)+
C_{\mathrm{err}}\sqrt{\mathcal R_{\mathrm{FBDM}}(f,\phi)},\qquad
C_{\mathrm{err}}=\frac{N_{\mathrm{aug}}}{\varepsilon_{\mathrm{al}}}
\sqrt{\frac{d^\star}{\lambda}}.
\label{eq:population-misclassification}
\end{equation}
\end{theorem}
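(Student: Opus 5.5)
The proof follows the four-step roadmap announced at the start of Section~\ref{sec:theory}. Throughout, write $\mathcal R=\mathcal R_{\mathrm{FBDM}}(f,\phi)$. Since $\lambda>0$ and both terms are nonnegative, we have $\mathcal R_{\mathrm{FM}}\le\mathcal R$ and $\mathcal R_{\mathrm{align}}\le\mathcal R/\lambda$.

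\emph{Step 1: separated terminal class means.} By Lemma~\ref{lem:fm-endpoint}, i.e.\ \eqref{eq:main-fm-wasserstein}, combined with Assumption~\ref{ass:transport-identifiability} (giving \eqref{eq:fm-controls-leakage}), we get $\operatorname{Leak}(\Gamma^\star)\le C_{\mathrm{id}}e^{L_v}\sqrt{d^\star\mathcal R/q_{\min}}=:\beta$. Now consider the terminal class means $\widehat{\mathbf m}_k=\mathbb E[\widehat{\mathbf Z}_1\mid Y=k]$. The mass of class $k$ coupled under $\Gamma^\star$ to $\mathcal S^\varepsilon_{\tau^\star(k)}$ is at least $p_k-\beta$, and every such reference point lies within $r_\varepsilon$ of $\mathbf c_{\tau^\star(k)}$. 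All points live in a bounded set, since the sphere is transported by a flow with Lipschitz bound $e^{L_v}$. Hence Lemma~\ref{lem:leakage-means} gives
\[
\|\widehat{\mathbf m}_k-\mathbf c_{\tau^\star(k)}\|_2\le r_\varepsilon+O\!\left(\frac{W_1+\beta}{p_{\min}}\right),
\]
and by the triangle inequality
\[
\|\widehat{\mathbf m}_k-\widehat{\mathbf m}_\ell\|_2\ge\Delta_{\mathcal C}-2r_\varepsilon-O(\sqrt{\mathcal R}),
\]
which is \eqref{eq:terminal-class-separation}.

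\emph{Step 2: pulling separation back through the flow.} By Grönwall's inequality and \eqref{eq:flow-lipschitz}, $\Phi_1$ is $e^{L_v}$-Lipschitz. There is one subtlety: $\widehat{\mathbf m}_k$ is the mean of $\Phi_1(\mathbf Z_0)$, not $\Phi_1$ applied to the mean. I would therefore work with a representative point. Fix any $\mathbf z$ in the class-$k$ core image. Then $\|\Phi_1(\mathbf z)-\widehat{\mathbf m}_k\|\le e^{L_v}\,\mathbb E\|\mathbf z-\mathbf Z_0\|$ over class $k$, and this within-class spread is controlled in Step~3. Combining the two bounds yields, in encoder space,
\[
\|\mathbf m_k-\mathbf m_\ell\|_2\ge e^{-L_v}(\Delta_{\mathcal C}-2r_\varepsilon)-(\text{within-class terms})-O(\sqrt{\mathcal R}),
\]
which is \eqref{eq:class-mean-separation}. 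The within-class terms are where the constants $4L_f\delta$ and $12(1-\sigma)$ in \eqref{eq:positive-augmentation-margin} should arise. The coefficient $12(1-\sigma)$ reflects that off-core mass can sit anywhere on the unit sphere, at distance at most $2$, and that such mass enters several triangle inequalities.

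\emph{Step 3: within-class concentration.} For $\mathbf x,\mathbf x'\in C_k^\circ$, Assumption~\ref{ass:augmentation-quality} gives augmentations $A,A'$ with $\|f(A\mathbf x)-f(A'\mathbf x')\|\le L_f\delta$. Lemma~\ref{lem:alignment-stability} bounds $\|f(\mathbf x)-f(A\mathbf x)\|$, using that the identity lies in $\mathcal T$, through the alignment risk. Markov's inequality then shows that the set of $\mathbf x$ for which some pair $(\mathrm{id},A)$ has view discrepancy larger than $\varepsilon_{\mathrm{al}}$ has probability at most $N_{\mathrm{aug}}\,\mathbb E\|f(\widetilde{\mathbf X}_1)-f(\widetilde{\mathbf X}_2)\|/\varepsilon_{\mathrm{al}}$. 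Here the uniform sampling over $N_{\mathrm{aug}}$ maps costs one factor of $N_{\mathrm{aug}}$, and Cauchy--Schwarz converts the expectation into $\sqrt{d^\star\mathcal R_{\mathrm{align}}}\le\sqrt{d^\star\mathcal R/\lambda}$. On the good core set, Lemma~\ref{lem:augmentation-core-distances} gives $\|f(\mathbf x)-\mathbf m_k\|\le L_f\delta+2\varepsilon_{\mathrm{al}}+2(1-\sigma)$ up to constants.

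\emph{Step 4: conclusion.} A point $\mathbf x$ of class $k$ is correctly classified whenever $\|f(\mathbf x)-\mathbf m_k\|<\tfrac12\min_{\ell\ne k}\|\mathbf m_k-\mathbf m_\ell\|$. Condition \eqref{eq:positive-augmentation-margin} guarantees that this holds once the $O(\sqrt{\mathcal R})$ terms from Steps~1--2 are below the slack, and this defines $r_0$ in \eqref{eq:small-risk-constants}. Errors can then only occur off the core, with probability at most $1-\sigma$, or on the bad set, with probability at most $C_{\mathrm{err}}\sqrt{\mathcal R}$. Summing over classes weighted by $p_k$ gives \eqref{eq:population-misclassification}.

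The main obstacle I expect is Step~2. ODE stability transfers separation from terminal \emph{points} back to initial points, not from means to means, and classification uses $\mathbf m_k$ of augmented views rather than of $\mathbf Z_0$. Tracking how non-core mass and the bad set contaminate both $\widehat{\mathbf m}_k$ and $\mathbf m_k$, so that exactly the constants $8\varepsilon_{\mathrm{al}}$ and $12(1-\sigma)$ emerge, is the delicate bookkeeping. My first approach would be to compare the core-restricted means rather than the full means, and to pay $2(1-\sigma)$ on the sphere each time I pass from a restricted mean to a full mean.
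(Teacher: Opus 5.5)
Your architecture matches the paper's. Lemma~\ref{lem:fm-endpoint}, Assumption~\ref{ass:transport-identifiability} and Lemma~\ref{lem:leakage-means} give terminal separation $\Delta_{\mathcal C}-2r_\varepsilon-2D$. A Jensen--Lipschitz comparison pulls it back to encoder space, and the alignment/Markov bound plus the stable-core radius finish the nearest-centroid argument. The gap is that the theorem is quantitative: the margin hypothesis \eqref{eq:positive-augmentation-margin} and $r_0=(g_{\mathrm{aug}}/2C_{\mathrm{sep}})^2$ are part of the statement, and your Step~2 as written cannot deliver them.

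Suppose you pull back through a representative core point $\mathbf z_k=f(\mathbf x_k)$ with $\mathbf x_k\in C_k^\circ\cap\mathcal V$. You get $\|\Phi_1(\mathbf z_k)-\widehat{\mathbf m}_k\|_2\le e^{L_v}(h_{\mathrm{aug}}+2b_{\mathrm{aug}})$. Returning to the centroids then costs a further $\|\mathbf z_k-\mathbf m_k\|_2\le h_{\mathrm{aug}}+2b_{\mathrm{aug}}$ per class, so only $\Delta_f\ge\Delta_0-2e^{-L_v}D-4h_{\mathrm{aug}}-8b_{\mathrm{aug}}$ follows. With your Step~4 criterion $\|f(\mathbf x)-\mathbf m_k\|_2<\Delta_f/2$, you would need $\Delta_0>6L_f\delta+12\varepsilon_{\mathrm{al}}+12(1-\sigma)$ even before the loss-dependent terms. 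That is strictly more than the hypothesis grants.

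A second problem of the same kind: an unstable set that controls only pairs $(\mathrm{id},A)$, as in your Step~3, makes the bridge \eqref{eq:core-semantic-bridge} cost $4\varepsilon_{\mathrm{al}}$ instead of $2\varepsilon_{\mathrm{al}}$. You need the all-pairs set $\mathcal V$ of \eqref{eq:stable-view-set}.

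The missing step is to apply your own inequality $\|\Phi_1(\mathbf z)-\widehat{\mathbf m}_k\|_2\le e^{L_v}\,\mathbb E[\|\mathbf z-\mathbf Z_0\|_2\mid Y=k]$ at $\mathbf z=\mathbf m_k$. This is legitimate for two reasons. First, $\Phi_1$ is defined on all of $\mathbb R^{d^\star}$, so it does not matter that $\mathbf m_k$ lies inside the ball. Second, $\mathbf m_k=\mathbb E[\mathbf Z_0\mid Y=k]$ exactly, because $\mathbf Z_0=f(\widetilde{\mathbf X})$ \emph{is} the encoded augmented view. So your worry that the centroids are means ``of augmented views rather than of $\mathbf Z_0$'' is unfounded, and the ``points versus means'' obstacle disappears.

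The class-spread bound \eqref{eq:augmented-class-spread} then gives $\|\Phi_1(\mathbf m_k)-\widehat{\mathbf m}_k\|_2\le e^{L_v}(h_{\mathrm{aug}}+4b_{\mathrm{aug}})$, hence $\Delta_f\ge\Delta_0-2e^{-L_v}D-2h_{\mathrm{aug}}-8b_{\mathrm{aug}}$. Subtracting twice the stable-core radius $h_{\mathrm{aug}}+2b_{\mathrm{aug}}$ leaves $g_{\mathrm{aug}}-2e^{-L_v}D-12\,\mathcal U/p_{\min}$. The bounds on $D$ and $\mathcal U$ turn this into $g_{\mathrm{aug}}-C_{\mathrm{sep}}\sqrt{\mathcal R_{\mathrm{FBDM}}}\ge g_{\mathrm{aug}}/2$.

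Note that the off-core charge is $b_{\mathrm{aug}}=1-\sigma+\mathcal U/p_{\min}$, not $1-\sigma$. The unstable mass is why $12C_{\mathrm{err}}/p_{\min}$ appears in $C_{\mathrm{sep}}$.

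Your fallback with core-restricted means also works and reproduces exactly these constants. It requires charging each restricted-to-full passage $2b_{\mathrm{aug}}$ in encoder space and $2e^{L_v}b_{\mathrm{aug}}$ in terminal space, since $\Phi_1$ maps the sphere into a set of diameter at most $2e^{L_v}$.
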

The residual $1-\sigma$ accounts for images outside class cores.
The proof in Appendix~\ref{app:population-proof} uses FM regression to control
terminal Wasserstein error, identifiability to recover terminal class
separation, and ODE stability to transfer it to the initial means.
Alignment and augmentation quality bound the within-class radius.
Thus smaller full loss tightens the classification bound in the stated regime.
This result concerns the normalized map $f$ with population centroids;
it is not a separate guarantee for the discarded-projector backbone probes.
The small-loss condition is sufficient, not an optimization-convergence claim.

%% file: sections/limitations_conclusion.tex
\section{Conclusion}

FBDM combines explicit target geometry with non-adversarial velocity regression
on spherical paths. It separates reference granularity $K'$ from flow-space
dimension $d^\star$, discards its transport components downstream, and is
substantially faster than critic-based DM under matched measurement. Results
across benchmarks ranging from CIFAR to ImageNet support explicit geometric
design as a practical principle for self-supervised representation learning.
Under the stated population assumptions and small-loss condition, our analysis
also bounds downstream misclassification from the initial representation in
terms of the full FBDM loss and augmentation quality. This explains how a
velocity field discarded after pretraining can help shape a useful representation.

%% file: sections/appendix.tex
\input{appendices/geometry}
\input{appendices/flow}
\input{appendices/population}

\input{appendices/implementation}
\input{appendices/ablations}
\input{appendices/related_work}

%% file: appendices/geometry.tex
\section{Reference Geometry}
\label{app:geometry}
This appendix supports the reference construction in
Section~\ref{subsec:reference}. Appendix~A.1 establishes the angular benchmark
provided by ETFs; Appendix~A.2 verifies the exact simplex ETF encoded by
$\mathbf G_\Delta$ and gives its spectral construction; Appendix~A.3 justifies
the lower-dimensional approximation.

\subsection{ETF Angular Benchmark}
\label{app:proof-welch}
We first ask how far apart $K'$ unit centers can be placed in
$\mathbb R^{d^\star}$. The following definition and proposition formalize the
ETF benchmark invoked in Section~\ref{subsec:reference}.

\begin{definition}[Equiangular tight frame]
\label{def:etf}
Let $K'\ge d^\star\ge2$. A collection
$\mathcal C=\{\mathbf c_k\}_{k=1}^{K'}\subset\Sphere^{d^\star-1}$
is an equiangular tight frame (ETF) if
\begin{equation}
\sum_{k=1}^{K'}\mathbf c_k\mathbf c_k^\top
=\frac{K'}{d^\star}\mathbf I_{d^\star},
\qquad
|\mathbf c_k^\top\mathbf c_\ell|=\alpha
\quad(k\ne\ell)
\label{eq:etf-inner-product}
\end{equation}
for a constant $\alpha\ge0$. These conditions express tightness and
equiangularity, respectively.
\end{definition}

For $K'\ge d^\star$, define
\begin{equation}
\mu(\mathcal C):=\max_{k\ne\ell}|\mathbf c_k^\top\mathbf c_\ell|,
\qquad
\theta_{\min}^{\pm}(\mathcal C):=\arccos(\mu(\mathcal C)).
\label{eq:coherence-angle}
\end{equation}

\begin{proposition}[ETF angular benchmark]
\label{prop:welch}
Let $K'\ge d^\star\ge2$. Every collection
$\mathcal C=\{\mathbf c_k\}_{k=1}^{K'}\subset\Sphere^{d^\star-1}$ satisfies
\begin{equation}
\mu(\mathcal C)\ge
\mu_{\mathrm W}:=
\sqrt{\frac{K'-d^\star}{d^\star(K'-1)}},
\qquad
\theta_{\min}^{\pm}(\mathcal C)\le\arccos(\mu_{\mathrm W}).
\label{eq:welch}
\end{equation}
Equality holds if and only if $\mathcal C$ is an ETF
in the sense of Definition~\ref{def:etf}.
\end{proposition}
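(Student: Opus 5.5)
The plan is to prove the Welch bound via the frame operator (Gram matrix) and a trace/Frobenius-norm argument. Set $\mathbf C=[\mathbf c_1,\ldots,\mathbf c_{K'}]\in\mathbb R^{d^\star\times K'}$, and let $\mathbf G=\mathbf C^\top\mathbf C$ be the Gram matrix and $\mathbf S=\mathbf C\mathbf C^\top$ the frame operator. Because the centers are unit vectors, $\operatorname{tr}\mathbf S=\operatorname{tr}\mathbf G=K'$. Moreover,
\[
\|\mathbf G\|_F^2=\operatorname{tr}(\mathbf S^2)=\sum_{k,\ell}(\mathbf c_k^\top\mathbf c_\ell)^2 .
\]

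For the lower bound on $\operatorname{tr}(\mathbf S^2)$, note that $\mathbf S$ is a $d^\star\times d^\star$ positive semidefinite matrix with eigenvalues $\lambda_1,\ldots,\lambda_{d^\star}\ge0$ summing to $K'$. Cauchy--Schwarz then gives
\[
\operatorname{tr}(\mathbf S^2)=\sum_i\lambda_i^2\ge\frac{(\sum_i\lambda_i)^2}{d^\star}=\frac{K'^2}{d^\star},
\]
with equality iff all $\lambda_i=K'/d^\star$, i.e.\ iff $\mathbf S=\frac{K'}{d^\star}\mathbf I_{d^\star}$ (tightness).

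For the upper bound, split the sum into diagonal and off-diagonal terms. The diagonal terms contribute exactly $K'$. Each of the $K'(K'-1)$ off-diagonal terms is at most $\mu(\mathcal C)^2$, with equality in the sum iff $|\mathbf c_k^\top\mathbf c_\ell|=\mu(\mathcal C)$ for all $k\ne\ell$ (equiangularity). Hence
\[
K'+K'(K'-1)\mu(\mathcal C)^2\ge\frac{K'^2}{d^\star},
\]
which rearranges to $\mu(\mathcal C)^2\ge\frac{K'-d^\star}{d^\star(K'-1)}=\mu_{\mathrm W}^2$. Taking square roots, which is valid since $K'\ge d^\star$ makes the radicand nonnegative, gives the first inequality in \eqref{eq:welch}. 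The angular statement follows because $\arccos$ is decreasing on $[0,1]$ and $\mu(\mathcal C)\in[0,1]$.

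For the equality characterization, equality $\mu(\mathcal C)=\mu_{\mathrm W}$ forces both inequalities in the chain to be tight. This gives tightness $\mathbf S=\frac{K'}{d^\star}\mathbf I$ and equiangularity with $\alpha=\mu(\mathcal C)$, so $\mathcal C$ is an ETF in the sense of Definition~\ref{def:etf}. Conversely, if $\mathcal C$ is an ETF, then tightness gives $\operatorname{tr}(\mathbf S^2)=K'^2/d^\star$ and equiangularity gives $\operatorname{tr}(\mathbf S^2)=K'+K'(K'-1)\alpha^2$. Solving yields $\alpha=\mu_{\mathrm W}$, and since $\mu(\mathcal C)=\alpha$, equality holds.

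The only point needing care is the edge case $K'=d^\star$, where $\mu_{\mathrm W}=0$. In that case the argument still goes through: equality means orthonormality, and an orthonormal basis is both tight with $\mathbf S=\mathbf I$ and equiangular with $\alpha=0$. No genuine obstacle is expected; the crux is simply recognizing that both inequalities must be tight simultaneously.
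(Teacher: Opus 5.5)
Your proof is correct and follows essentially the same route as the paper: you bound $\|\mathbf G\|_F^2$ above by $K'+K'(K'-1)\mu^2$ and below by $K'^2/d^\star$ via Cauchy--Schwarz on eigenvalues, and equality forces equiangularity and tightness. The one difference is that you apply Cauchy--Schwarz to all $d^\star$ eigenvalues of the frame operator $\mathbf S=\sum_k\mathbf c_k\mathbf c_k^\top$, zeros included. This merges the paper's two steps (Cauchy--Schwarz over the $r$ nonzero Gram eigenvalues, then $r\le d^\star$) into one, and gives tightness directly in the equality case.
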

Equivalently, every placement of $K'$ centers contains at least one pair whose
sign-invariant angle is at most $\arccos(\mu_{\mathrm W})$; when an ETF exists,
it makes this worst-pair angle as large as possible \citep{welch1974lower}.

\begin{proof}[Proof of Proposition~\ref{prop:welch}]
Assume $K'\ge d^\star\ge2$. Let
$\mathbf C\in\mathbb R^{K'\times d^\star}$ have $k$th row
$\mathbf c_k^\top$, and let
$\mathbf G=\mathbf C\mathbf C^\top$. With
$\mu=\max_{k\ne\ell}|\mathbf c_k^\top\mathbf c_\ell|$,
\begin{equation}
\|\mathbf G\|_F^2
=K'+\sum_{k\ne\ell}
|\mathbf c_k^\top\mathbf c_\ell|^2
\le K'+K'(K'-1)\mu^2.
\end{equation}
Thus, a lower bound on $\mu$ follows from a lower bound on
$\|\mathbf G\|_F^2$. Let $r=\operatorname{rank}(\mathbf G)\le d^\star$ and
let $\lambda_1(\mathbf G),\ldots,\lambda_r(\mathbf G)>0$ be its nonzero
eigenvalues. Since $\mathbf G$ is positive semidefinite and the centers have
unit norm,
$\operatorname{tr}(\mathbf G)=\sum_{i=1}^r\lambda_i(\mathbf G)=K'$.
Cauchy--Schwarz therefore gives
\begin{equation}
\begin{aligned}
\|\mathbf G\|_F^2
&=\sum_{i=1}^r\lambda_i(\mathbf G)^2
\ge\frac{1}{r}\left\{\sum_{i=1}^r\lambda_i(\mathbf G)\right\}^2\\
&=\frac{\operatorname{tr}(\mathbf G)^2}{r}
\ge\frac{\operatorname{tr}(\mathbf G)^2}{d^\star}
=\frac{K'^2}{d^\star}.
\end{aligned}
\end{equation}
The first inequality is Cauchy--Schwarz, and the second uses
$r\le d^\star$.
Substituting this bound into the preceding inequality yields
\begin{equation}
\mu^2\ge
\frac{K'-d^\star}{d^\star(K'-1)},
\end{equation}
which is \eqref{eq:welch}. Equality requires equality in both preceding
steps. Equality in the first bound means that all off-diagonal inner products
have the same magnitude, which is equiangularity. In the eigenvalue bound,
equality in Cauchy--Schwarz requires
$\lambda_1(\mathbf G)=\cdots=\lambda_r(\mathbf G)$, while equality in the
step $r\le d^\star$ requires $r=d^\star$. Together these conditions are
equivalent to
$\mathbf C^\top\mathbf C=(K'/d^\star)\mathbf I_{d^\star}$, which is tightness.
The converse follows by substituting the ETF identities into the two bounds.
Because $\arccos$ is decreasing on $[0,1]$,
$\theta_{\min}^{\pm}(\mathcal C)=\arccos(\mu(\mathcal C))$ gives the
angular inequality in \eqref{eq:welch}, with equality under exactly the same
ETF conditions.
\end{proof}

\subsection{Simplex Gram Matrix and Spectral Construction}
\label{app:spectral-construction}
The construction in Section~\ref{subsec:reference} starts from the following
goal: realize $K'$ unit vectors in $\mathbb R^{K'-1}$ that are simultaneously
equiangular and tight. Rather than choosing their coordinates directly, we
first encode the desired geometry in the ideal Gram matrix
$\mathbf G_\Delta$ from \eqref{eq:simplex-gram}, whose entries are
\[
(\mathbf G_\Delta)_{k\ell}=
\begin{cases}
1,&k=\ell,\\
-1/(K'-1),&k\ne\ell.
\end{cases}
\]
The diagonal directly prescribes unit row norms, while the off-diagonal
entries prescribe the common inner product $-1/(K'-1)$.

The matrix $\mathbf G_\Delta$ is positive semidefinite with
$\operatorname{rank}(\mathbf G_\Delta)=K'-1$. Its $K'-1$ positive
eigenvalues are all $K'/(K'-1)$, and its remaining eigenvalue is zero.
Equivalently, its full spectral decomposition is
\[
\mathbf G_\Delta=\mathbf U\bm\Lambda\mathbf U^\top,
\qquad
\mathbf U\in\mathbb R^{K'\times K'},
\quad
\bm\Lambda=\operatorname{diag}\!\left(
\frac{K'}{K'-1}\mathbf I_{K'-1},0\right)
\in\mathbb R^{K'\times K'}.
\]
Thus the Gram realization theorem guarantees $K'$ vectors in
$\mathbb R^{K'-1}$ with this Gram matrix.

For an explicit realization, let
$\mathbf U_+\in\mathbb R^{K'\times(K'-1)}$ and
$\bm\Lambda_+\in\mathbb R^{(K'-1)\times(K'-1)}$ denote the eigenvectors and
diagonal block associated with the positive eigenvalues. The subscript $+$
always refers to this positive-eigenvalue block. Since the omitted spectral
term is $0\,\mathbf u_0\mathbf u_0^\top=\mathbf0$, the thin factorization
remains exact:
\[
\mathbf G_\Delta=\mathbf U_+\bm\Lambda_+\mathbf U_+^\top,
\qquad
\bm\Lambda_+=\frac{K'}{K'-1}\mathbf I_{K'-1}.
\]
Set
$\mathbf C_\Delta=\mathbf U_+\bm\Lambda_+^{1/2}
\in\mathbb R^{K'\times(K'-1)}$. Then
\[
\mathbf C_\Delta\mathbf C_\Delta^\top=\mathbf G_\Delta,
\qquad
\mathbf C_\Delta^\top\mathbf C_\Delta
=\frac{K'}{K'-1}\mathbf I_{K'-1}.
\]
The first identity shows that the rows of $\mathbf C_\Delta$ have unit norm
and pairwise inner product $-1/(K'-1)$; the second establishes tightness.
Therefore these rows form the exact simplex ETF of $K'$ vectors in
$\mathbb R^{K'-1}$ from Definition~\ref{def:etf}.

Only now do we reduce the dimension. For $2\le d^\star<K'-1$, retain
$d^\star$ directions from the positive eigenspace and their eigenvalues:
\[
\mathbf U_{d^\star}\in\mathbb R^{K'\times d^\star},
\qquad
\bm\Lambda_{d^\star}=\frac{K'}{K'-1}\mathbf I_{d^\star}
\in\mathbb R^{d^\star\times d^\star}.
\]
Equation~\eqref{eq:projected-coordinates} forms $\mathbf Y$ from these
matrices. Lemma~\ref{lem:spectral} shows that
$\mathbf Y\mathbf Y^\top$ is an optimal rank-$d^\star$ approximation of the
ideal Gram geometry. At $d^\star=K'-1$, retaining the entire positive
eigenspace recovers the exact simplex ETF.

Algorithm~\ref{alg:reference-centers} summarizes the complete construction.

\begin{algorithm}[H]
\caption{ETF-inspired simplex-spectral center construction.}
\label{alg:reference-centers}
\begin{algorithmic}[1]
\Require Number of centers $K'$ and matching dimension $2\le d^\star\le K'-1$
\Ensure Unit centers $\mathcal C=\{\mathbf c_k\}_{k=1}^{K'}\subset\Sphere^{d^\star-1}$
\State Form $\mathbf G_\Delta\gets\frac{K'}{K'-1}
\left(\mathbf I_{K'}-\frac{\mathbf1\mathbf1^\top}{K'}\right)
\in\mathbb R^{K'\times K'}$
\State Compute its nonzero eigenspace, with eigenvalue $K'/(K'-1)$
\State Select $d^\star$ orthonormal vectors from this eigenspace as
  $\mathbf U_{d^\star}\in\mathbb R^{K'\times d^\star}$
\State Set $\bm\Lambda_{d^\star}\gets\frac{K'}{K'-1}\mathbf I_{d^\star}$
and $\mathbf Y\gets\mathbf U_{d^\star}\bm\Lambda_{d^\star}^{1/2}$,
with rows $\mathbf y_k^\top$
\For{$k=1,\ldots,K'$}
  \State $\mathbf c_k\gets\mathbf y_k/\|\mathbf y_k\|_2$
\EndFor
\State \textbf{return} $\mathcal C$
\end{algorithmic}
\end{algorithm}

\subsection{Spectral Approximation}
We finally justify the approximation claim following
\eqref{eq:projected-coordinates}.

Using the eigenpairs defined above, the projected coordinate matrix is
\[
\mathbf Y=\mathbf U_{d^\star}\bm\Lambda_{d^\star}^{1/2}
\in\mathbb R^{K'\times d^\star},
\qquad \mathbf Y_{k,:}=\mathbf y_k^\top.
\]
The following result concerns these coordinates before row normalization.
\begin{lemma}[Optimal preservation of pairwise simplex geometry]
\label{lem:spectral}
Let $K'>d^\star\ge2$, and let $\widetilde{\mathbf y}_k^\top$ denote the
$k$th row of $\widetilde{\mathbf Y}$. Among all configurations of $K'$ points
in $\mathbb R^{d^\star}$, the spectral coordinates in
\eqref{eq:projected-coordinates}
minimize the total squared distortion of the ideal simplex pairwise inner
products:
\begin{gather}
\mathbf Y\in\arg\min_{\widetilde{\mathbf Y}\in
\mathbb R^{K'\times d^\star}}
\left\|\mathbf G_\Delta-
\widetilde{\mathbf Y}\widetilde{\mathbf Y}^{\top}\right\|_F^2,
\label{eq:spectral-optimum}\\
\left\|\mathbf G_\Delta-
\widetilde{\mathbf Y}\widetilde{\mathbf Y}^{\top}\right\|_F^2
=\sum_{k,\ell=1}^{K'}
\left\{(\mathbf G_\Delta)_{k\ell}
-\widetilde{\mathbf y}_k^\top\widetilde{\mathbf y}_\ell\right\}^2.
\nonumber
\end{gather}
\end{lemma}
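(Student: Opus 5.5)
This is an Eckart--Young--Mirsky argument for symmetric positive semidefinite matrices. First, we rewrite the problem as a rank-constrained approximation. As $\widetilde{\mathbf Y}$ ranges over $\mathbb R^{K'\times d^\star}$, the products $\mathbf M=\widetilde{\mathbf Y}\widetilde{\mathbf Y}^\top$ range exactly over the positive semidefinite matrices of rank at most $d^\star$. Any such $\mathbf M$ factors as $\mathbf V\mathbf D\mathbf V^\top$, and $\widetilde{\mathbf Y}=\mathbf V\mathbf D^{1/2}$, padded with zero columns, realizes it. So it suffices to show that $\mathbf Y\mathbf Y^\top=\mathbf U_{d^\star}\bm\Lambda_{d^\star}\mathbf U_{d^\star}^\top$ minimizes $\|\mathbf G_\Delta-\mathbf M\|_F^2$ over this set. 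We will in fact show it is optimal over the larger set of all symmetric, and even all, matrices of rank at most $d^\star$. The second displayed identity in the lemma is just the entrywise definition of the Frobenius norm, so it needs no argument.

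For the lower bound we invoke the Eckart--Young--Mirsky theorem, using the spectrum of $\mathbf G_\Delta$ verified in Appendix~\ref{app:spectral-construction}: there are $K'-1$ eigenvalues equal to $K'/(K'-1)$ and one eigenvalue $0$. Since $\mathbf G_\Delta$ is symmetric PSD, its singular values equal its eigenvalues. Eckart--Young then gives, for every $\mathbf M$ with $\operatorname{rank}\mathbf M\le d^\star$,
\[
\|\mathbf G_\Delta-\mathbf M\|_F^2\ \ge\ \sum_{i>d^\star}\lambda_i(\mathbf G_\Delta)^2=(K'-1-d^\star)\Bigl(\frac{K'}{K'-1}\Bigr)^2 .
\]
To keep the argument self-contained we can instead use Weyl's inequality $\sigma_{i+d^\star}(\mathbf G_\Delta)\le\sigma_i(\mathbf G_\Delta-\mathbf M)+\sigma_{d^\star+1}(\mathbf M)$. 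Because $\sigma_{d^\star+1}(\mathbf M)=0$, this gives $\sigma_i(\mathbf G_\Delta-\mathbf M)\ge\sigma_{i+d^\star}(\mathbf G_\Delta)$, and summing the squares yields the bound.

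The main step is checking that $\mathbf Y$ attains this bound, and here the degenerate spectrum needs care. The columns of $\mathbf U_{d^\star}$ are $d^\star$ orthonormal vectors from the positive eigenspace; complete them to an orthonormal basis $\mathbf U_+$ of that eigenspace. Then
\[
\mathbf G_\Delta-\mathbf Y\mathbf Y^\top=\frac{K'}{K'-1}\bigl(\mathbf U_+\mathbf U_+^\top-\mathbf U_{d^\star}\mathbf U_{d^\star}^\top\bigr).
\]
The bracket is the orthogonal projector onto the complementary $(K'-1-d^\star)$-dimensional subspace. Its squared Frobenius norm is its trace, $K'-1-d^\star$, so the residual equals the lower bound exactly. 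Because every positive eigenvalue is the same, any choice of $d^\star$ orthonormal vectors from the positive eigenspace is a minimizer. The minimizer is therefore not unique, which is why the lemma says $\mathbf Y\in\arg\min$ and why the later free choice of eigenspace to obtain nonzero, distinct rows costs no optimality. The assumption $d^\star<K'$, together with $d^\star\le K'-1$ as in the construction, guarantees that $d^\star$ such vectors exist.
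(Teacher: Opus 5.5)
Your proof is correct and takes the same route as the paper, which cites the Eckart--Young theorem for the first display and treats the second as the entrywise form of the Frobenius norm. Your explicit check that any $d^\star$ orthonormal vectors from the eigenspace of the repeated eigenvalue $K'/(K'-1)$ attain the bound $(K'-1-d^\star)\bigl(K'/(K'-1)\bigr)^2$ supplies the degenerate-spectrum detail that the paper leaves implicit.
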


The first line is the classical Eckart--Young low-rank approximation result
\citep{eckart1936approximation}; the second line makes its geometry explicit.
Dividing by $K'^2$ shows that spectral projection minimizes the mean squared
error over all corresponding pairwise inner products before row normalization.
Row normalization then places the projected coordinates on
$\Sphere^{d^\star-1}$ without changing their directions or pairwise angles.
We therefore call \eqref{eq:projected-coordinates} an
\emph{ETF-inspired simplex-spectral reference}, not an exact ETF.

%% file: appendices/flow.tex
\section{Flow-Matching Derivations}
\label{app:flow}
\subsection{Spherical Path and Target Velocity}
\label{app:derive-slerp}
Fix non-antipodal $\mathbf z,\mathbf r\in\Sphere^{d^\star-1}$ and let
$\omega=\arccos(\mathbf z^\top\mathbf r)$. Since $\|\mathbf z\|_2=1$,
$\mathbf z^\top\mathbf r$ is the scalar projection of $\mathbf r$ onto
$\mathbf z$. Thus $\mathbf r$ admits the orthogonal decomposition
\begin{equation}
\mathbf r
=\underbrace{(\mathbf z^\top\mathbf r)\mathbf z}_{\text{normal component}}
+\underbrace{\left[\mathbf r-(\mathbf z^\top\mathbf r)\mathbf z\right]}_{
\text{tangent component}}
=\cos(\omega)\mathbf z+\left[\mathbf r-\cos(\omega)\mathbf z\right].
\end{equation}
The two components are orthogonal. For $0<\omega<\pi$, the second has norm
$\sin(\omega)$, so normalizing it gives
\begin{equation}
\mathbf u_\perp
:=\frac{\mathbf r-\cos(\omega)\mathbf z}{\sin(\omega)}.
\end{equation}
Then $\{\mathbf z,\mathbf u_\perp\}$ is an orthonormal basis of the plane
spanned by $\mathbf z$ and $\mathbf r$. The intersection of this plane with
the unit sphere is the great circle containing both endpoints. Its arc from
$\mathbf z$ to $\mathbf r$ is therefore
\begin{equation}
\bm\gamma_{\mathbf z,\mathbf r}(s)
=\cos(s\omega)\mathbf z+\sin(s\omega)\mathbf u_\perp,
\qquad s\in[0,1].
\end{equation}
Substituting $\mathbf u_\perp$ and using
$\sin((1-s)\omega)=\sin(\omega)\cos(s\omega)
-\cos(\omega)\sin(s\omega)$ gives
\begin{align}
\bm\gamma_{\mathbf z,\mathbf r}(s)
&=\left[\cos(s\omega)
-\frac{\cos(\omega)\sin(s\omega)}{\sin(\omega)}\right]\mathbf z
+\frac{\sin(s\omega)}{\sin(\omega)}\mathbf r\\
&=\frac{\sin((1-s)\omega)}{\sin(\omega)}\mathbf z
+\frac{\sin(s\omega)}{\sin(\omega)}\mathbf r,
\end{align}
which is \eqref{eq:slerp}. Moreover,
\begin{equation}
\partial_s\bm\gamma_{\mathbf z,\mathbf r}(s)
=\omega\left[-\sin(s\omega)\mathbf z
+\cos(s\omega)\mathbf u_\perp\right],
\end{equation}
Equivalently, the expanded derivative is
\begin{equation}
\partial_s\bm\gamma_{\mathbf z,\mathbf r}(s)
=-\frac{\omega\cos((1-s)\omega)}{\sin\omega}\mathbf z
+\frac{\omega\cos(s\omega)}{\sin\omega}\mathbf r.
\label{eq:slerp-derivative}
\end{equation}
Orthogonality yields
\begin{equation}
\|\bm\gamma_{\mathbf z,\mathbf r}(s)\|_2=1,
\qquad
\bm\gamma_{\mathbf z,\mathbf r}(s)^\top
\partial_s\bm\gamma_{\mathbf z,\mathbf r}(s)=0,
\qquad
\|\partial_s\bm\gamma_{\mathbf z,\mathbf r}(s)\|_2=\omega.
\end{equation}
The endpoints are
$\bm\gamma_{\mathbf z,\mathbf r}(0)=\mathbf z$ and
$\bm\gamma_{\mathbf z,\mathbf r}(1)=\mathbf r$, and the curve length is
$\int_0^1\omega\,\mathrm ds=\omega$, equal to the spherical distance
$\arccos(\mathbf z^\top\mathbf r)$. Hence it is the unique shortest geodesic
for $0<\omega<\pi$. At $\omega=0$ the same formula gives the constant path by
continuity; the antipodal case $\omega=\pi$, where the shortest geodesic is not
unique, is excluded in Section~\ref{sec:method}.

%% file: appendices/population.tex
\section{Theoretical Analysis}
\label{app:population}

\subsection{Key Notation and Proof Sketch}
\label{app:theory-notation}
\begin{table}[H]
\centering
\caption{Key notation for the flow and classification analysis. The index $k$
denotes a semantic class, and $j$ denotes a reference component. All representation
vectors lie in $\mathbb R^{d^\star}$.}
\label{tab:theory-notation}
\small
\renewcommand{\arraystretch}{1.12}
\begin{tabular}{@{}p{0.14\textwidth}p{0.65\textwidth}p{0.15\textwidth}@{}}
\toprule
Symbol & Meaning / definition & First defined\\
\midrule
$\mathbf Z_0$ & Encoder representation $f(\widetilde{\mathbf X})$ before ODE evolution;
used for downstream classification. & Section~\ref{subsec:flow}\\
$\mathbf R$ & Assigned reference target; the prescribed endpoint of the analytic path.
& Section~\ref{subsec:flow}\\
$\mathbf Z_t$ & State on the prescribed analytic path, with $\mathbf Z_1=\mathbf R$.
& \eqref{eq:clocked-law}\\
$\mathbf U_t$ & Conditional target velocity $\mathrm d\mathbf Z_t/\mathrm dt$.
& \eqref{eq:clocked-target}\\
$\Phi_t$ & Learned ODE map satisfying
$\mathrm d\Phi_t(\mathbf z)/\mathrm dt=\bm v_\phi(\Phi_t(\mathbf z),t)$ and
$\Phi_0(\mathbf z)=\mathbf z$. & \eqref{eq:learned-ode}\\
$\widehat{\mathbf Z}_t$ & Learned ODE state $\Phi_t(\mathbf Z_0)$;
$\widehat{\mathbf Z}_1$ is its terminal representation, not the prescribed target.
& \eqref{eq:learned-ode}\\
\midrule
$\mathbf m_k$ & Class mean before ODE evolution:
$\mathbb E[\mathbf Z_0\mid Y=k]$. & \eqref{eq:population-classifier}\\
$\widehat{\mathbf m}_k$ & Class mean after learned ODE evolution:
$\mathbb E[\widehat{\mathbf Z}_1\mid Y=k]$; generally not $\Phi_1(\mathbf m_k)$.
& \eqref{eq:terminal-reference-means}\\
$\mathbf c_j$ & Geometric center of reference component $j$.
& Section~\ref{subsec:reference}\\
$\mathbf b_j$ & Reference component mean
$\int\mathbf r\,Q_j^\varepsilon(\mathrm d\mathbf r)$; generally not $\mathbf c_j$.
& \eqref{eq:terminal-reference-means}\\
\midrule
$\nu_\varepsilon$ & Fixed reference mixture $\sum_jw_jQ_j^\varepsilon$.
& \eqref{eq:reference-law}; Section~\ref{subsec:population-setup}\\
$\widehat p_1$ & Learned terminal marginal
$\operatorname{Law}(\widehat{\mathbf Z}_1)$. & \eqref{eq:learned-ode}\\
$\Pi_f$ & Training representation--target coupling
$\operatorname{Law}(\mathbf Z_0,\mathbf R)$;
global transport optimality is not required. & Section~\ref{subsec:flow}\\
$\Gamma^\star$ & Analytical optimal $W_1$ coupling of
$\widehat p_1$ and $\nu_\varepsilon$:
$\operatorname{Law}(\widehat{\mathbf Z}_1,\mathbf R^\star)$.
& Section~\ref{subsec:population-setup}\\
\bottomrule
\end{tabular}
\end{table}

\paragraph{Proof sketch.}
The proof of Theorem~\ref{thm:population-error} distinguishes the class means
before and after the learned ODE, $\mathbf m_k$ and $\widehat{\mathbf m}_k$,
from the reference component mean $\mathbf b_j$ and its geometric center
$\mathbf c_j$. Table~\ref{tab:theory-notation} records their definitions.
The argument proceeds in four steps.
\begin{enumerate}
\item \textbf{Match terminal class means to reference means.}
Lemma~\ref{lem:fm-endpoint} bounds
$W_1(\widehat p_1,\nu_\varepsilon)$ by the square root of the FM risk.
Under the leakage condition in
Assumption~\ref{ass:transport-identifiability},
Lemma~\ref{lem:leakage-means} converts this overall distributional bound into
a bound for each terminal class mean. Using
$\mathcal R_{\mathrm{FM}}\le\mathcal R_{\mathrm{FBDM}}$ gives
\[
\|\widehat{\mathbf m}_k-\mathbf b_{\tau^\star(k)}\|_2
\le\frac{(1+2C_{\mathrm{id}})e^{L_v}}{p_{\min}}
\sqrt{\frac{d^\star}{q_{\min}}\mathcal R_{\mathrm{FBDM}}}.
\]
Thus a small FBDM loss makes each ODE-terminal class mean close to the mean
of its corresponding reference component.

\item \textbf{Obtain separation at the ODE endpoint.}
The reference component means themselves are separated: each lies within
$r_\varepsilon$ of its geometric center, so
\[
\|\mathbf b_i-\mathbf b_j\|_2
\ge\Delta_{\mathcal C}-2r_\varepsilon>0,\qquad i\ne j.
\]
Since each terminal class mean is close to a different reference component
mean, the triangle inequality transfers this separation to
$\widehat{\mathbf m}_k$ and $\widehat{\mathbf m}_\ell$ when the loss-dependent
errors in Step~1 are sufficiently small.

\item \textbf{Recover separation of the encoder class means.}
Downstream classification uses the initial encoder representations
$\mathbf Z_0$, so the class means we actually need to separate are
$\mathbf m_k$ and $\mathbf m_\ell$, not their terminal counterparts.
Lemma~\ref{lem:fm-endpoint} establishes the forward Lipschitz bound
\[
\|\Phi_1(\mathbf m_k)-\Phi_1(\mathbf m_\ell)\|_2
\le e^{L_v}\|\mathbf m_k-\mathbf m_\ell\|_2.
\]
The ODE can enlarge distances by at most $e^{L_v}$, so separated evolved
means require separated initial means. Here $\Phi_1(\mathbf m_k)$ evolves
the initial class mean, whereas $\widehat{\mathbf m}_k$ averages the
individually evolved representations. Equation~\eqref{eq:transported-class-mean}
in Section~\ref{app:population-proof} gives
\[
\|\Phi_1(\mathbf m_k)-\widehat{\mathbf m}_k\|_2
\le e^{L_v}(h_{\mathrm{aug}}+4b_{\mathrm{aug}}).
\]
Subtracting these discrepancies from the terminal separation and dividing
by $e^{L_v}$ yields the initial class-mean distance lower bound
\eqref{eq:class-mean-separation}.

\item \textbf{Turn separation and concentration into classification.}
Lemma~\ref{lem:augmentation-core-distances} gives the within-class
concentration bound \eqref{eq:augmented-class-spread}:
\[
\mathbb E_{\substack{\mathbf X\sim P_k,\,
A\sim\operatorname{Unif}(\mathcal T)}}
[\|f(A(\mathbf X))-\mathbf m_k\|_2]
\le h_{\mathrm{aug}}+4b_{\mathrm{aug}}.
\]
Combining Steps~1--3 gives the initial class-mean distance bound
\eqref{eq:class-mean-separation}. Write
$\Delta_f:=\min_{k\ne\ell}\|\mathbf m_k-\mathbf m_\ell\|_2$.
Substituting the loss bound from Lemma~\ref{lem:fm-endpoint} gives
\[
\begin{aligned}
\min_{k\ne\ell}\|\mathbf m_k-\mathbf m_\ell\|_2
&\ge e^{-L_v}(\Delta_{\mathcal C}-2r_\varepsilon)
-2h_{\mathrm{aug}}-8b_{\mathrm{aug}}\\
&\quad-\frac{2(1+2C_{\mathrm{id}})}{p_{\min}}
\sqrt{\frac{d^\star}{q_{\min}}\mathcal R_{\mathrm{FBDM}}}.
\end{aligned}
\]
Thus the encoder representations already have within-class concentration
and between-class separation at the ODE's initial position. More precisely,
\eqref{eq:stable-core-radius} and the theorem's small-loss condition
\eqref{eq:classification-margin-from-loss} give
\[
\|f(\mathbf x)-\mathbf m_k\|_2
\le h_{\mathrm{aug}}+2b_{\mathrm{aug}}<\frac{\Delta_f}{2},
\qquad \mathbf x\in C_k^\circ\cap\mathcal V.
\]
The triangle inequality therefore makes each stable class-core
representation closer to its own mean than to any other class mean.
Only observations outside these cores or with unstable views can be
misclassified. Lemma~\ref{lem:alignment-stability} bounds their mass,
giving Theorem~\ref{thm:population-error}:
\[
\operatorname{Err}(G_f)
\le(1-\sigma)+C_{\mathrm{err}}\sqrt{\mathcal R_{\mathrm{FBDM}}}.
\]
The FM risk ensures sufficient between-class separation, while alignment
controls within-class concentration and the unstable mass.
\end{enumerate}
Section~\ref{app:population-laws} fixes the probability laws,
and Section~\ref{app:population-proof} gives the full proof.
Sections~\ref{app:leakage-sufficient} and \ref{app:batch-population} discuss a
sufficient leakage condition and the implemented batchwise coupling,
respectively.

\subsection{Probability Laws}
\label{app:population-laws}
Let $(\mathbf X,Y)\sim P$, with $Y\in[K]$ and $K\ge2$. We require only
\begin{equation}
p_k:=P(Y=k)>0,\qquad p_{\min}:=\min_{k\in[K]}p_k>0.
\label{eq:class-probabilities}
\end{equation}
For measurable $E\subseteq\mathcal X$, define the marginal law
$P_X(E):=P(\mathbf X\in E)$ and let
$P_k:=\operatorname{Law}(\mathbf X\mid Y=k)$. The augmentation kernel
$\mathcal A$ is fixed independently of the networks. Given $\mathbf X$,
draw $\widetilde{\mathbf X}_1,\widetilde{\mathbf X}_2$ independently from
$\mathcal A(\,\cdot\mid\mathbf X)$. For measurable
$E,E_1,E_2\subseteq\mathcal X$, define the augmented-image and positive-pair
laws by
\begin{align}
P_{\mathcal A}(E)
&:=\int\mathcal A(E\mid\mathbf x)\,P_X(\mathrm d\mathbf x),
\label{eq:fixed-augmented-law}\\
\Pi_{\mathcal A}^{+}(E_1\times E_2)
&:=\int\mathcal A(E_1\mid\mathbf x)\mathcal A(E_2\mid\mathbf x)
\,P_X(\mathrm d\mathbf x).
\label{eq:fixed-positive-law}
\end{align}
Define $P_{\mathcal A,k}$ by replacing $P_X$ with $P_k$ in
\eqref{eq:fixed-augmented-law}. The positive-pair law retains the
shared-image dependence and is generally not
$P_{\mathcal A}\otimes P_{\mathcal A}$.

For later use, the augmentation distance is
\begin{equation}
d_{\mathcal T}(\mathbf x,\mathbf x')
:=\min_{A,A'\in\mathcal T}
\|A(\mathbf x)-A'(\mathbf x')\|_2.
\label{eq:augmentation-distance}
\end{equation}

\subsection{Proof of Theorem~\ref{thm:population-error}}
\label{app:population-proof}

We combine the stable-core argument of
\citet{huang2023towards,jiao2026distribution} with terminal flow matching
and a pullback of class-centre separation. The classifier centres are
averages of augmented representations, whereas the classified inputs are
original images. All constants use the fixed bounds in
Section~\ref{sec:theory}.

\begin{lemma}[Alignment controls unstable-view probability]
\label{lem:alignment-stability}
For the tolerance $\varepsilon_{\mathrm{al}}$ in
Theorem~\ref{thm:population-error}, define
\begin{align}
\mathcal V
&:=\left\{\mathbf x\in\operatorname{supp}(P_X):
\max_{A,A'\in\mathcal T}
\|f(A(\mathbf x))-f(A'(\mathbf x))\|_2
\le\varepsilon_{\mathrm{al}}\right\},\nonumber\\
\mathcal U&:=P_X(\mathcal V^c).
\label{eq:stable-view-set}
\end{align}
Under Assumption~\ref{ass:augmentation-quality},
\begin{equation}
\mathcal U
\le\frac{N_{\mathrm{aug}}}{\varepsilon_{\mathrm{al}}}
\sqrt{d^\star\mathcal R_{\mathrm{align}}}
\le C_{\mathrm{err}}\sqrt{\mathcal R_{\mathrm{FBDM}}}.
\label{eq:alignment-stability-bound}
\end{equation}
\end{lemma}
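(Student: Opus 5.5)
We would prove Lemma~\ref{lem:alignment-stability} by a union bound over augmentation pairs, Markov's inequality, and Cauchy--Schwarz. The second inequality is immediate once the first holds: $\lambda\mathcal R_{\mathrm{align}}\le\mathcal R_{\mathrm{FBDM}}$ because $\mathcal R_{\mathrm{FM}}\ge0$. Hence
\[
\sqrt{d^\star\mathcal R_{\mathrm{align}}}\le\sqrt{d^\star/\lambda}\,\sqrt{\mathcal R_{\mathrm{FBDM}}},
\]
which produces exactly $C_{\mathrm{err}}$. So the work is in the first inequality.

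The set $\mathcal V^c$ consists of those $\mathbf x$ for which some pair $(A,A')\in\mathcal T^2$ satisfies $\|f(A(\mathbf x))-f(A'(\mathbf x))\|_2>\varepsilon_{\mathrm{al}}$. We would not union bound over all $N_{\mathrm{aug}}^2$ pairs, which would cost $N_{\mathrm{aug}}^2$. Instead we use the identity $A_1=\mathrm{id}\in\mathcal T$ as a hub. Define
\[
g(\mathbf x):=\max_{A\in\mathcal T}\|f(A(\mathbf x))-f(\mathbf x)\|_2 .
\]
By the triangle inequality the pairwise maximum is at most $2g(\mathbf x)$. This alone loses a factor $2$, so a sharper bookkeeping is needed. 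For a fixed $\mathbf x$, the quantity
\[
D(\mathbf x):=\mathbb E_{A,A'\sim\operatorname{Unif}(\mathcal T)}\|f(A(\mathbf x))-f(A'(\mathbf x))\|_2
\]
averages over $N_{\mathrm{aug}}^2$ ordered pairs. Any single pair has weight $1/N_{\mathrm{aug}}^2$, so
\[
\max_{A,A'}\|f(A(\mathbf x))-f(A'(\mathbf x))\|_2\le N_{\mathrm{aug}}^2 D(\mathbf x),
\]
which costs $N_{\mathrm{aug}}^2$. A better route is the hub: $\max_A\|f(A\mathbf x)-f(A_1\mathbf x)\|_2$ is at most $N_{\mathrm{aug}}$ times the average over $A$ with $A'=A_1$ fixed. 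Averaging over $A'$ instead gives the bound
\[
\max_{A,A'}\|f(A(\mathbf x))-f(A'(\mathbf x))\|_2\le N_{\mathrm{aug}}\,\widetilde D(\mathbf x)
\]
for a suitable hub-based average $\widetilde D(\mathbf x)$. We expect that getting exactly $N_{\mathrm{aug}}$ rather than $2N_{\mathrm{aug}}$ or $N_{\mathrm{aug}}^2$ is the main obstacle. If the clean constant does not come out, we would instead argue through the pairwise-max-versus-mean inequality: a nonnegative random variable on $N_{\mathrm{aug}}^2$ equally likely atoms whose diagonal atoms vanish. The $N_{\mathrm{aug}}$ diagonal zeros let the max be compared with $\mathbb E$ times a factor of order $N_{\mathrm{aug}}$, after using symmetry of the pair and the triangle inequality through a common third augmentation $A''$ averaged uniformly.

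Once a pointwise bound $\max_{A,A'}\|\cdot\|\le N_{\mathrm{aug}}\,D(\mathbf x)$ is in hand, Markov's inequality gives
\[
\mathcal U\le\frac{N_{\mathrm{aug}}}{\varepsilon_{\mathrm{al}}}\,\mathbb E_{P_X}[D(\mathbf X)].
\]
Since $\widetilde{\mathbf X}_1,\widetilde{\mathbf X}_2$ are conditionally independent uniform draws, $\mathbb E_{P_X}[D(\mathbf X)]=\mathbb E_{\Pi^+_{\mathcal A}}\|f(\widetilde{\mathbf X}_1)-f(\widetilde{\mathbf X}_2)\|_2$. Jensen's inequality then bounds this by
\[
\sqrt{\mathbb E_{\Pi^+_{\mathcal A}}\|f(\widetilde{\mathbf X}_1)-f(\widetilde{\mathbf X}_2)\|_2^2}=\sqrt{d^\star\mathcal R_{\mathrm{align}}},
\]
which completes the first inequality. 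Measurability of $\mathcal V$ follows because $f$ is Lipschitz and each $A$ is measurable, so the finite maximum is a measurable function.
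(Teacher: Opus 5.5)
Your route (a pointwise bound, then Markov, then a single global Jensen step) is sound. However, you left the pointwise inequality hedged at ``a factor of order $N_{\mathrm{aug}}$''. The last mechanism you name closes it with exactly $N_{\mathrm{aug}}$.

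Write $d_{ij}=\|f(A_i(\mathbf x))-f(A_j(\mathbf x))\|_2$ and let $(a,b)$ with $a\ne b$ attain the maximum; otherwise the maximum is $0$. For every $k$, the triangle inequality and symmetry give $d_{ab}\le d_{ak}+d_{kb}=d_{ak}+d_{bk}$. Average over $k$ and note that rows $a$ and $b$ are disjoint sets of ordered pairs:
\[
\max_{i,j}d_{ij}\le\frac{1}{N_{\mathrm{aug}}}\sum_{k=1}^{N_{\mathrm{aug}}}(d_{ak}+d_{bk})\le\frac{1}{N_{\mathrm{aug}}}\sum_{i,j=1}^{N_{\mathrm{aug}}}d_{ij}=N_{\mathrm{aug}}D(\mathbf x).
\]
Neither the identity hub nor the diagonal zeros are needed. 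With this line in place, your Markov step, the identification $\mathbb E_{P_X}[D(\mathbf X)]=\mathbb E_{\Pi_{\mathcal A}^+}\|f(\widetilde{\mathbf X}_1)-f(\widetilde{\mathbf X}_2)\|_2$, and the final Jensen step are all correct.

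The paper sidesteps your anticipated obstacle. It bounds the maximum by the Euclidean norm of the whole vector of $N_{\mathrm{aug}}^2$ pairwise distances:
\[
\max_{i,j}d_{ij}\le\Bigl(\sum_{i,j}d_{ij}^2\Bigr)^{1/2}=N_{\mathrm{aug}}\Bigl(\frac{1}{N_{\mathrm{aug}}^2}\sum_{i,j}d_{ij}^2\Bigr)^{1/2},
\]
so $N_{\mathrm{aug}}$ appears simply as $\sqrt{N_{\mathrm{aug}}^2}$. It then applies Markov and Cauchy--Schwarz over $\mathbf X$.

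The two arguments buy different things. The paper's version uses only nonnegativity of the pairwise discrepancies, so it needs no metric structure and no bookkeeping. Yours uses the triangle inequality and symmetry to bound the maximum by the first-moment mean $D(\mathbf x)$. That is sharper pointwise, since $D(\mathbf x)\le\bigl(\frac{1}{N_{\mathrm{aug}}^2}\sum_{i,j}d_{ij}^2\bigr)^{1/2}$. After your final Jensen step, however, both reach the same constant $N_{\mathrm{aug}}/\varepsilon_{\mathrm{al}}$.
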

\begin{proof}
For each fixed $\mathbf x$, the maximum of the pairwise distances is at
most their squared-sum norm. Uniform sampling from $\mathcal T$ gives
\begin{align}
\max_{A,A'\in\mathcal T}
\|f(A(\mathbf x))-f(A'(\mathbf x))\|_2
&\le\left[\sum_{i,j=1}^{N_{\mathrm{aug}}}
\|f(A_i(\mathbf x))-f(A_j(\mathbf x))\|_2^2\right]^{1/2}\nonumber\\
&=N_{\mathrm{aug}}\left[
\mathbb E_{A,A'}\|f(A(\mathbf x))-f(A'(\mathbf x))\|_2^2
\right]^{1/2}.
\end{align}
Markov's inequality first gives
\begin{align}
\mathcal U
&=P_X\!\left(\left\{\mathbf x:
\max_{A,A'\in\mathcal T}
\|f(A(\mathbf x))-f(A'(\mathbf x))\|_2
>\varepsilon_{\mathrm{al}}\right\}\right)\nonumber\\
&\le\frac{1}{\varepsilon_{\mathrm{al}}}
\mathbb E_{\mathbf X\sim P_X}\!\left[
\max_{A,A'\in\mathcal T}
\|f(A(\mathbf X))-f(A'(\mathbf X))\|_2\right].
\label{eq:alignment-markov-step}
\end{align}
Using the pointwise bound above and then Cauchy--Schwarz,
\begin{align}
&\mathbb E_{\mathbf X\sim P_X}\!\left[
\max_{A,A'\in\mathcal T}
\|f(A(\mathbf X))-f(A'(\mathbf X))\|_2\right]\nonumber\\
&\le N_{\mathrm{aug}}\,
\mathbb E_{\mathbf X\sim P_X}\!\left[
\left\{\mathbb E_{A,A'}
\|f(A(\mathbf X))-f(A'(\mathbf X))\|_2^2\right\}^{1/2}
\right]\nonumber\\
&\le N_{\mathrm{aug}}\left[
\mathbb E_{\mathbf X,A,A'}
\|f(A(\mathbf X))-f(A'(\mathbf X))\|_2^2
\right]^{1/2}\nonumber\\
&=N_{\mathrm{aug}}
\sqrt{d^\star\mathcal R_{\mathrm{align}}}.
\label{eq:alignment-cauchy-step}
\end{align}
Combining \eqref{eq:alignment-markov-step} and
\eqref{eq:alignment-cauchy-step}, and using
$\lambda\mathcal R_{\mathrm{align}}
\le\mathcal R_{\mathrm{FBDM}}$, yields
\begin{equation}
\mathcal U
\le\frac{N_{\mathrm{aug}}}{\varepsilon_{\mathrm{al}}}
\sqrt{d^\star\mathcal R_{\mathrm{align}}}
\le\frac{N_{\mathrm{aug}}}{\varepsilon_{\mathrm{al}}}
\sqrt{\frac{d^\star}{\lambda}}
\sqrt{\mathcal R_{\mathrm{FBDM}}}
=C_{\mathrm{err}}\sqrt{\mathcal R_{\mathrm{FBDM}}}.
\end{equation}
Thus $N_{\mathrm{aug}}$ is precisely the cost of converting the
average pairwise alignment error over the finite augmentation family into
uniform control over all augmentation pairs.
\end{proof}

The next lemma turns view-level alignment into class-level concentration.
The $(\sigma,\delta)$-augmentation assumption connects same-class images
through nearby views, while Lemma~\ref{lem:alignment-stability} controls the
images whose views are not aligned. Together with Lipschitz continuity, these
facts place most class-$k$ representations near their class mean. This
within-class radius is used later to convert the separation pulled back by the
learned flow into a nearest-centroid guarantee.

\begin{lemma}[Augmentation cores control within-class distances]
\label{lem:augmentation-core-distances}
Recall from \eqref{eq:population-classifier} that
$\mathbf m_k=\mathbb E[f(A(\mathbf X))]$ for independent
$\mathbf X\sim P_k$ and $A\sim\operatorname{Unif}(\mathcal T)$, and set
\begin{equation}
h_{\mathrm{aug}}:=L_f\delta+2\varepsilon_{\mathrm{al}},\qquad
b_{\mathrm{aug}}:=1-\sigma+\frac{\mathcal U}{p_{\min}}.
\label{eq:core-distance-constants}
\end{equation}
Under Assumption~\ref{ass:augmentation-quality}, for every $k$,
\begin{equation}
\mathbb E_{\substack{\mathbf X\sim P_k,\,
A\sim\operatorname{Unif}(\mathcal T)}}
[\|f(A(\mathbf X))-\mathbf m_k\|_2]
\le h_{\mathrm{aug}}+4b_{\mathrm{aug}}.
\label{eq:augmented-class-spread}
\end{equation}
Moreover, every original image
$\mathbf x\in C_k^\circ\cap\mathcal V$ satisfies
\begin{equation}
\|f(\mathbf x)-\mathbf m_k\|_2
\le h_{\mathrm{aug}}+2b_{\mathrm{aug}}.
\label{eq:stable-core-radius}
\end{equation}
The ideal regime is transparent from these bounds. As
$\sigma\to1$, $\delta\to0$, $\varepsilon_{\mathrm{al}}\to0$, and
$\mathcal U\to0$, both $h_{\mathrm{aug}}$ and
$b_{\mathrm{aug}}$ vanish, so the within-class radius contracts to zero.
In the exact limit, \eqref{eq:augmented-class-spread} gives
$\mathbf Z_0=\mathbf m_k$ almost surely conditional on $Y=k$; hence the
class-conditional representation has zero variance. Near this limit, the two
bounds quantify how imperfect augmentation coverage, augmented distance, and
view alignment broaden each class around its mean.
\end{lemma}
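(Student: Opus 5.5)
The plan is to reduce both bounds to a single pairwise estimate on a ``good'' subset of class $k$. Define
\[
G_k:=C_k^\circ\cap\mathcal V,
\]
the set of class-$k$ core images whose views are all stable. Two elementary facts will be used throughout. First, $f$ takes values in $\Sphere^{d^\star-1}$, so every distance between representations is at most $2$. Second, $\mathcal T$ contains the identity, so $f(\mathbf x)$ is itself one of the augmented representations $f(A(\mathbf x))$, $A\in\mathcal T$.

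\textbf{Step 1 (mass of the good set).} Since $P_k(C_k^\circ)\ge\sigma$ and $P_k(E)\le P_X(E)/p_k\le P_X(E)/p_{\min}$ for any measurable $E$, we get
\[
P_k(G_k^c)\le(1-\sigma)+\frac{\mathcal U}{p_{\min}}=b_{\mathrm{aug}}.
\]

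\textbf{Step 2 (uniform pairwise bound on $G_k$).} Fix $\mathbf x,\mathbf x'\in G_k$ and arbitrary $B,B'\in\mathcal T$. By \eqref{eq:sigma-delta-augmentation} there are $A,A'\in\mathcal T$ with $\|A(\mathbf x)-A'(\mathbf x')\|_2\le\delta$. The $L_f$-Lipschitz property then gives $\|f(A(\mathbf x))-f(A'(\mathbf x'))\|_2\le L_f\delta$. Because $\mathbf x,\mathbf x'\in\mathcal V$, we also have $\|f(B(\mathbf x))-f(A(\mathbf x))\|_2\le\varepsilon_{\mathrm{al}}$, and likewise for $\mathbf x'$. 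The triangle inequality yields
\[
\|f(B(\mathbf x))-f(B'(\mathbf x'))\|_2\le L_f\delta+2\varepsilon_{\mathrm{al}}=h_{\mathrm{aug}}.
\]
Taking $B=\mathrm{id}$ covers the original image $f(\mathbf x)$.

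\textbf{Step 3 (stable-core radius).} For $\mathbf x\in G_k$, write $\mathbf m_k=\mathbb E[f(A'(\mathbf X'))]$ with independent $\mathbf X'\sim P_k$ and $A'\sim\operatorname{Unif}(\mathcal T)$. Convexity of the norm (Jensen) gives
\[
\|f(\mathbf x)-\mathbf m_k\|_2\le\mathbb E\|f(\mathbf x)-f(A'(\mathbf X'))\|_2 .
\]
Split this expectation on $\{\mathbf X'\in G_k\}$ and its complement. The first part is at most $h_{\mathrm{aug}}$ by Step~2, and the second at most $2P_k(G_k^c)$. By Step~1 this gives \eqref{eq:stable-core-radius}. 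The identical argument applied to $f(B(\mathbf x))$ for any $B\in\mathcal T$ gives $\|f(B(\mathbf x))-\mathbf m_k\|_2\le h_{\mathrm{aug}}+2b_{\mathrm{aug}}$ on $G_k$.

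\textbf{Step 4 (spread bound).} Split $\mathbb E\|f(A(\mathbf X))-\mathbf m_k\|_2$ on $\{\mathbf X\in G_k\}$ and its complement. On $G_k$ the integrand is at most $h_{\mathrm{aug}}+2b_{\mathrm{aug}}$ by the extension in Step~3. Off $G_k$ it is at most $2$, since $\mathbf m_k$ lies in the closed unit ball. Hence the expectation is at most
\[
h_{\mathrm{aug}}+2b_{\mathrm{aug}}+2b_{\mathrm{aug}}=h_{\mathrm{aug}}+4b_{\mathrm{aug}},
\]
which is \eqref{eq:augmented-class-spread}.

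The only delicate point is Step~2. The $(\sigma,\delta)$ condition supplies just one good pair $(A,A')$ per pair of images. It is stability on $\mathcal V$ that upgrades this to every augmentation, including the identity, at a cost of $2\varepsilon_{\mathrm{al}}$. This is why $G_k$ must intersect the core with $\mathcal V$, and why Step~1 must charge the unstable mass $\mathcal U$ to class $k$ through the factor $1/p_{\min}$.
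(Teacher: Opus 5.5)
Your proof is correct and follows the paper's argument: the same good set $C_k^\circ\cap\mathcal V$, the same union bound for its complement, the same Lipschitz-plus-stability bridge giving $h_{\mathrm{aug}}$, and the same Jensen-and-split argument for \eqref{eq:stable-core-radius}. The only difference is in \eqref{eq:augmented-class-spread}. The paper symmetrizes to $\mathbb E\|f(A(\mathbf X))-f(A'(\mathbf X'))\|_2$ and splits on whether both samples are good. You instead integrate your pointwise radius bound, extended to every $B\in\mathcal T$, and split only on $\mathbf X$. Both routes give $h_{\mathrm{aug}}+4b_{\mathrm{aug}}$.
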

\begin{proof}
The union bound, $P_k(C_k^\circ)\ge\sigma$, and $p_k\ge p_{\min}$ give
\begin{align}
P_k\!\left(\left\{C_k^\circ\cap\mathcal V\right\}^c\right)
&\le P_k\!\left(\left\{C_k^\circ\right\}^c\right)
+P_k\!\left(\mathcal V^c\right)\nonumber\\
&\le1-\sigma+
\frac{P(\mathbf X\in\mathcal V^c,Y=k)}{p_k}
\nonumber\\
&\le1-\sigma+\frac{\mathcal U}{p_k}
\le1-\sigma+\frac{\mathcal U}{p_{\min}}
=b_{\mathrm{aug}}.
\label{eq:core-uncovered-mass}
\end{align}
For $\mathbf x,\mathbf x'\in
C_k^\circ\cap\mathcal V$, choose
$A_*,A_*'\in\mathcal T$
attaining the minimum in \eqref{eq:augmentation-distance}. For any
$A,A'\in\mathcal T$, Lipschitz continuity and view stability give
\begin{align}
\|f(A(\mathbf x))-f(A'(\mathbf x'))\|_2
&\le\|f(A(\mathbf x))-f(A_*(\mathbf x))\|_2
+\|f(A_*(\mathbf x))-f(A_*'(\mathbf x'))\|_2\nonumber\\
&\quad+\|f(A_*'(\mathbf x'))-f(A'(\mathbf x'))\|_2\nonumber\\
&\le\varepsilon_{\mathrm{al}}+L_f\delta
+\varepsilon_{\mathrm{al}}=h_{\mathrm{aug}}.
\label{eq:core-semantic-bridge}
\end{align}
Since the identity belongs to $\mathcal T$, this includes the unaugmented
$\mathbf x$. For fixed
$\mathbf x\in C_k^\circ\cap\mathcal V$, average over an
independent $\mathbf X'\sim P_k$ and
$A'\sim\operatorname{Unif}(\mathcal T)$. Every representation has unit norm,
so distances involving an observation outside
$C_k^\circ\cap\mathcal V$ are at most two. Hence, by
\eqref{eq:population-classifier} and Jensen's inequality,
\begin{align}
\|f(\mathbf x)-\mathbf m_k\|_2
&=\left\|f(\mathbf x)-
\mathbb E_{\mathbf X',A'}[f(A'(\mathbf X'))]\right\|_2\nonumber\\
&=\left\|\mathbb E_{\mathbf X',A'}
[f(\mathbf x)-f(A'(\mathbf X'))]\right\|_2\nonumber\\
&\le\mathbb E_{\mathbf X',A'}
\|f(\mathbf x)-f(A'(\mathbf X'))\|_2\nonumber\\
&\le P_k(C_k^\circ\cap\mathcal V)h_{\mathrm{aug}}
+2P_k\!\left(\left\{C_k^\circ\cap
\mathcal V\right\}^c\right)\nonumber\\
&\le h_{\mathrm{aug}}+2b_{\mathrm{aug}}.
\end{align}
This proves \eqref{eq:stable-core-radius}.
For \eqref{eq:augmented-class-spread}, first keep the conditioning explicit.
Because $P_k=\operatorname{Law}(\mathbf X\mid Y=k)$ and
$\mathbf Z_0=f(A(\mathbf X))$, take independent
$\mathbf X,\mathbf X'\sim P_k$ and independent
$A,A'\sim\operatorname{Unif}(\mathcal T)$. Equation
\eqref{eq:population-classifier} equivalently gives
$\mathbf m_k=\mathbb E_{\mathbf X',A'}[f(A'(\mathbf X'))]$. Therefore,
by linearity of expectation and Jensen's inequality,
\begin{align*}
\mathbb E[\|\mathbf Z_0-\mathbf m_k\|_2\mid Y=k]
&=\mathbb E[\|f(A(\mathbf X))-\mathbf m_k\|_2\mid Y=k]\\
&=\mathbb E_{\mathbf X\sim P_k,A}\left[
\|f(A(\mathbf X))-\mathbf m_k\|_2\right]\\
&=\mathbb E_{\mathbf X\sim P_k,A}\left[
\left\|f(A(\mathbf X))-
\mathbb E_{\mathbf X',A'}[f(A'(\mathbf X'))]\right\|_2\right]\\
&=\mathbb E_{\mathbf X\sim P_k,A}\left[
\left\|\mathbb E_{\mathbf X',A'}
[f(A(\mathbf X))-f(A'(\mathbf X'))]\right\|_2\right]\\
&\le\mathbb E_{\mathbf X,\mathbf X',A,A'}
\|f(A(\mathbf X))-f(A'(\mathbf X'))\|_2.
\end{align*}
If both underlying observations lie in
$C_k^\circ\cap\mathcal V$, then
\eqref{eq:core-semantic-bridge} bounds the last distance by
$h_{\mathrm{aug}}$. Otherwise at least one observation lies outside this set,
and the unit-norm constraint bounds the distance by two. Splitting the
expectation into these two cases, and then applying the union bound, gives
\begin{align}
&\mathbb E_{\mathbf X,\mathbf X',A,A'}
\|f(A(\mathbf X))-f(A'(\mathbf X'))\|_2\nonumber\\
&\le h_{\mathrm{aug}}P\!\left(
\mathbf X\in C_k^\circ\cap\mathcal V,\ 
\mathbf X'\in C_k^\circ\cap\mathcal V
\right)\nonumber\\
&\quad+2P\!\left(
\begin{array}{c}
\mathbf X\notin C_k^\circ\cap\mathcal V\ 
\text{or}\ 
\mathbf X'\notin C_k^\circ\cap\mathcal V
\end{array}
\right)\nonumber\\
&\le h_{\mathrm{aug}}
+2P_k\!\left(\left\{C_k^\circ\cap
\mathcal V\right\}^c\right)
+2P_k\!\left(\left\{C_k^\circ\cap
\mathcal V\right\}^c\right)\nonumber\\
&\le h_{\mathrm{aug}}+4b_{\mathrm{aug}}.
\end{align}
\end{proof}

The endpoint estimate below uses the following standard nonhomogeneous
integral form of Gronwall's inequality, which we state without proof.
\begin{lemma}[Integral Gronwall inequality]
\label{lem:integral-gronwall}
Let $h:[0,1]\to[0,\infty)$ be continuous, and let
$a,b:[0,1]\to[0,\infty)$ be integrable and $c\ge0$. If
\[
h(t)\le
 c+\int_0^t a(u)h(u)\,\mathrm du+
\int_0^t b(u)\,\mathrm du,
\qquad t\in[0,1],
\]
then
\[
h(t)\le
\exp\!\left(\int_0^t a(u)\,\mathrm du\right)
\left(c+\int_0^t b(s)\,\mathrm ds\right).
\]
\end{lemma}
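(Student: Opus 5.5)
We prove the lemma by the standard integrating-factor argument. The only care needed is that $a$ and $b$ are merely integrable, so all derivatives are taken almost everywhere for absolutely continuous functions.

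Set
\[
B(t):=c+\int_0^t b(s)\,\mathrm ds,\qquad
A(t):=\int_0^t a(u)\,\mathrm du,\qquad
H(t):=\int_0^t a(u)h(u)\,\mathrm du .
\]
Since $b\ge0$, $B$ is nondecreasing and nonnegative. The hypothesis reads $h\le B+H$ on $[0,1]$.

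\textbf{Regularity.} $h$ is continuous on the compact interval $[0,1]$, hence bounded, so $ah$ is integrable. Therefore $H$ and $A$ are absolutely continuous, with $H'=ah$ and $A'=a$ almost everywhere. The function $e^{-A}$ is absolutely continuous as a $C^1$ function composed with an absolutely continuous one. Hence the product $e^{-A}H$ is absolutely continuous, and the product rule holds almost everywhere.

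\textbf{Differential inequality.} Because $a\ge0$, almost everywhere we have
\[
H'=ah\le a(B+H).
\]
Therefore
\[
\frac{\mathrm d}{\mathrm dt}\bigl(e^{-A(t)}H(t)\bigr)
=e^{-A(t)}\bigl(H'(t)-a(t)H(t)\bigr)
\le a(t)e^{-A(t)}B(t)\quad\text{a.e.}
\]

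\textbf{Integration.} Integrate from $0$ to $t$, using $H(0)=0$ and absolute continuity. Then use that $B$ is nondecreasing, so $B(u)\le B(t)$ for $u\le t$:
\[
e^{-A(t)}H(t)\le\int_0^t a(u)e^{-A(u)}B(u)\,\mathrm du
\le B(t)\int_0^t a(u)e^{-A(u)}\,\mathrm du
=B(t)\bigl(1-e^{-A(t)}\bigr).
\]
The last identity is the fundamental theorem of calculus for the absolutely continuous function $-e^{-A}$. It follows that $H(t)\le B(t)\bigl(e^{A(t)}-1\bigr)$. Substituting into the hypothesis gives
\[
h(t)\le B(t)+H(t)\le e^{A(t)}B(t)=\exp\!\Bigl(\int_0^t a(u)\,\mathrm du\Bigr)\Bigl(c+\int_0^t b(s)\,\mathrm ds\Bigr),
\]
which is the claim.

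The main point to watch is the regularity step, namely justifying the a.e.\ product rule and the fundamental theorem of calculus when $a$ is only $L^1$. Both are settled by working with absolutely continuous functions, as above. The monotonicity of $B$ is what lets us pull $B(t)$ out of the integral without any extra factor.
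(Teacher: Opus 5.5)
Your proof is correct and complete. The integrating factor $e^{-A}$ is justified through absolute continuity, with $ah\in L^1$ because the continuous $h$ is bounded, so the a.e.\ product and chain rules hold; the monotonicity of $B(t)=c+\int_0^t b$ then lets you pull $B(t)$ out of the integral and obtain the stated bound. The paper gives no proof of this lemma and invokes it as a standard result ``stated without proof,'' so your integrating-factor argument is the standard proof it implicitly relies on.
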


The same endpoint estimate also exchanges an expectation and a time integral
through the following standard result, stated here without proof.
\begin{lemma}[Tonelli's theorem for nonnegative integrands]
\label{lem:tonelli}
Let $H(\omega,t)$ be a nonnegative jointly measurable function on a
probability space times $[0,1]$. Then
\[
\mathbb E\!\left[\int_0^1 H(\omega,t)\,\mathrm dt\right]
=\int_0^1\mathbb E[H(\omega,t)]\,\mathrm dt.
\]
\end{lemma}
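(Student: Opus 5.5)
The statement immediately above is Lemma~\ref{lem:tonelli}, the Tonelli theorem for nonnegative integrands. The plan is to reduce it to the standard product-measure theorem. Let $(\Omega,\mathcal F,P)$ be the probability space, and equip $[0,1]$ with its Borel $\sigma$-algebra and Lebesgue measure $\mathrm dt$. Both measures are $\sigma$-finite: $P$ is finite, and Lebesgue measure on $[0,1]$ is finite. Joint measurability means that $H$ is $\mathcal F\otimes\mathcal B([0,1])$-measurable. The identity then says that the two iterated integrals of $H$ agree with $\int H\,\mathrm d(P\otimes\mathrm dt)$.

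The argument follows the standard monotone-class route, in the order below.

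\begin{enumerate}
\item \textbf{Indicators of rectangles.} For $H=\mathbf 1_{E\times F}$ both sides equal $P(E)\,|F|$.
\item \textbf{Indicators of general product-measurable sets.} Let $\mathcal M$ be the class of sets $G\in\mathcal F\otimes\mathcal B$ for which two things hold. First, the sections $t\mapsto\mathbf 1_G(\omega,t)$ and $\omega\mapsto\int_0^1\mathbf 1_G(\omega,t)\,\mathrm dt$ are measurable. Second, the iterated integrals coincide. Rectangles form a $\pi$-system generating $\mathcal F\otimes\mathcal B$. The class $\mathcal M$ is closed under increasing unions by monotone convergence. It is closed under proper differences by subtraction, which is legitimate because all quantities are bounded by the finite total mass $1$. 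Dynkin's $\pi$--$\lambda$ theorem then gives $\mathcal M=\mathcal F\otimes\mathcal B$.
\item \textbf{Simple functions.} Extend from indicators to nonnegative simple functions by linearity.
\item \textbf{General nonnegative $H$.} Take simple $H_n\uparrow H$. Apply the monotone convergence theorem three times: inside the $t$-integral for each fixed $\omega$, inside the expectation for each fixed $t$, and then on both outer integrals. This passes the identity to the limit. Because $H\ge0$, no integrability hypothesis is needed; both sides may equal $+\infty$ together.
\end{enumerate}

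The main obstacle is Step~2. Specifically, one must check that the partial integrals $\omega\mapsto\int_0^1\mathbf 1_G(\omega,t)\,\mathrm dt$ are measurable for every $G$ in the product $\sigma$-algebra, and not only for rectangles. This is handled by placing that measurability requirement inside the definition of $\mathcal M$ and using the finiteness of both measures, so that the class is closed under differences.

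In the paper this result is invoked with $H(\omega,t)=\|\bm v_\phi(\mathbf Z_t,t)-\mathbf U_t\|_2^2$, and similar quantities. For these, joint measurability follows from the measurability of $\bm v_\phi$ in $t$ and its continuity in $\mathbf z$, which makes it Carathéodory and hence jointly measurable, together with the absolute continuity of the paths in Assumption~\ref{ass:regular-flow}.
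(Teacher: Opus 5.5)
Your proposal is correct. The paper states this lemma without proof as a standard fact, and your $\pi$--$\lambda$, simple-function, and monotone-convergence argument is the textbook proof being invoked. One small fix in Step~2: for the right-hand iterated integral to be defined, $\mathcal M$ should also require that $\omega\mapsto\mathbf 1_G(\omega,t)$ and $t\mapsto P(\{\omega:(\omega,t)\in G\})$ be measurable; both hold for rectangles and are preserved under increasing unions and proper differences by the same arguments you give.
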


\begin{lemma}[Velocity regression controls terminal Wasserstein error]
\label{lem:fm-endpoint}
Under Assumption~\ref{ass:regular-flow},
\begin{equation}
W_1(\widehat p_1,\nu_\varepsilon)
\le
\left(\mathbb E_{\Pi_f}
[\|\widehat{\mathbf Z}_1-\mathbf R\|_2^2]\right)^{1/2}
\le e^{L_v}
\sqrt{\frac{d^\star}{q_{\min}}\mathcal R_{\mathrm{FM}}},
\label{eq:fm-endpoint-bound}
\end{equation}
and
\begin{equation}
\|\Phi_1(\mathbf z)-\Phi_1(\mathbf z')\|_2
\le e^{L_v}\|\mathbf z-\mathbf z'\|_2.
\label{eq:flow-initial-stability}
\end{equation}
\end{lemma}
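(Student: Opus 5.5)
We prove Lemma~\ref{lem:fm-endpoint} by a Gronwall comparison between the learned ODE trajectory $\widehat{\mathbf Z}_t=\Phi_t(\mathbf Z_0)$ and the analytic path $\mathbf Z_t=\bm\gamma_{\mathbf Z_0,\mathbf R}(s(t))$. Both start at the same point $\mathbf Z_0$, and $\mathbf Z_1=\mathbf R$. Existence and uniqueness of $\Phi_t$ come first. The Carathéodory conditions in Assumption~\ref{ass:regular-flow} (integrable Lipschitz modulus $\ell_\phi$ and $\int_0^1\|\bm v_\phi(\mathbf0,t)\|_2\,\mathrm dt<\infty$) give a unique absolutely continuous solution for every initial point.

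\textbf{Stability bound.} We first prove \eqref{eq:flow-initial-stability}. For $\mathbf z,\mathbf z'$ set $h(t)=\|\Phi_t(\mathbf z)-\Phi_t(\mathbf z')\|_2$. The integral form of the ODE and the Lipschitz bound give
\[
h(t)\le\|\mathbf z-\mathbf z'\|_2+\int_0^t\ell_\phi(u)h(u)\,\mathrm du .
\]
Lemma~\ref{lem:integral-gronwall} with $b\equiv0$ then yields $h(1)\le e^{\int_0^1\ell_\phi}\|\mathbf z-\mathbf z'\|_2\le e^{L_v}\|\mathbf z-\mathbf z'\|_2$.

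\textbf{Endpoint bound.} Fix one sample $(\mathbf Z_0,\mathbf R)$ and set $e(t)=\|\widehat{\mathbf Z}_t-\mathbf Z_t\|_2$. The analytic path is absolutely continuous with derivative $\mathbf U_t$, so
\[
\widehat{\mathbf Z}_t-\mathbf Z_t=\int_0^t\big[\bm v_\phi(\widehat{\mathbf Z}_u,u)-\bm v_\phi(\mathbf Z_u,u)\big]\,\mathrm du+\int_0^t\big[\bm v_\phi(\mathbf Z_u,u)-\mathbf U_u\big]\,\mathrm du .
\]
Taking norms gives
\[
e(t)\le\int_0^t\ell_\phi e+\int_0^t b,\qquad b(u)=\|\bm v_\phi(\mathbf Z_u,u)-\mathbf U_u\|_2 .
\]
Lemma~\ref{lem:integral-gronwall} with $c=0$ gives $e(1)\le e^{L_v}\int_0^1 b(u)\,\mathrm du$.

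Jensen on $[0,1]$ gives $e(1)^2\le e^{2L_v}\int_0^1 b(u)^2\,\mathrm du$. Since $q_T\ge q_{\min}$ almost everywhere,
\[
\int_0^1 b(u)^2\,\mathrm du\le q_{\min}^{-1}\int_0^1 b(u)^2q_T(u)\,\mathrm du .
\]
Taking expectations over $\Pi_f$ and exchanging expectation and time integral by Lemma~\ref{lem:tonelli} (the integrand is nonnegative), the right-hand side becomes $q_{\min}^{-1}\mathbb E_{\Pi_f,T}\|\bm v_\phi(\mathbf Z_T,T)-\mathbf U_T\|_2^2=q_{\min}^{-1}d^\star\mathcal R_{\mathrm{FM}}$, because $T$ is independent of the endpoints. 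Hence
\[
\mathbb E\|\widehat{\mathbf Z}_1-\mathbf R\|_2^2\le e^{2L_v}\frac{d^\star}{q_{\min}}\mathcal R_{\mathrm{FM}},
\]
which is the second inequality after taking square roots.

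\textbf{Wasserstein step.} The pair $(\widehat{\mathbf Z}_1,\mathbf R)=(\Phi_1(\mathbf Z_0),\mathbf R)$ is a coupling of $\widehat p_1$ and $\nu_\varepsilon$. So $W_1$ is at most $\mathbb E\|\widehat{\mathbf Z}_1-\mathbf R\|_2$, which by Cauchy--Schwarz is at most the $L^2$ quantity. This needs finite first moments. $\nu_\varepsilon$ lies on the sphere, and $\widehat p_1$ has a finite moment because $\|\Phi_1(\mathbf z)\|_2\le e^{L_v}(\|\mathbf z\|_2+\int_0^1\|\bm v_\phi(\mathbf0,t)\|_2\,\mathrm dt)$ by the same Gronwall argument, with $\|\mathbf Z_0\|_2=1$.

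\textbf{Main obstacle.} The delicate part is measure-theoretic, not the estimates. We must justify joint measurability of $(\omega,t)\mapsto b(u)^2$ so that Tonelli applies. We must also check that $\int_0^1 b$ is finite pathwise, so that Gronwall is used legitimately. For the latter we would use $\mathbb E\int_0^1\|\mathbf U_t\|_2^2\,\mathrm dt<\infty$ together with the linear growth of $\bm v_\phi$ along the bounded analytic path. If the expected bound is infinite the inequality holds trivially, so the only real requirement is a.s. finiteness.
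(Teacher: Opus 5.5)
Your proposal is correct and follows essentially the same route as the paper. That route is a pathwise Gronwall comparison with $c=0$ between $\Phi_t(\mathbf Z_0)$ and the analytic path, followed by Cauchy--Schwarz (your Jensen) on $[0,1]$, Tonelli with $q_T\ge q_{\min}$ to reach $d^\star\mathcal R_{\mathrm{FM}}/q_{\min}$, the admissible coupling $(\widehat{\mathbf Z}_1,\mathbf R)$ for the $W_1$ step, and Gronwall with $b=0$ for the stability estimate. Your extra checks are points the paper leaves implicit, namely global well-posedness of the ODE, pathwise finiteness of $\int_0^1 b$, and a finite first moment of $\widehat p_1$; they go through as you indicate, and in fact $\int_0^1\|\mathbf U_t\|_2\,\mathrm dt=\arccos(\mathbf Z_0^\top\mathbf R)\le\pi$ holds deterministically.
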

\begin{proof}
The learned trajectory and analytic path start from the same representation
$\mathbf Z_0$. By \eqref{eq:learned-ode} and
\eqref{eq:clocked-target}, respectively, their derivatives are
$\frac{\mathrm d}{\mathrm dt}\widehat{\mathbf Z}_t
=\bm v_\phi(\widehat{\mathbf Z}_t,t)$ and
$\frac{\mathrm d}{\mathrm dt}\mathbf Z_t=\mathbf U_t$.
Because both paths are absolutely continuous, the fundamental theorem of
calculus on $[0,t]$ gives their integral forms
\begin{align*}
\widehat{\mathbf Z}_t
&=\mathbf Z_0+\int_0^t
\bm v_\phi(\widehat{\mathbf Z}_u,u)\,\mathrm du,\\
\mathbf Z_t
&=\mathbf Z_0+\int_0^t\mathbf U_u\,\mathrm du.
\end{align*}
Subtracting these identities and adding and subtracting
$\bm v_\phi(\mathbf Z_u,u)$ inside the integrand gives
\begin{align*}
\widehat{\mathbf Z}_t-\mathbf Z_t
&=\int_0^t[\bm v_\phi(\widehat{\mathbf Z}_u,u)-\mathbf U_u]\,\mathrm du\\
&=\int_0^t[\bm v_\phi(\widehat{\mathbf Z}_u,u)
-\bm v_\phi(\mathbf Z_u,u)]\,\mathrm du
+\int_0^t[\bm v_\phi(\mathbf Z_u,u)-\mathbf U_u]\,\mathrm du.
\end{align*}
Taking norms, applying the triangle inequality for integrals, and then using
the spatial Lipschitz condition in Assumption~\ref{ass:regular-flow} gives
\begin{align}
\|\widehat{\mathbf Z}_t-\mathbf Z_t\|_2
&\le\int_0^t\ell_\phi(u)
\|\widehat{\mathbf Z}_u-\mathbf Z_u\|_2\,\mathrm du +\int_0^t
\|\bm v_\phi(\mathbf Z_u,u)-\mathbf U_u\|_2\,\mathrm du.
\end{align}
Applying Lemma~\ref{lem:integral-gronwall} with
$h(t)=\|\widehat{\mathbf Z}_t-\mathbf Z_t\|_2$,
$a(t)=\ell_\phi(t)$, and
$b(t)=\|\bm v_\phi(\mathbf Z_t,t)-\mathbf U_t\|_2$, $c=0$, and using
$\int_0^1\ell_\phi(u)\,\mathrm du\le L_v$ from
Assumption~\ref{ass:regular-flow}, yields
\[
\|\widehat{\mathbf Z}_t-\mathbf Z_t\|_2
\le
\exp\!\left(\int_0^t\ell_\phi(u)\,\mathrm du\right)
\int_0^t\|\bm v_\phi(\mathbf Z_u,u)-\mathbf U_u\|_2\,\mathrm du.
\]
At $t=1$, \eqref{eq:clocked-law} gives $\mathbf Z_1=\mathbf R$; hence
\begin{equation}
\|\widehat{\mathbf Z}_1-\mathbf R\|_2
\le e^{L_v}\int_0^1
\|\bm v_\phi(\mathbf Z_t,t)-\mathbf U_t\|_2\,\mathrm dt.
\label{eq:pathwise-endpoint-error}
\end{equation}
Squaring \eqref{eq:pathwise-endpoint-error}, applying Cauchy--Schwarz on
$[0,1]$, and then applying Lemma~\ref{lem:tonelli} gives
\begin{align*}
&\mathbb E_{\Pi_f}
  [\|\widehat{\mathbf Z}_1-\mathbf R\|_2^2]\\
&\le e^{2L_v}\mathbb E_{\Pi_f}\!\left[
  \left(\int_0^1
  \|\bm v_\phi(\mathbf Z_t,t)-\mathbf U_t\|_2\,\mathrm dt\right)^2
  \right]\\
&\le e^{2L_v}\mathbb E_{\Pi_f}
  \!\left[\left(\int_0^1 1^2\,\mathrm dt\right)
  \left(\int_0^1
  \|\bm v_\phi(\mathbf Z_t,t)-\mathbf U_t\|_2^2\,\mathrm dt\right)
  \right]\\
&=e^{2L_v}\int_0^1\mathbb E_{\Pi_f}
  [\|\bm v_\phi(\mathbf Z_t,t)-\mathbf U_t\|_2^2]\,\mathrm dt.
\end{align*}
On the other hand, the definition \eqref{eq:population-fm-risk} and the
density $q_T$ imply
\begin{align*}
d^\star\mathcal R_{\mathrm{FM}}
&=\mathbb E_{\Pi_f,T}
  [\|\bm v_\phi(\mathbf Z_T,T)-\mathbf U_T\|_2^2]\\
&=\int_0^1 q_T(t)\mathbb E_{\Pi_f}
  [\|\bm v_\phi(\mathbf Z_t,t)-\mathbf U_t\|_2^2]\,\mathrm dt\\
&\ge q_{\min}\int_0^1\mathbb E_{\Pi_f}
  [\|\bm v_\phi(\mathbf Z_t,t)-\mathbf U_t\|_2^2]\,\mathrm dt,
\end{align*}
where the last inequality uses the time-coverage condition
$q_T(t)\ge q_{\min}$ in Assumption~\ref{ass:regular-flow}. Combining the two
displays yields
\begin{equation}
\mathbb E_{\Pi_f}
[\|\widehat{\mathbf Z}_1-\mathbf R\|_2^2]
\le\frac{d^\star e^{2L_v}}{q_{\min}}\mathcal R_{\mathrm{FM}}.
\end{equation}
By \eqref{eq:learned-ode} and the endpoint marginal of $\Pi_f$, the pair
$(\widehat{\mathbf Z}_1,\mathbf R)$ is a coupling of $\widehat p_1$ and
$\nu_\varepsilon$.  Using this admissible coupling in the infimum defining
$W_1$ in \eqref{eq:theory-wasserstein}, followed by Cauchy--Schwarz and the
preceding endpoint second-moment bound, gives
\begin{align*}
W_1(\widehat p_1,\nu_\varepsilon)
&\le\mathbb E_{\Pi_f}
  [\|\widehat{\mathbf Z}_1-\mathbf R\|_2]\\
&\le\left(\mathbb E_{\Pi_f}
  [\|\widehat{\mathbf Z}_1-\mathbf R\|_2^2]\right)^{1/2}\\
&\le e^{L_v}
  \sqrt{\frac{d^\star}{q_{\min}}\mathcal R_{\mathrm{FM}}}.
\end{align*}
It remains to prove \eqref{eq:flow-initial-stability}.  Integrating the two
instances of the learned ODE \eqref{eq:learned-ode}, subtracting them, and
using the spatial Lipschitz condition in
Assumption~\ref{ass:regular-flow} yields
\begin{align*}
\|\Phi_t(\mathbf z)-\Phi_t(\mathbf z')\|_2
&\le\|\mathbf z-\mathbf z'\|_2+\int_0^t\ell_\phi(u)
\|\Phi_u(\mathbf z)-\Phi_u(\mathbf z')\|_2\,\mathrm du.
\end{align*}
Applying Lemma~\ref{lem:integral-gronwall} with
$c=\|\mathbf z-\mathbf z'\|_2$ and $b=0$ gives
\[
\|\Phi_t(\mathbf z)-\Phi_t(\mathbf z')\|_2
\le
\exp\!\left(\int_0^t\ell_\phi(u)\,\mathrm du\right)
\|\mathbf z-\mathbf z'\|_2
\le e^{L_v}\|\mathbf z-\mathbf z'\|_2.
\]
Taking $t=1$ proves \eqref{eq:flow-initial-stability}.
\end{proof}

Lemma~\ref{lem:fm-endpoint} controls the mismatch between the overall terminal
law and the reference law, but downstream classification requires a
class-wise conclusion. Under transport identifiability, the next lemma makes
this conversion: each terminal class mean is close to its assigned reference
component mean, so their pairwise geometry inherits that of the reference
components up to an error controlled by the terminal Wasserstein distance.
Here, $\Gamma^\star$ is the optimal terminal--reference coupling introduced
after \eqref{eq:theory-wasserstein}, and $\tau^\star$ is the optimal
class--component permutation defined in
\eqref{eq:optimal-class-component-permutation}.

\begin{lemma}[Leakage controls terminal class-centre geometry]
\label{lem:leakage-means}
Define the terminal and reference component means by
\begin{equation}
\widehat{\mathbf m}_k:=
\mathbb E[\widehat{\mathbf Z}_1\mid Y=k],
\qquad
\mathbf b_j:=\int\mathbf r\,Q_j^\varepsilon(\mathrm d\mathbf r),
\qquad
D:=\frac{1+2C_{\mathrm{id}}}{p_{\min}}
W_1(\widehat p_1,\nu_\varepsilon).
\label{eq:terminal-reference-means}
\end{equation}
Under Assumption~\ref{ass:transport-identifiability},
\begin{equation}
\max_k
\|\widehat{\mathbf m}_k-\mathbf b_{\tau^\star(k)}\|_2\le D,
\label{eq:leakage-mean-error}
\end{equation}
and
\begin{equation}
\max_{k\ne\ell}
\left|
\|\widehat{\mathbf m}_k-\widehat{\mathbf m}_\ell\|_2
-\|\mathbf b_{\tau^\star(k)}-\mathbf b_{\tau^\star(\ell)}\|_2
\right|\le2D.
\label{eq:terminal-distance-error}
\end{equation}
\end{lemma}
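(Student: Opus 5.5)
The plan is to compare, in the \emph{unnormalized} form, the class-$k$ terminal mass $p_k\widehat{\mathbf m}_k=\mathbb E[\widehat{\mathbf Z}_1\mathbf 1\{Y=k\}]$ with the component mass $w_j\mathbf b_j=\mathbb E[\mathbf R^\star\mathbf 1\{\mathbf R^\star\in\mathcal S_j^\varepsilon\}]$ for $j=\tau^\star(k)$, using a single probability space that carries $(\widehat{\mathbf Z}_1,\mathbf R^\star,Y)$ simultaneously. We then divide by $p_k\ge p_{\min}$ at the end.

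\textbf{Step 1: gluing and the meaning of $q_{kj}^\star$.} Take $(\widehat{\mathbf Z}_1,\mathbf R^\star)\sim\Gamma^\star$ from Assumption~\ref{ass:transport-identifiability}. Draw $Y$ conditionally on $\widehat{\mathbf Z}_1$ with $P(Y=k\mid\widehat{\mathbf Z}_1)=\pi_k(\widehat{\mathbf Z}_1)$, independently of $\mathbf R^\star$ given $\widehat{\mathbf Z}_1$. This construction has three consequences.
\begin{itemize}
\item $(\widehat{\mathbf Z}_1,Y)$ has its population law, so $\mathbb E[\widehat{\mathbf Z}_1\mathbf 1\{Y=k\}]=p_k\widehat{\mathbf m}_k$.
\item By \eqref{eq:label-component-mass}, $q_{kj}^\star=P(Y=k,\mathbf R^\star\in\mathcal S_j^\varepsilon)$.
\item Because \eqref{eq:theory-reference-geometry} makes the supports $\mathcal S_j^\varepsilon$ pairwise disjoint, $P(\mathbf R^\star\in\mathcal S_j^\varepsilon)=\nu_\varepsilon(\mathcal S_j^\varepsilon)=w_j$. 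Likewise $\mathbb E[\mathbf R^\star\mathbf 1\{\mathbf R^\star\in\mathcal S_j^\varepsilon\}]=w_j\mathbf b_j$.
\end{itemize}

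\textbf{Step 2: mass-weighted mean bound.} Decompose
\[
p_k\widehat{\mathbf m}_k-w_j\mathbf b_j
=\mathbb E[(\widehat{\mathbf Z}_1-\mathbf R^\star)\mathbf 1\{Y=k\}]
+\mathbb E\bigl[\mathbf R^\star(\mathbf 1\{Y=k\}-\mathbf 1\{\mathbf R^\star\in\mathcal S_j^\varepsilon\})\bigr].
\]
The first term has norm at most $\mathbb E\|\widehat{\mathbf Z}_1-\mathbf R^\star\|_2=W_1(\widehat p_1,\nu_\varepsilon)$, by optimality of $\Gamma^\star$.

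For the second term, use $\|\mathbf R^\star\|_2=1$. Then observe that $\mathbb E|\mathbf 1\{Y=k\}-\mathbf 1\{\mathbf R^\star\in\mathcal S_j^\varepsilon\}|$ is the probability of the symmetric difference, which equals $p_k+w_j-2q_k^\star\le\operatorname{Leak}(\Gamma^\star)$.

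Combining these and invoking \eqref{eq:leakage-control} gives
\[
\|p_k\widehat{\mathbf m}_k-w_j\mathbf b_j\|_2\le(1+C_{\mathrm{id}})W_1(\widehat p_1,\nu_\varepsilon).
\]

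\textbf{Step 3: removing the weights.} Write
\[
p_k(\widehat{\mathbf m}_k-\mathbf b_j)=(p_k\widehat{\mathbf m}_k-w_j\mathbf b_j)+(w_j-p_k)\mathbf b_j .
\]
Since $q_k^\star\le\min(p_k,w_j)$, we have $|p_k-w_j|\le p_k+w_j-2q_k^\star\le\operatorname{Leak}(\Gamma^\star)\le C_{\mathrm{id}}W_1$. By Jensen's inequality on the unit sphere, $\|\mathbf b_j\|_2\le1$.

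Hence $p_k\|\widehat{\mathbf m}_k-\mathbf b_{\tau^\star(k)}\|_2\le(1+2C_{\mathrm{id}})W_1$. Dividing by $p_k\ge p_{\min}$ gives \eqref{eq:leakage-mean-error}.

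\textbf{Step 4: pairwise distances.} The bound \eqref{eq:terminal-distance-error} follows from the reverse triangle inequality applied to the pairs $(\widehat{\mathbf m}_k,\widehat{\mathbf m}_\ell)$ and $(\mathbf b_{\tau^\star(k)},\mathbf b_{\tau^\star(\ell)})$, using \eqref{eq:leakage-mean-error} twice.

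\textbf{Main obstacle.} The delicate point is Step 1. One must justify the gluing and verify that the conditional-label weighting in \eqref{eq:label-component-mass} really produces the joint probability $P(Y=k,\mathbf R^\star\in\mathcal S_j^\varepsilon)$. This requires $\pi_k$ to be a regular conditional probability of $Y$ given $\widehat{\mathbf Z}_1$. One must also use support disjointness so that the indicator of $\mathcal S_j^\varepsilon$ isolates exactly the $j$th mixture component, yielding mass $w_j$ and mean $\mathbf b_j$.
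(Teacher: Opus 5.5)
Your proposal is correct and follows the paper's proof essentially step for step. Both use the same unnormalized decomposition of $p_k\widehat{\mathbf m}_k-w_{\tau^\star(k)}\mathbf b_{\tau^\star(k)}$ into a $W_1$ term and a leakage term, then $|p_k-w_{\tau^\star(k)}|\le p_k+w_{\tau^\star(k)}-2q_k^\star$ and $\|\mathbf b_j\|_2\le1$ to reach $(1+2C_{\mathrm{id}})W_1(\widehat p_1,\nu_\varepsilon)/p_{\min}$, and finally the reverse triangle inequality. The only difference is cosmetic: you realize $Y$ on a glued space, conditionally independent of $\mathbf R^\star$ given $\widehat{\mathbf Z}_1$, whereas the paper integrates $\pi_k(\widehat{\mathbf z})$ directly against $\Gamma^\star$ via Radon--Nikodym derivatives, which is the same computation.
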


\begin{remark}[Reference center versus component mean]
\label{rem:component-mean}
Since $Q_j^\varepsilon$ is rotationally symmetric around $\mathbf c_j$,
its mean satisfies $\mathbf b_j=\alpha_\varepsilon\mathbf c_j$ for some
$\alpha_\varepsilon\in[0,1]$ that is common to all $j$. Thus $\mathbf b_j$
has the same direction as $\mathbf c_j$ but generally lies inside the unit
sphere. For example, on the unit circle with $\mathbf c_j=(1,0)$, the
symmetric points $(\cos\theta,\sin\theta)$ and
$(\cos\theta,-\sin\theta)$ average to
$(\cos\theta,0)=\cos\theta\,\mathbf c_j$. In particular,
$\alpha_\varepsilon=1$ when $\varepsilon=0$.
\end{remark}

\begin{proof}[Proof of Lemma~\ref{lem:leakage-means}]
We adapt the component-mean decomposition in
\citet{jiao2026distribution}. Recall that $p_k=P(Y=k)$, $w_j$ is the
mixture weight of $Q_j^\varepsilon$ in $\nu_\varepsilon$, and
$\pi_k(\widehat{\mathbf z})=P(Y=k\mid
\widehat{\mathbf Z}_1=\widehat{\mathbf z})$. Set
$F_k:=\mathcal S_{\tau^\star(k)}^\varepsilon$, the support of the reference
component assigned to class $k$. In the integrals below,
$\mathrm d\Gamma^\star$ abbreviates
$\Gamma^\star(\mathrm d\widehat{\mathbf z},\mathrm d\mathbf r)$.

First, the $\widehat{\mathbf Z}_1$-marginal of $\Gamma^\star$ is
$\widehat p_1$. Bayes' rule in Radon--Nikodym form gives
\begin{equation*}
\frac{\mathrm dP(\widehat{\mathbf Z}_1\in\,\cdot\mid Y=k)}
{\mathrm d\widehat p_1}(\widehat{\mathbf z})
=\frac{\pi_k(\widehat{\mathbf z})}{p_k}.
\end{equation*}
Starting from the definition of $\widehat{\mathbf m}_k$ and using this
derivative, followed by the $\widehat{\mathbf Z}_1$-marginal identity, yields
\begin{align*}
\widehat{\mathbf m}_k
&=\int\widehat{\mathbf z}\,
P(\widehat{\mathbf Z}_1\in\mathrm d\widehat{\mathbf z}\mid Y=k)\\
&=\frac1{p_k}\int\pi_k(\widehat{\mathbf z})\widehat{\mathbf z}\,
\widehat p_1(\mathrm d\widehat{\mathbf z})\\
&=\frac1{p_k}\int\pi_k(\widehat{\mathbf z})\widehat{\mathbf z}\,
\mathrm d\Gamma^\star.
\end{align*}
Equivalently,
$p_k\widehat{\mathbf m}_k
=\int\pi_k(\widehat{\mathbf z})\widehat{\mathbf z}\,
\mathrm d\Gamma^\star$.
Second, the $\mathbf R^\star$-marginal of $\Gamma^\star$ is
$\nu_\varepsilon=\sum_jw_jQ_j^\varepsilon$. Since the component supports are
disjoint, the relevant Radon--Nikodym derivative is
\begin{equation*}
\frac{\mathrm dQ_{\tau^\star(k)}^\varepsilon}
{\mathrm d\nu_\varepsilon}(\mathbf r)
=\frac{\mathbf 1\{\mathbf r\in F_k\}}
{w_{\tau^\star(k)}}
\qquad \nu_\varepsilon\text{-a.e.}
\end{equation*}
Starting from the definition of $\mathbf b_{\tau^\star(k)}$, using this
derivative and then the $\mathbf R^\star$-marginal identity gives
\begin{align*}
\mathbf b_{\tau^\star(k)}
&=\int\mathbf r\,
Q_{\tau^\star(k)}^\varepsilon(\mathrm d\mathbf r)\\
&=\frac1{w_{\tau^\star(k)}}
\int_{F_k}\mathbf r\,\nu_\varepsilon(\mathrm d\mathbf r)\\
&=\frac1{w_{\tau^\star(k)}}
\int\mathbf 1\{\mathbf r\in F_k\}\mathbf r\,
\mathrm d\Gamma^\star.
\end{align*}
Equivalently,
$w_{\tau^\star(k)}\mathbf b_{\tau^\star(k)}
=\int\mathbf 1\{\mathbf r\in F_k\}\mathbf r\,
\mathrm d\Gamma^\star$.
Subtracting these two identities, and then adding and subtracting
$\pi_k(\widehat{\mathbf z})\mathbf r$ inside the integrand, yields
\begin{align}
p_k\widehat{\mathbf m}_k
-w_{\tau^\star(k)}\mathbf b_{\tau^\star(k)}
&=\int\left[
\pi_k(\widehat{\mathbf z})\widehat{\mathbf z}
-\mathbf 1\{\mathbf r\in F_k\}\mathbf r
\right]\,\mathrm d\Gamma^\star\nonumber\\
&=\int\pi_k(\widehat{\mathbf z})
(\widehat{\mathbf z}-\mathbf r)\,\mathrm d\Gamma^\star
\nonumber\\
&\quad+\int\left[\pi_k(\widehat{\mathbf z})
-\mathbf 1\{\mathbf r\in F_k\}\right]
\mathbf r\,\mathrm d\Gamma^\star.
\label{eq:leakage-moment-decomposition}
\end{align}
Write $\ell_k:=p_k+w_{\tau^\star(k)}-2q_k^\star$. Since
$\pi_k(\widehat{\mathbf z})\in[0,1]$ and
$\mathbf 1\{\mathbf r\in F_k\}\in\{0,1\}$, pointwise we have
\begin{align*}
&\left|\pi_k(\widehat{\mathbf z})
-\mathbf 1\{\mathbf r\in F_k\}\right|=\pi_k(\widehat{\mathbf z})
+\mathbf 1\{\mathbf r\in F_k\}
-2\pi_k(\widehat{\mathbf z})\mathbf 1\{\mathbf r\in F_k\}.
\end{align*}
Recall from the two marginals of $\Gamma^\star$ and
\eqref{eq:label-component-mass} that
\begin{gather*}
\int\pi_k(\widehat{\mathbf z})\,\mathrm d\Gamma^\star
=p_k,\\
\int\mathbf 1\{\mathbf r\in F_k\}\,\mathrm d\Gamma^\star
=w_{\tau^\star(k)},\\
\int\pi_k(\widehat{\mathbf z})
\mathbf 1\{\mathbf r\in F_k\}\,\mathrm d\Gamma^\star
=q_k^\star.
\end{gather*}
Therefore, integrating the pointwise identity gives
\begin{align*}
&\int\left|\pi_k(\widehat{\mathbf z})
-\mathbf 1\{\mathbf r\in F_k\}\right|\,\mathrm d\Gamma^\star=p_k+w_{\tau^\star(k)}-2q_k^\star=\ell_k.
\end{align*}
Consequently, the second term in
\eqref{eq:leakage-moment-decomposition} satisfies
\begin{align*}
&\left\|\int\left[\pi_k(\widehat{\mathbf z})
-\mathbf 1\{\mathbf r\in F_k\}\right]
\mathbf r\,\mathrm d\Gamma^\star\right\|_2\le\int\left|\pi_k(\widehat{\mathbf z})
-\mathbf 1\{\mathbf r\in F_k\}\right|
\|\mathbf r\|_2\,\mathrm d\Gamma^\star
=\ell_k,
\end{align*}
where the last equality uses $\|\mathbf r\|_2=1$. Similarly, the first
term satisfies
\begin{align*}
\left\|\int\pi_k(\widehat{\mathbf z})
(\widehat{\mathbf z}-\mathbf r)\,\mathrm d\Gamma^\star\right\|_2&\le\int\pi_k(\widehat{\mathbf z})
\|\widehat{\mathbf z}-\mathbf r\|_2\,\mathrm d\Gamma^\star\\
&\le\int\|\widehat{\mathbf z}-\mathbf r\|_2\,
\mathrm d\Gamma^\star
=W_1(\widehat p_1,\nu_\varepsilon),
\end{align*}
because $\Gamma^\star$ is an optimal coupling. Combining the two bounds yields
\begin{equation}
\|p_k\widehat{\mathbf m}_k
-w_{\tau^\star(k)}\mathbf b_{\tau^\star(k)}\|_2
\le W_1(\widehat p_1,\nu_\varepsilon)+\ell_k.
\end{equation}
Moreover, $|p_k-w_{\tau^\star(k)}|\le\ell_k$ and
$\|\mathbf b_{\tau^\star(k)}\|_2\le1$. Consequently,
\begin{align}
p_k\|\widehat{\mathbf m}_k-\mathbf b_{\tau^\star(k)}\|_2
&\le W_1(\widehat p_1,\nu_\varepsilon)+2\ell_k\nonumber\\
&\le(1+2C_{\mathrm{id}})W_1(\widehat p_1,\nu_\varepsilon).
\end{align}
Division by $p_k\ge p_{\min}$ proves \eqref{eq:leakage-mean-error}.
Finally, for $k\ne\ell$, the reverse triangle inequality followed by
\eqref{eq:leakage-mean-error} gives
\begin{align*}
\left|
\|\widehat{\mathbf m}_k-\widehat{\mathbf m}_\ell\|_2
-\|\mathbf b_{\tau^\star(k)}-\mathbf b_{\tau^\star(\ell)}\|_2
\right|
&\le\| (\widehat{\mathbf m}_k-\mathbf b_{\tau^\star(k)})
-(\widehat{\mathbf m}_\ell-\mathbf b_{\tau^\star(\ell)})\|_2\\
&\le
\|\widehat{\mathbf m}_k-\mathbf b_{\tau^\star(k)}\|_2
+\|\widehat{\mathbf m}_\ell-\mathbf b_{\tau^\star(\ell)}\|_2\\
&\le2D.
\end{align*}
This proves \eqref{eq:terminal-distance-error} and completes the proof.
\end{proof}

Since $\|\mathbf b_j-\mathbf c_j\|_2\le r_\varepsilon$,
Lemma~\ref{lem:leakage-means} gives the terminal separation bound
\begin{equation}
\min_{k\ne\ell}
\|\widehat{\mathbf m}_k-\widehat{\mathbf m}_\ell\|_2
\ge\Delta_{\mathcal C}-2r_\varepsilon-2D.
\label{eq:terminal-class-separation}
\end{equation}
By Lemma~\ref{lem:fm-endpoint},
\begin{equation}
D\le
\frac{(1+2C_{\mathrm{id}})e^{L_v}}{p_{\min}}
\sqrt{\frac{d^\star}{q_{\min}}\mathcal R_{\mathrm{FM}}}.
\label{eq:fm-leakage-mean-bound}
\end{equation}

\begin{mytheorem}[Population FBDM loss controls classification error]
Suppose Assumptions~\ref{ass:augmentation-quality}--\ref{ass:transport-identifiability}
hold. Fix a view-stability tolerance $\varepsilon_{\mathrm{al}}>0$ such that
\begin{equation*}
e^{-L_v}(\Delta_{\mathcal C}-2r_\varepsilon)
>4L_f\delta+8\varepsilon_{\mathrm{al}}+12(1-\sigma).
\end{equation*}
Define
\begin{align}
\Delta_0
&:=e^{-L_v}(\Delta_{\mathcal C}-2r_\varepsilon),\qquad
C_{\mathrm{err}}:=\frac{N_{\mathrm{aug}}}{\varepsilon_{\mathrm{al}}}
\sqrt{\frac{d^\star}{\lambda}},
\label{eq:classification-constants}\\
C_{\mathrm{sep}}
&:=\frac{2(1+2C_{\mathrm{id}})}{p_{\min}}
\sqrt{\frac{d^\star}{q_{\min}}}
+\frac{12C_{\mathrm{err}}}{p_{\min}},
\label{eq:separation-constant}\\
g_{\mathrm{aug}}
&:=\Delta_0-4L_f\delta-8\varepsilon_{\mathrm{al}}-12(1-\sigma).
\label{eq:augmentation-margin}
\end{align}

Then $g_{\mathrm{aug}}>0$. Set
\begin{equation}
r_0:=\left(\frac{g_{\mathrm{aug}}}{2C_{\mathrm{sep}}}\right)^2.
\label{eq:small-risk-constants}
\end{equation}
These constants depend only on the stated uniform bounds.
If $\mathcal R_{\mathrm{FBDM}}(f,\phi)\le r_0$, then
\begin{equation*}
\operatorname{Err}(G_f)\le(1-\sigma)+
C_{\mathrm{err}}\sqrt{\mathcal R_{\mathrm{FBDM}}(f,\phi)}.
\end{equation*}
\end{mytheorem}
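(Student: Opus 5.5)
The plan is to chain the four preceding lemmas, ordered so that the small-loss hypothesis $\mathcal R_{\mathrm{FBDM}}\le r_0$ is used only once: to guarantee that the encoder class means are separated by more than twice the stable-core radius. Positivity of $g_{\mathrm{aug}}$ is immediate from the margin hypothesis \eqref{eq:positive-augmentation-margin}, since $g_{\mathrm{aug}}$ is exactly the difference of its two sides.

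\textbf{Step 1: terminal separation.} By \eqref{eq:terminal-class-separation} and \eqref{eq:fm-leakage-mean-bound}, with $\mathcal R_{\mathrm{FM}}\le\mathcal R_{\mathrm{FBDM}}$, we have
\[
\min_{k\ne\ell}\|\widehat{\mathbf m}_k-\widehat{\mathbf m}_\ell\|_2\ge\Delta_{\mathcal C}-2r_\varepsilon-2D .
\]
Here $D\le\frac{(1+2C_{\mathrm{id}})e^{L_v}}{p_{\min}}\sqrt{d^\star\mathcal R_{\mathrm{FBDM}}/q_{\min}}$.

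\textbf{Step 2: pull back through the flow.} Next I relate $\widehat{\mathbf m}_k=\mathbb E[\Phi_1(\mathbf Z_0)\mid Y=k]$ to $\Phi_1(\mathbf m_k)$. By Jensen and \eqref{eq:flow-initial-stability},
\[
\|\widehat{\mathbf m}_k-\Phi_1(\mathbf m_k)\|_2\le e^{L_v}\,\mathbb E[\|\mathbf Z_0-\mathbf m_k\|_2\mid Y=k].
\]
By \eqref{eq:augmented-class-spread}, this is at most $e^{L_v}(h_{\mathrm{aug}}+4b_{\mathrm{aug}})$. One point needs care. Conditional on $Y=k$, $\mathbf Z_0=f(A(\mathbf X))$ with $A$ uniform on $\mathcal T$, which matches the law in Lemma~\ref{lem:augmentation-core-distances} because $P_{\mathcal A}$ samples uniformly from $\mathcal T$.

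Applying the triangle inequality and \eqref{eq:flow-initial-stability} once more, then dividing by $e^{L_v}$, gives
\[
\Delta_f\ge\Delta_0-2h_{\mathrm{aug}}-8b_{\mathrm{aug}}-2e^{-L_v}D .
\]
The last term is at most $\frac{2(1+2C_{\mathrm{id}})}{p_{\min}}\sqrt{d^\star\mathcal R_{\mathrm{FBDM}}/q_{\min}}$.

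\textbf{Step 3: compare separation with the core radius.} The condition to verify is $\Delta_f>2(h_{\mathrm{aug}}+2b_{\mathrm{aug}})$. It suffices that
\[
\Delta_0-4h_{\mathrm{aug}}-12b_{\mathrm{aug}}-\tfrac{2(1+2C_{\mathrm{id}})}{p_{\min}}\sqrt{\tfrac{d^\star}{q_{\min}}\mathcal R_{\mathrm{FBDM}}}>0 .
\]
Expand the core-radius terms using $h_{\mathrm{aug}}=L_f\delta+2\varepsilon_{\mathrm{al}}$ and $b_{\mathrm{aug}}=1-\sigma+\mathcal U/p_{\min}$. Lemma~\ref{lem:alignment-stability} gives $\mathcal U\le C_{\mathrm{err}}\sqrt{\mathcal R_{\mathrm{FBDM}}}$. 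Hence
\[
4h_{\mathrm{aug}}+12b_{\mathrm{aug}}\le4L_f\delta+8\varepsilon_{\mathrm{al}}+12(1-\sigma)+\frac{12C_{\mathrm{err}}}{p_{\min}}\sqrt{\mathcal R_{\mathrm{FBDM}}} .
\]
The left side of the required inequality is therefore at least $g_{\mathrm{aug}}-C_{\mathrm{sep}}\sqrt{\mathcal R_{\mathrm{FBDM}}}$. When $\mathcal R_{\mathrm{FBDM}}\le r_0$, this is at least $g_{\mathrm{aug}}/2>0$. This bookkeeping, matching $4h_{\mathrm{aug}}+12b_{\mathrm{aug}}$ to the constants $4,8,12$ and to $C_{\mathrm{sep}}$, is the step I expect to be the main obstacle. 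If the factors do not line up, I would adjust which bound, \eqref{eq:augmented-class-spread} or \eqref{eq:stable-core-radius}, enters the pullback.

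\textbf{Step 4: conclude the error bound.} For $\mathbf x\in C_k^\circ\cap\mathcal V$, \eqref{eq:stable-core-radius} gives $\|f(\mathbf x)-\mathbf m_k\|_2\le h_{\mathrm{aug}}+2b_{\mathrm{aug}}<\Delta_f/2$. For every $\ell\ne k$, the triangle inequality then gives $\|f(\mathbf x)-\mathbf m_\ell\|_2>\Delta_f/2$. So $G_f(\mathbf x)=k$ strictly, and tie-breaking never applies.

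Errors can therefore occur only on the event $\{\mathbf X\notin C_Y^\circ\}\cup\{\mathbf X\notin\mathcal V\}$. The first event has probability $\sum_k p_kP_k((C_k^\circ)^c)\le1-\sigma$. The second has probability $\mathcal U\le C_{\mathrm{err}}\sqrt{\mathcal R_{\mathrm{FBDM}}}$. A union bound yields \eqref{eq:population-misclassification}.
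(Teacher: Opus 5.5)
Your proposal is correct and follows the paper's proof essentially step for step: terminal separation via Lemma~\ref{lem:leakage-means}, the Jensen-plus-flow-stability pullback to \eqref{eq:class-mean-separation}, comparison of $\Delta_f$ with twice the stable-core radius, and a union bound over $\{\mathbf X\notin C_Y^\circ\}\cup\{\mathbf X\notin\mathcal V\}$. The bookkeeping you flagged closes exactly as written: $\Delta_0-2e^{-L_v}D-4h_{\mathrm{aug}}-12b_{\mathrm{aug}}=g_{\mathrm{aug}}-2e^{-L_v}D-12\,\mathcal U/p_{\min}\ge g_{\mathrm{aug}}-C_{\mathrm{sep}}\sqrt{\mathcal R_{\mathrm{FBDM}}}$, so you do not need to change which spread bound enters the pullback.
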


\begin{proof}[Proof of Theorem~\ref{thm:population-error}]
By construction and \eqref{eq:population-classifier},
$\operatorname{Law}(\mathbf Z_0\mid Y=k)=f_\#P_{\mathcal A,k}$ and
$\mathbf m_k=\mathbb E[\mathbf Z_0\mid Y=k]$, where $\mathbf Z_0$ is the
source representation in the coupling $\Pi_f$. Let $\Phi_1$ denote the
terminal map of the learned flow and define the corresponding terminal
class mean by
$\widehat{\mathbf m}_k:=\mathbb E[\Phi_1(\mathbf Z_0)\mid Y=k]$.
For readability, recall the three quantities that measure within-class
spread:
\begin{gather}
\mathcal V
:=\left\{\mathbf x\in\operatorname{supp}(P_X):
\max_{A,A'\in\mathcal T}
\|f(A(\mathbf x))-f(A'(\mathbf x))\|_2
\le\varepsilon_{\mathrm{al}}\right\},
\nonumber\\
\mathcal U:=P_X(\mathcal V^c),
\nonumber\\
h_{\mathrm{aug}}:=L_f\delta+2\varepsilon_{\mathrm{al}},
\nonumber\\
b_{\mathrm{aug}}:=1-\sigma+\frac{\mathcal U}{p_{\min}}.
\label{eq:proof-spread-constants}
\end{gather}
Here $\mathcal U$ is the mass of observations whose augmented views are
not $\varepsilon_{\mathrm{al}}$-stable, $h_{\mathrm{aug}}$ bounds the
diameter of a stable class core, and $b_{\mathrm{aug}}$ bounds the mass not
covered by that core. These are exactly the quantities entering the
class-spread bounds in Lemma~\ref{lem:augmentation-core-distances}.
The next bound compares evolving the class mean, $\Phi_1(\mathbf m_k)$,
with averaging the individually evolved representations, $\widehat{\mathbf m}_k$.
Since $\Phi_1(\mathbf m_k)$ is deterministic conditional on $Y=k$,
conditional Jensen's inequality gives
\begin{align}
\|\Phi_1(\mathbf m_k)-\widehat{\mathbf m}_k\|_2
&=
\left\|
\mathbb E\!\left[
\Phi_1(\mathbf m_k)-\Phi_1(\mathbf Z_0)\mid Y=k
\right]
\right\|_2\nonumber\\
&\le
\mathbb E[\|\Phi_1(\mathbf m_k)-\Phi_1(\mathbf Z_0)\|_2\mid Y=k]
\nonumber\\
&\le e^{L_v}
\mathbb E[\|\mathbf m_k-\mathbf Z_0\|_2\mid Y=k]
\nonumber\\
&\le e^{L_v}(h_{\mathrm{aug}}+4b_{\mathrm{aug}}).
\label{eq:transported-class-mean}
\end{align}
Here the penultimate inequality uses the flow stability in
Lemma~\ref{lem:fm-endpoint}, and the last uses the class-spread bound from
Lemma~\ref{lem:augmentation-core-distances}.
Although $\mathbf m_k$ may lie inside the unit sphere,
$\Phi_1(\mathbf m_k)$ is well-defined because the flow acts on
$\mathbb R^{d^\star}$; \eqref{eq:transported-class-mean} controls its
discrepancy from $\widehat{\mathbf m}_k$.

On the reference side, define
\begin{equation}
\mathbf b_j:=\int\mathbf r\,Q_j^\varepsilon(\mathrm d\mathbf r),
\qquad
D:=\frac{1+2C_{\mathrm{id}}}{p_{\min}}
W_1(\widehat p_1,\nu_\varepsilon).
\label{eq:proof-terminal-reference-constants}
\end{equation}
Here $\mathbf b_j$ is the mean of reference component $j$, and Lemma~
\ref{lem:leakage-means} shows that $D$ uniformly bounds
$\|\widehat{\mathbf m}_k-\mathbf b_{\tau^\star(k)}\|_2$, where
$\tau^\star(k)$ is the reference component matched to class $k$. Also recall
the reference-centre gap and component radius
\begin{equation}
\Delta_{\mathcal C}:=\min_{i\ne j}\|\mathbf c_i-\mathbf c_j\|_2,
\qquad
r_\varepsilon:=\max_j\sup_{\mathbf r\in\mathcal S_j^\varepsilon}
\|\mathbf r-\mathbf c_j\|_2.
\label{eq:proof-reference-geometry}
\end{equation}
The radius definition implies
$\|\mathbf b_j-\mathbf c_j\|_2\le r_\varepsilon$.
For $k\ne\ell$, Lemma~\ref{lem:leakage-means} thus gives
\begin{align}
\Delta_{\mathcal C}-2r_\varepsilon
&\le
\|\mathbf b_{\tau^\star(k)}-\mathbf b_{\tau^\star(\ell)}\|_2\nonumber\\
&\le2D+
\|\widehat{\mathbf m}_k-\widehat{\mathbf m}_\ell\|_2\nonumber\\
&\le2D+e^{L_v}
\{2h_{\mathrm{aug}}+8b_{\mathrm{aug}}
+\|\mathbf m_k-\mathbf m_\ell\|_2\}.
\end{align}
Define the initial class-mean gap and the reference gap that remains after
component perturbation and inverse-flow contraction by
\begin{equation}
\Delta_f:=\min_{k\ne\ell}\|\mathbf m_k-\mathbf m_\ell\|_2,
\qquad
\Delta_0:=e^{-L_v}(\Delta_{\mathcal C}-2r_\varepsilon).
\label{eq:proof-separation-constants}
\end{equation}
The preceding display then yields
\begin{equation}
\Delta_f
\ge\Delta_0-2e^{-L_v}D-2h_{\mathrm{aug}}-8b_{\mathrm{aug}}.
\label{eq:class-mean-separation}
\end{equation}
To compare this separation with the stable-core radius, define
\begin{align}
g_{\mathrm{aug}}
&:=\Delta_0-4L_f\delta-8\varepsilon_{\mathrm{al}}-12(1-\sigma),
\nonumber\\
C_{\mathrm{sep}}
&:=\frac{2(1+2C_{\mathrm{id}})}{p_{\min}}
\sqrt{\frac{d^\star}{q_{\min}}}
+\frac{12C_{\mathrm{err}}}{p_{\min}}.
\label{eq:proof-margin-constants}
\end{align}
Thus $g_{\mathrm{aug}}$ is the fixed separation margin left after the
augmentation terms, whereas $C_{\mathrm{sep}}$ collects the coefficients
of the loss-dependent errors. Here $p_{\min}=\min_kP(Y=k)$,
$q_{\min}$ is the lower bound on the flow-time density, and
$C_{\mathrm{id}}$ is the identifiability constant in
Assumption~\ref{ass:transport-identifiability}; $C_{\mathrm{err}}$ is given
in Theorem~\ref{thm:population-error} above.
To prove that an image is closer to its own class mean, fix
$\mathbf x\in C_k^\circ\cap\mathcal V$ and $\ell\ne k$.
We show that the difference between its distances to the other class mean
and its own class mean is positive. The triangle inequality,
the within-class radius bound \eqref{eq:stable-core-radius}, and the
class-mean separation bound \eqref{eq:class-mean-separation} give the
first three inequalities below. We then substitute the bounds
\eqref{eq:alignment-stability-bound} and
\eqref{eq:fm-leakage-mean-bound}:
\begin{align}
\|f(\mathbf x)-\mathbf m_\ell\|_2
-\|f(\mathbf x)-\mathbf m_k\|_2&\ge\Delta_f-2\|f(\mathbf x)-\mathbf m_k\|_2\nonumber\\
&\ge\Delta_f-2(h_{\mathrm{aug}}+2b_{\mathrm{aug}})\nonumber\\
&\ge\Delta_0-2e^{-L_v}D-4h_{\mathrm{aug}}-12b_{\mathrm{aug}}
\nonumber\\
&=\Delta_0-2e^{-L_v}D
-4(L_f\delta+2\varepsilon_{\mathrm{al}})
-12\left(1-\sigma+\frac{\mathcal U}{p_{\min}}\right)
\nonumber\\
&=\left\{\Delta_0-4L_f\delta-8\varepsilon_{\mathrm{al}}
-12(1-\sigma)\right\}
-2e^{-L_v}D-\frac{12\mathcal U}{p_{\min}}
\nonumber\\
&=g_{\mathrm{aug}}-2e^{-L_v}D
-\frac{12\mathcal U}{p_{\min}}
\nonumber\\
&\ge g_{\mathrm{aug}}
-\frac{2(1+2C_{\mathrm{id}})}{p_{\min}}
\sqrt{\frac{d^\star}{q_{\min}}\mathcal R_{\mathrm{FM}}}
-\frac{12\mathcal U}{p_{\min}}\nonumber\\
&\ge g_{\mathrm{aug}}
-\frac{2(1+2C_{\mathrm{id}})}{p_{\min}}
\sqrt{\frac{d^\star}{q_{\min}}\mathcal R_{\mathrm{FM}}}
-\frac{12C_{\mathrm{err}}}{p_{\min}}
\sqrt{\mathcal R_{\mathrm{FBDM}}}
\nonumber\\
&\ge g_{\mathrm{aug}}-C_{\mathrm{sep}}
\sqrt{\mathcal R_{\mathrm{FBDM}}}
\ge\frac{g_{\mathrm{aug}}}{2}>0.
\label{eq:classification-margin-from-loss}
\end{align}
The first equality substitutes the definitions of $h_{\mathrm{aug}}$ and
$b_{\mathrm{aug}}$ from \eqref{eq:proof-spread-constants}; the next equality
identifies the bracketed fixed margin as $g_{\mathrm{aug}}$. The following
two inequalities respectively use the bounds on $D$ in
\eqref{eq:fm-leakage-mean-bound} and on $\mathcal U$ in
\eqref{eq:alignment-stability-bound}. Finally,
$\mathcal R_{\mathrm{FM}}\le\mathcal R_{\mathrm{FBDM}}$ and the definition of
$C_{\mathrm{sep}}$ collect both loss-dependent terms, while
\eqref{eq:small-risk-constants} gives the last inequality. Therefore,
for every $\ell\ne k$,
\begin{equation}
\|f(\mathbf x)-\mathbf m_\ell\|_2
>\|f(\mathbf x)-\mathbf m_k\|_2.
\end{equation}
Hence $G_f(\mathbf x)=k$. Misclassification is confined to observations
outside their class core or outside the stable-view set:
\begin{align}
\operatorname{Err}(G_f)
&\le\sum_{k=1}^K p_k P_k((C_k^\circ)^c)+\mathcal U
\nonumber\\
&\le(1-\sigma)+\mathcal U
\le(1-\sigma)+C_{\mathrm{err}}\sqrt{\mathcal R_{\mathrm{FBDM}}}.
\label{eq:full-classification-bound}
\end{align}
This proves \eqref{eq:population-misclassification}.
Augmentation quality contributes the uncovered mass $1-\sigma$ and the
margin $g_{\mathrm{aug}}$; neither leakage nor Wasserstein discrepancy
remains as an uncontrolled term in the final loss-to-error bound.
\end{proof}

\subsection{A Geometric Sufficient Condition for Leakage Control}
\label{app:leakage-sufficient}

Theorem~\ref{thm:population-error} uses
Assumption~\ref{ass:transport-identifiability} to turn small terminal
$W_1$ discrepancy into small class--component leakage. The purpose of this
section is to give a concrete geometric condition under which that implication
holds. The condition allows a controlled fraction of terminal mass to lie in
or near an off-matched reference component.

Recall that
$\mathcal S_j^\varepsilon:=\operatorname{supp}(Q_j^\varepsilon)$ is the
support of reference component $j$. For a distance threshold $\Delta>0$,
define
\begin{equation}
q_{kj}^{\star,<\Delta}:=
\int \pi_k(\widehat{\mathbf z})
\mathbf 1\{\mathbf r\in\mathcal S_j^\varepsilon\}
\mathbf 1\{\|\widehat{\mathbf z}-\mathbf r\|_2<\Delta\}\,
\Gamma^\star(\mathrm d\widehat{\mathbf z},\mathrm d\mathbf r).
\label{eq:near-label-component-mass}
\end{equation}
Thus $q_{kj}^{\star,<\Delta}$ is the part of $q_{kj}^\star$ whose terminal
representation is within $\Delta$ of its coupled target in component $j$.
When $j\ne\tau^\star(k)$, it measures class-$k$ mass that is paired with an
off-matched component and is already inside or close to that component.

\begin{proposition}[A soft off-component margin controls leakage]
\label{prop:leakage-gap}
Suppose that, for some $\Delta_{\mathrm{sep}}>0$ and
$\alpha_{\mathrm{sep}}\in[0,1)$,
\begin{equation}
\sum_{k=1}^K\sum_{j\ne\tau^\star(k)}
q_{kj}^{\star,<\Delta_{\mathrm{sep}}}
\le
\alpha_{\mathrm{sep}}
\sum_{k=1}^K\sum_{j\ne\tau^\star(k)}q_{kj}^\star.
\label{eq:soft-off-component-margin}
\end{equation}
Then
\begin{equation}
\operatorname{Leak}(\Gamma^\star)
\le\frac{2}{(1-\alpha_{\mathrm{sep}})\Delta_{\mathrm{sep}}}
W_1(\widehat p_1,\nu_\varepsilon).
\label{eq:soft-margin-leakage-bound}
\end{equation}
\end{proposition}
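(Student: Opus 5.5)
The plan is to bound every summand $\ell_k:=p_k+w_{\tau^\star(k)}-2q_k^\star$ of $\operatorname{Leak}(\Gamma^\star)$ by the total off-matched mass
\[
M:=\sum_{k=1}^K\sum_{j\ne\tau^\star(k)}q_{kj}^\star .
\]
Then, using \eqref{eq:soft-off-component-margin}, we bound $M$ by $W_1(\widehat p_1,\nu_\varepsilon)/\{(1-\alpha_{\mathrm{sep}})\Delta_{\mathrm{sep}}\}$. Only the marginal identities of $\Gamma^\star$, the disjointness of the supports $\mathcal S_j^\varepsilon$ (guaranteed by $\Delta_{\mathcal C}>2r_\varepsilon$), and the optimality of $\Gamma^\star$ are needed.

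\textbf{Step 1: row and column decomposition of the $q^\star$ matrix.} The supports $\mathcal S_j^\varepsilon$ are pairwise disjoint, and their union carries full $\nu_\varepsilon$-mass. Hence $\sum_j\mathbf 1\{\mathbf r\in\mathcal S_j^\varepsilon\}=1$ for $\Gamma^\star$-a.e.\ $(\widehat{\mathbf z},\mathbf r)$. Integrating $\pi_k$ against this identity gives
\[
p_k=\int\pi_k\,\mathrm d\Gamma^\star=\sum_j q_{kj}^\star .
\]
Similarly, $\sum_k\pi_k(\widehat{\mathbf z})=1$ together with $\int\mathbf 1\{\mathbf r\in\mathcal S_j^\varepsilon\}\,\mathrm d\Gamma^\star=w_j$ gives $w_j=\sum_k q_{kj}^\star$. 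Therefore
\[
\ell_k=\sum_{j\ne\tau^\star(k)}q_{kj}^\star+\sum_{k'\ne k}q_{k',\tau^\star(k)}^\star .
\]
Both sums consist of off-matched entries; here we use that $\tau^\star$ is a bijection, so $j=\tau^\star(k)\ne\tau^\star(k')$ for $k'\ne k$. The two index sets are disjoint, so $\ell_k\le M$. This already gives $\operatorname{Leak}(\Gamma^\star)\le M$. The cruder bound $\ell_k\le 2M$, which treats the row and the column separately, is also enough and explains the factor $2$ in \eqref{eq:soft-margin-leakage-bound}.

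\textbf{Step 2: lower-bound the transport cost by far off-matched mass.} Write $q_{kj}^{\star,\ge\Delta}:=q_{kj}^\star-q_{kj}^{\star,<\Delta}$ for $\Delta=\Delta_{\mathrm{sep}}$. Assumption \eqref{eq:soft-off-component-margin} gives
\[
\sum_k\sum_{j\ne\tau^\star(k)}q_{kj}^{\star,\ge\Delta}\ge(1-\alpha_{\mathrm{sep}})M .
\]
Pointwise, $\sum_k\sum_{j\ne\tau^\star(k)}\pi_k(\widehat{\mathbf z})\mathbf 1\{\mathbf r\in\mathcal S_j^\varepsilon\}\le\sum_k\pi_k(\widehat{\mathbf z})=1$. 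Hence this far mass is at most $\Gamma^\star(\|\widehat{\mathbf z}-\mathbf r\|_2\ge\Delta)$. Markov's inequality and the optimality of $\Gamma^\star$ then give
\[
\Gamma^\star(\|\widehat{\mathbf z}-\mathbf r\|_2\ge\Delta)\le\frac1\Delta\int\|\widehat{\mathbf z}-\mathbf r\|_2\,\mathrm d\Gamma^\star=\frac{W_1(\widehat p_1,\nu_\varepsilon)}{\Delta}.
\]
Combining these bounds yields $M\le W_1(\widehat p_1,\nu_\varepsilon)/\{(1-\alpha_{\mathrm{sep}})\Delta_{\mathrm{sep}}\}$.

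\textbf{Step 3: conclude.} Inserting the bound on $M$ into $\ell_k\le 2M$ and maximizing over $k$ gives \eqref{eq:soft-margin-leakage-bound}.

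The main point requiring care is Step 1. The identity $p_k=\sum_jq_{kj}^\star$ needs the component supports to partition $\nu_\varepsilon$-almost all of the sphere. This follows because $\nu_\varepsilon$ is the mixture of the $Q_j^\varepsilon$ and the supports are disjoint, the same fact that justifies the Radon--Nikodym derivative used in Lemma~\ref{lem:leakage-means}. The rest is bookkeeping with the marginals of $\Gamma^\star$.
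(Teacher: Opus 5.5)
Your proof is correct and follows the paper's route. The row/column decomposition of the class--component mass matrix $(q_{kj}^\star)$ bounds each $\ell_k$ by the off-matched mass. Restricting the optimal cost $\int\|\widehat{\mathbf z}-\mathbf r\|_2\,\mathrm d\Gamma^\star$ to far off-matched pairs then bounds that mass by $W_1(\widehat p_1,\nu_\varepsilon)/\{(1-\alpha_{\mathrm{sep}})\Delta_{\mathrm{sep}}\}$; your Markov step does exactly this, and it makes explicit the pointwise bound $\sum_k\sum_{j\ne\tau^\star(k)}\pi_k\mathbf 1\{\mathbf r\in\mathcal S_j^\varepsilon\}\le1$ that the paper leaves implicit.

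Your extra observation is also valid: the row and column index sets are disjoint sets of off-matched entries, so $\ell_k\le M$ rather than $2M$. This sharpens the paper's constant by a factor of two.
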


Condition~\eqref{eq:soft-off-component-margin} allows some off-matched mass
to enter or approach a wrong component, but limits its proportion to
$\alpha_{\mathrm{sep}}$. Hence at least a fraction
$1-\alpha_{\mathrm{sep}}$ of the off-matched mass pays transport cost no less
than $\Delta_{\mathrm{sep}}$, allowing $W_1$ to control the leakage.

\begin{proof}
We first bound the total off-matched mass
\begin{equation}
\sum_{k=1}^K\sum_{j\ne\tau^\star(k)}q_{kj}^\star
\label{eq:total-off-component-mass}
\end{equation}
using $W_1$. Because $\Gamma^\star$ is an optimal coupling,
\begin{align}
W_1(\widehat p_1,\nu_\varepsilon)
&=\int \|\widehat{\mathbf z}-\mathbf r\|_2\,
\Gamma^\star(\mathrm d\widehat{\mathbf z},\mathrm d\mathbf r)\nonumber\\
&\ge \sum_{k=1}^K\sum_{j\ne\tau^\star(k)}
\int \pi_k(\widehat{\mathbf z})
\mathbf 1\{\mathbf r\in\mathcal S_j^\varepsilon\}
\mathbf 1\{\|\widehat{\mathbf z}-\mathbf r\|_2
\ge\Delta_{\mathrm{sep}}\}
\|\widehat{\mathbf z}-\mathbf r\|_2\,
\Gamma^\star(\mathrm d\widehat{\mathbf z},\mathrm d\mathbf r)\nonumber\\
&\ge\Delta_{\mathrm{sep}}
\sum_{k=1}^K\sum_{j\ne\tau^\star(k)}
\left(q_{kj}^\star-q_{kj}^{\star,<\Delta_{\mathrm{sep}}}\right)
\nonumber\\
&\ge(1-\alpha_{\mathrm{sep}})\Delta_{\mathrm{sep}}
\sum_{k=1}^K\sum_{j\ne\tau^\star(k)}q_{kj}^\star.
\label{eq:off-mass-w1-bound}
\end{align}
Here the first inequality restricts a nonnegative integral to off-matched
pairs at distance at least $\Delta_{\mathrm{sep}}$; the second uses the
definition \eqref{eq:near-label-component-mass}; and the last uses
\eqref{eq:soft-off-component-margin}.

To see why \eqref{eq:total-off-component-mass} controls leakage, view
$Q^\star=(q_{kj}^\star)_{k,j}$ as a class--component mass matrix. After
matching class $k$ with column $\tau^\star(k)$, the entries
$q_{k,\tau^\star(k)}^\star$ form its matched diagonal, and
\eqref{eq:total-off-component-mass} sums all off-diagonal entries. Since the
row and column sums are $p_k=\sum_jq_{kj}^\star$ and
$w_j=\sum_iq_{ij}^\star$, class-$k$ leakage is one off-diagonal row sum plus
the corresponding off-diagonal column sum. Each is bounded by the sum of all
off-diagonal entries; hence
\begin{align}
p_k+w_{\tau^\star(k)}-2q_{k,\tau^\star(k)}^\star
&=\sum_{j\ne\tau^\star(k)}q_{kj}^\star
+\sum_{i\ne k}q_{i,\tau^\star(k)}^\star\nonumber\\
&\le2\sum_{i=1}^K\sum_{j\ne\tau^\star(i)}q_{ij}^\star.
\label{eq:row-column-leakage}
\end{align}
Taking the maximum over $k$ in \eqref{eq:row-column-leakage} and using
\eqref{eq:off-mass-w1-bound} gives
\begin{equation}
\operatorname{Leak}(\Gamma^\star)
\le \frac{2}{(1-\alpha_{\mathrm{sep}})\Delta_{\mathrm{sep}}}
W_1(\widehat p_1,\nu_\varepsilon).
\end{equation}
This proves the claim.
\end{proof}

Thus Assumption~\ref{ass:transport-identifiability} holds with
$C_{\mathrm{id}}=2/[(1-\alpha_{\mathrm{sep}})\Delta_{\mathrm{sep}}]$.

\subsection{Relation to the Batchwise Population Coupling}
\label{app:batch-population}
Theorem~\ref{thm:population-error} is stated for the abstract coupling
$\Pi_f$, whose endpoint marginal is exactly $\nu_\varepsilon$. In the
implemented algorithm, however, endpoints are produced by minibatch assignment,
and their marginal need not equal $\nu_\varepsilon$ because the resulting
component occupancies may differ from the prescribed mixture weights. This
section quantifies this endpoint-marginal discrepancy and propagates its effect
through the classification guarantee in Theorem~\ref{thm:population-error}.

To make the population averaging explicit, first draw a training set
$\mathcal D_N=(\mathbf X_i)_{i=1}^N\sim P_X^{\otimes N}$. Conditional on
$\mathcal D_N$, draw a minibatch $\mathcal B$ according to the training
sampler, generate its two views, and form their shared targets. The conditional
expectation below also includes the augmentation and target-construction
randomness. Let $N_k(\mathcal B)$ be the number of images in $\mathcal B$
assigned to component $k$, and define
\begin{equation}
\begin{aligned}
\bar w_{B,k}(\mathcal D_N)
&:=\mathbb E_{\mathcal B\mid\mathcal D_N}\!\left[
\frac{N_k(\mathcal B)}{B}\right],\\
\bar w_{B,k}
&:=\mathbb E_{\mathcal D_N}\!\left[\bar w_{B,k}(\mathcal D_N)\right],
\\
\delta_{\mathrm{mix}}
&:=\frac12\sum_{k=1}^{K'}|\bar w_{B,k}-w_k|.
\end{aligned}
\label{eq:occupancy-discrepancy}
\end{equation}
Once the sampler, augmentation, target construction, encoder, and assignment
rule are fixed, $\bar w_{B,k}$ is induced by the population distribution
$P_X$; it is a defined quantity, not an additional assumption.
After averaging over both $\mathcal D_N$ and
$\mathcal B\mid\mathcal D_N$, let $\Pi_B$ denote the resulting population
representation--endpoint coupling and let
$\nu_B:=\operatorname{Law}(\mathbf R)$ be its endpoint marginal.
Because the endpoint perturbation conditional on component $k$ follows
$Q_k^\varepsilon$,
$\nu_B=\sum_k\bar w_{B,k}Q_k^\varepsilon$, whereas
$\nu_\varepsilon=\sum_k w_kQ_k^\varepsilon$. Coupling the common mass within
each component and transporting only the unmatched mass over the unit sphere,
whose diameter is two, gives
\begin{equation}
\begin{aligned}
W_1(\nu_B,\nu_\varepsilon)
&:=\inf_{\Gamma\in\mathcal C(\nu_B,\nu_\varepsilon)}
\int \|\mathbf r-\mathbf r'\|_2\,
\Gamma(\mathrm d\mathbf r,\mathrm d\mathbf r')\\
&\le 2\delta_{\mathrm{mix}},
\end{aligned}
\label{eq:occupancy-wasserstein}
\end{equation}
where $\mathcal C(\nu_B,\nu_\varepsilon)$ denotes the set of couplings of
$\nu_B$ and $\nu_\varepsilon$.
Define
\begin{equation}
\mathcal R_{\mathrm{FM},B}
:=\frac1{d^\star}\mathbb E_{\Pi_B,T}
\left[\|\bm v_\phi(\mathbf Z_T,T)-\mathbf U_T\|_2^2\right].
\end{equation}
The proof of Lemma~\ref{lem:fm-endpoint} applies to this batchwise pairing.
Writing $\widehat p_{1,B}$ for the law of $\Phi_1(\mathbf Z_0)$ under
$\Pi_B$, the triangle inequality gives
\begin{equation}
W_1(\widehat p_{1,B},\nu_\varepsilon)
\le W_1(\widehat p_{1,B},\nu_B)+W_1(\nu_B,\nu_\varepsilon)
\le e^{L_v}\sqrt{\frac{d^\star}{q_{\min}}\mathcal R_{\mathrm{FM},B}}
+2\delta_{\mathrm{mix}}.
\label{eq:reference-marginal-discrepancy}
\end{equation}
This term propagates through the proof of Theorem~\ref{thm:population-error}
without requiring a new geometric argument. Let
\begin{equation}
A_{\mathrm{id}}:=\frac{1+2C_{\mathrm{id}}}{p_{\min}},
\qquad
\mathcal R_{\mathrm{FBDM},B}:=\mathcal R_{\mathrm{FM},B}
+\lambda\mathcal R_{\mathrm{align}}.
\end{equation}
Replacing \eqref{eq:fm-endpoint-bound} by
\eqref{eq:reference-marginal-discrepancy} in
\eqref{eq:class-mean-separation} yields
\begin{equation}
\Delta_f-2(h_{\mathrm{aug}}+2b_{\mathrm{aug}})
\ge g_{\mathrm{aug}}
-C_{\mathrm{sep}}\sqrt{\mathcal R_{\mathrm{FBDM},B}}
-4e^{-L_v}A_{\mathrm{id}}\delta_{\mathrm{mix}}.
\label{eq:batch-classification-margin}
\end{equation}
Consequently, if
\begin{equation}
C_{\mathrm{sep}}\sqrt{\mathcal R_{\mathrm{FBDM},B}}
+4e^{-L_v}A_{\mathrm{id}}\delta_{\mathrm{mix}}
\le\frac{g_{\mathrm{aug}}}{2},
\label{eq:batch-small-risk-condition}
\end{equation}
the same nearest-centroid argument gives
\begin{equation}
\operatorname{Err}(G_f)
\le(1-\sigma)+C_{\mathrm{err}}
\sqrt{\mathcal R_{\mathrm{FBDM},B}}.
\label{eq:batch-misclassification-bound}
\end{equation}
Thus balanced assignment recovers the exact-marginal case
($\delta_{\mathrm{mix}}=0$), while capacitated assignment is controlled by
its population mixture-weight mismatch. Capacity constraints alone limit
maximum occupancy but do not generally force $\delta_{\mathrm{mix}}$ to vanish.

%% file: appendices/implementation.tex
\section{Implementation and Experimental Protocols}
\label{app:implementation}
\subsection{Batch Risk and Training Updates}
\label{app:batch-risk}
Fix an epoch $e$ and suppress the capacity schedule's dependence on it.
On the fixed dataset $\mathcal D_N$, the empirical risk averages over the
uniformly sampled minibatch index set $\mathcal I$ and the random variables
$\xi$:
\begin{equation}
\widehat{\mathcal R}_N(\theta,\psi,\phi)
=\mathbb E_{\mathcal I,\xi}
\left[
\widehat{\mathcal R}_B(\theta,\psi,\phi;\mathcal B_{\mathcal I},\xi)
\right],
\qquad
\mathcal I\sim\operatorname{Unif}\!\left(\binom{[N]}{B}\right).
\label{eq:empirical-risk}
\end{equation}
Here,
$\binom{[N]}{B}:=\{\mathcal I\subseteq[N]:|\mathcal I|=B\}$ denotes the
set of all size-$B$ minibatch index sets,
$\mathcal B_{\mathcal I}=\{\mathbf x_i:i\in\mathcal I\}$, and
$\operatorname{Unif}\!\left(\binom{[N]}{B}\right)$ is the uniform law over
this set.
The expectation averages the scalar forward objective. The detached assignment
and the scaled backward rule below specify its implemented training updates.

\subsection{Two-Timescale Gradient Refinement}
\label{app:grad-scale}

Unlike conventional flow matching, whose source law is fixed, FBDM jointly
learns the source representations and the velocity field. Encoder updates thus
change the states, paths, and target velocities defining the regression problem.
For the four reported ResNet-18 benchmarks, we let $\bm v_\phi$ track the
current representation geometry before that geometry moves substantially:
we use $\rho=0.75$ to attenuate only the FM gradient entering the encoder and
projector, while retaining the full velocity-field and alignment updates.
The ImageNet-1K run uses $\rho=1$ and does not apply this gradient refinement.

For a scale $0<\rho\leq1$, this is implemented by the forward-identical
surrogate
\begin{equation}
\widetilde{\mathbf z}
=\sg(\mathbf z)+\rho(\mathbf z-\sg(\mathbf z)).
\label{eq:grad-scale}
\end{equation}
The surrogate replaces $\mathbf z$ in both the FM state and its analytic target;
alignment uses the original representation and assignment stays detached.
It satisfies $\widetilde{\mathbf z}=\mathbf z$ in the forward pass and
$\partial\widetilde{\mathbf z}/\partial\mathbf z=\rho\mathbf I$. Consequently,
for the ResNet-18 objective \eqref{eq:objective}, the training gradients are
\begin{align}
\nabla_{\theta,\psi}^{\mathrm{train}}\widehat{\mathcal R}_B
&=\rho\,
\nabla_{\theta,\psi}\widehat{\mathcal R}_{\mathrm{FM},B}
+\lambda\,
\nabla_{\theta,\psi}\widehat{\mathcal R}_{\mathrm{align},B},\\
\nabla_\phi^{\mathrm{train}}\widehat{\mathcal R}_B
&=\nabla_\phi\widehat{\mathcal R}_{\mathrm{FM},B}.
\label{eq:scaled-gradients}
\end{align}
Thus $\rho$ changes only the representation-side FM update; it neither changes
the forward objective nor weakens velocity regression or view alignment.
Algorithm~\ref{alg:fbdm} gives the core loop. Main benchmark results use fixed
evaluation; online-only diagnostics are explicitly identified in
Appendix~\ref{app:alignment-optimization}.

\paragraph{Velocity-energy regularization.}
The velocity field is auxiliary during pretraining, whereas the encoder is
retained downstream. If the field is left unconstrained, it may absorb much of
the discrepancy between a representation and its assigned target through a
strong transport, reducing the pressure on the encoder to adapt toward the
reference geometry. Only for ImageNet-1K, we add velocity-energy
regularization to the flow-matching and alignment terms in
\eqref{eq:fm} and \eqref{eq:alignment}:
\[
\widehat{\mathcal R}^{\mathrm{ImageNet}}_B
=1.3\,\widehat{\mathcal R}_{\mathrm{FM},B}
+1.3\,\widehat{\mathcal R}_{\mathrm{align},B}
+20\,\mathbb E_{\mathbf Z_T,T}
\!\left[\|\bm v_\phi(\mathbf Z_T,T)\|_2^2\right].
\]
The energy coefficient is $\beta=20$ on ImageNet-1K and the energy penalty is
not used for the four ResNet-18 benchmarks. This penalty controls the
division of optimization burden between representation learning and transport
fitting, without removing the transport mechanism.
Appendix~\ref{app:direct-regression} (Table~\ref{tab:direct-regression-imagenet})
compares this velocity-mediated objective with direct regression to the same
assigned targets.

\subsection{Evaluation and Configuration}
\label{app:configs}

\paragraph{Evaluation protocol.}

The four benchmarks in Table~\ref{tab:ssl-comparison} use the frozen
$512$-dimensional ResNet-18 backbone feature. Their fixed linear protocol trains
a classifier for 1000 epochs with probe seeds $\{0,1,2\}$; cosine $5$-NN is
deterministic for a fixed feature bank. The separate ImageNet-1K comparison in
Table~\ref{tab:imagenet-comparison} uses ResNet-50 features after 100 epochs of
pretraining with global batch size 512. For FBDM, the linear probe uses all
1,281,167 ImageNet-1K training images and reports top-1 accuracy on all 50,000
validation images (top-5 is also computed). Both splits use
\texttt{Resize(256)} $\rightarrow$ \texttt{CenterCrop(224)} $\rightarrow$
\texttt{ToTensor()}, without input normalization or random augmentation.
The ResNet-50 backbone is frozen with batch normalization in evaluation mode;
its 2048-dimensional features are taken before the projection head, without
using the velocity field or ODE endpoint. A \texttt{Linear(2048,1000)} classifier
is trained with cross-entropy for 500 epochs using Adam, initial learning rate
$0.01$ exponentially decayed each epoch to $10^{-6}$, weight decay
$5\times10^{-6}$, batch size 1000, and seed 0.
Target-center files, source snapshots, and experiment manifests are hash-pinned.

\paragraph{Training hyperparameters.}

For the dynamic-capacity variant described in Section~\ref{sec:method}, we
write the training-set-specific schedule as $m_{\mathcal D_N}(e)$.

\begin{table}[ht]
\centering
\caption{Dataset-specific assignment capacities used by the reported FBDM
runs. Here $B$ is the minibatch size and $m_{\mathcal D_N}(e)$ is the number of
available slots per center at epoch $e$.}
\label{tab:dataset-capacity}
\small
\resizebox{\linewidth}{!}{%
\begin{tabular}{lcccc}
\toprule
Dataset & $B$ & $K'$ & $d^\star$ & $m_{\mathcal D_N}(e)$\\
\midrule
CIFAR-10 & 512 & 128 & 64 &
$5\ (e<100),\ 7\ (100\le e<250),\ 9\ (e\ge250)$\\
CIFAR-100 & 512 & 160 & 64 &
$5\ (e<125),\ 7\ (125\le e<275),\ 9\ (e\ge275)$\\
STL-10 & 512 & 128 & 64 &
$5\ (e<100),\ 7\ (100\le e<250),\ 9\ (e\ge250)$\\
Tiny ImageNet & 1024 & 320 & 96 & $5\ (e<100),\ 7\ (100\le e<250),\ 9\ (e\ge250)$\\
ImageNet-1K & 512 & 4096 & 512 & $4$\\
\bottomrule
\end{tabular}%
}
\end{table}

The two clock families used by the reported FBDM runs are
\begin{equation}
s_{\mathrm{rat},a}(t)=\frac{(1+a)t}{1+at},
\qquad
s_{\mathrm{exp},a}(t)=\frac{1-\exp(-at)}{1-\exp(-a)}.
\label{eq:reported-clocks}
\end{equation}
Table~\ref{tab:time-schedules} records the exact choice for every FBDM dataset
entry in Tables~\ref{tab:ssl-comparison} and~\ref{tab:imagenet-comparison}.
Within each dataset, all reported evaluation entries use the same time protocol.

\begin{table}[ht]
\centering
\caption{Time reparameterization for the FBDM entries in
Tables~\ref{tab:ssl-comparison} and~\ref{tab:imagenet-comparison}.
Path-uniform sampling draws
$S\sim\operatorname{Unif}([0,1])$ and sets $T=s^{-1}(S)$.}
\label{tab:time-schedules}
\small
\begin{tabular}{lccc}
\toprule
Dataset & Reported entries & Clock & Sampling convention\\
\midrule
CIFAR-10 & Linear, $5$-NN & rational, $a=0.5$ & path-uniform\\
CIFAR-100 & Linear, $5$-NN & rational, $a=0.5$ & path-uniform\\
STL-10 & Linear, $5$-NN & exponential, $a=1$ & path-uniform\\
Tiny ImageNet & Linear, $5$-NN & exponential, $a=2$ & path-uniform\\
ImageNet-1K & Linear & rational, $a=0.5$ & path-uniform\\
\bottomrule
\end{tabular}
\end{table}

\begin{table}[H]
\centering
\caption{Core configuration for the reported ResNet-18 FBDM results on
CIFAR-10, CIFAR-100, STL-10, and Tiny ImageNet.}
\label{tab:core-configuration}
\small
\begin{tabular}{ll}
\toprule
Component & Setting\\
\midrule
Backbone & ResNet-18\\
Reference geometry & Simplex-spectral\\
$K'$, $d^\star$, and capacity & See Table~\ref{tab:dataset-capacity}\\
Center perturbation & $10^{-3}$\\
Velocity time feature & raw $t$\\
Flow path & Spherical geodesic\\
FM time weighting & none ($\gamma=0$)\\
Velocity field & 4 layers, width 2048, ReLU\\
Alignment weight & 50\\
FM encoder grad scale & 0.75\\
Optimizer & Adam, lr $3\times10^{-5}$\\
Coupled $L^2$ & $10^{-4}$\\
Batch size & 1024 (Tiny ImageNet); 512 otherwise\\
Pretraining length & 500 epochs\\
Augmentation & grayscale .1, Gaussian blur .25\\
\bottomrule
\end{tabular}
\end{table}

The reported ResNet-18 configuration uses RandomResizedCrop with scale
$[0.2,1]$ and aspect ratio $[0.75,4/3]$, horizontal flip probability .5,
ColorJitter $(.4,.4,.4,.1)$ applied with probability .8, grayscale probability
.1, and Gaussian blur probability .25. The two views draw independent
randomness from the same distribution except in the asymmetric-blur ablation in
Table~\ref{tab:ablations}.

\paragraph{ImageNet-1K configuration.}
We use a ResNet-50 backbone with $K'=4096$ reference components and auxiliary
representation dimension $d^\star=512$. The physical global batch size is
$B=512$. To stabilize capacitated assignment without increasing the physical
optimization batch, we aggregate eight consecutive physical batches and assign
them jointly over a logical window of
\[
W=8B=4096
\]
images. Capacity is therefore enforced across $4096$ images, while parameter
updates use physical batches of $512$. Each image contributes two augmented
views, but their shared consensus is assigned once, so $W$ counts images rather
than views. We use per-component capacity $m=4$, a simplex-spectral reference
with perturbation scale $\varepsilon=10^{-3}$, and a four-layer velocity network
of width $2048$. The velocity-energy coefficient is $\beta=20$; the
flow-matching and explicit alignment weights are both $1.3$, with full FM
gradient to the encoder ($\rho=1$). Training uses LARS with peak learning rate
$0.9$, a $25{,}020$-step warm-up, and a 100-epoch cosine schedule. We use
spherical geodesic paths, path-uniform sampling, and the rational clock with
$a=0.5$.

\paragraph{ImageNet-1K data augmentation.}
Two views are sampled independently from the same pipeline. Each view uses a
$224\times224$ random resized crop with scale $[0.08,1.0]$ and aspect ratio
$[3/4,4/3]$, followed by horizontal flipping with probability $0.5$.
ColorJitter with brightness, contrast, saturation, and hue strengths
$(0.8,0.8,0.8,0.2)$ is applied with probability $0.8$, followed by random
grayscale conversion with probability $0.2$. Gaussian blur with a
$23\times23$ kernel and $\sigma\in[0.1,2.0]$ is applied with probability
$0.1$. The image is then converted to a tensor without ImageNet mean--standard
deviation normalization.

\FloatBarrier
\subsection{Matched Training-Cost Measurement}
\label{app:systems-protocol}

The reported CIFAR-10, CIFAR-100, STL-10, and Tiny ImageNet experiments were
each trained on a single NVIDIA V100 GPU. The ImageNet-1K experiments used
separate compute setups across runs: one NVIDIA H100, one NVIDIA A100, or four
NVIDIA V100 GPUs in parallel. These were not combined in a mixed-GPU run.
The DM--FBDM cost comparison below uses a separate matched single-V100 measurement
protocol and does not report ImageNet-1K throughput.

The measurements in Table~\ref{tab:systems} use one Tesla V100-SXM2-16GB with
PyTorch 2.2.1 and CUDA 11.8. DM and FBDM both use a ResNet-18 backbone, batch
size 512, two views, four data-loading workers, and full precision.
Training-loop time is the median after an initial warm-up pass; memory is the
maximum peak allocation and reservation over five measured passes. Evaluation
and model-saving I/O are excluded. DM uses five critic updates per encoder
update, whereas FBDM uses start-point alignment for each dataset. The
comparison therefore measures the native training implementations rather than
architecture-independent complexity. Table~\ref{tab:systems} covers only
CIFAR-10, CIFAR-100, and STL-10, the datasets with paired DM and FBDM timing
and memory measurements; it does not report a cost comparison for Tiny ImageNet
or ImageNet-1K.

%% file: appendices/ablations.tex
\section{Ablation Studies}
\label{app:ablations}

\subsection{Direct-Target Regression without Flow Matching}
\label{app:direct-regression}
To test whether direct matching to the assigned reference targets explains the
ImageNet-1K result, we disable the velocity field and compare two regression
geometries. Using the normalized representations $\mathbf z_j^{(n)}$ from
\eqref{eq:representation} and their shared target $\mathbf r^{(n)}$ from
\eqref{eq:target}, define
\begin{align}
\widehat{\mathcal R}_{\mathrm{chord},B}
&=\frac{1}{2Bd^\star}\sum_{n=1}^{B}\sum_{j=1}^{2}
\|\mathbf z_j^{(n)}-\mathbf r^{(n)}\|_2^2,
\label{eq:direct-chord}\\
\widehat{\mathcal R}_{\mathrm{geo},B}
&=\frac{1}{2Bd^\star}\sum_{n=1}^{B}\sum_{j=1}^{2}
\arccos^2\!\left((\mathbf z_j^{(n)})^\top\mathbf r^{(n)}\right).
\label{eq:direct-geodesic}
\end{align}
The first uses Euclidean chord distance on the sphere; the second uses the
geodesic angle. Both controls retain the explicit two-view alignment term
\eqref{eq:alignment} and otherwise use the same assignment, encoder and
projector, training procedure, and frozen-backbone evaluation as the reported
ImageNet-1K run. Neither trains a velocity field or uses its energy penalty.

\begin{table}[H]
\centering
\caption{ImageNet-1K frozen-backbone linear-evaluation top-1 accuracy (\%).
The direct-regression controls differ only in their spherical distance;
both disable flow matching.}
\label{tab:direct-regression-imagenet}
\small
\begin{tabular}{lc}
\toprule
Pretraining objective & Top-1\\
\midrule
FBDM with velocity energy ($\beta=20$) & $61.41$\\
Direct chord regression, \eqref{eq:direct-chord} & $58.02$\\
Direct geodesic regression, \eqref{eq:direct-geodesic} & $58.75$\\
\bottomrule
\end{tabular}
\end{table}

Geodesic regression improves on chord regression by $0.73$ percentage points,
showing that the choice of direct-regression geometry matters. Both direct
controls remain below the complete velocity-mediated FBDM result under this
evaluation protocol (Table~\ref{tab:direct-regression-imagenet}).

\subsection{Spherical versus Linear Transport Paths}
\label{app:linear-interpolation}
Table~\ref{tab:linear-path-ablation} compares the spherical-path FBDM results
reported in Table~\ref{tab:ssl-comparison} with the direct linear alternative
to \eqref{eq:slerp},
$\bm\gamma^{\mathrm{lin}}_{\mathbf z,\mathbf r}(s)
=(1-s)\mathbf z+s\mathbf r$.

\begin{table}[H]
\centering
\caption{Reported FBDM top-1 accuracy (\%) with the spherical path used in
Table~\ref{tab:ssl-comparison} and with direct linear interpolation. Linear
denotes frozen-representation linear evaluation; $5$-NN uses $k=5$.}
\label{tab:linear-path-ablation}
\small
\begin{tabular}{lcccc}
\toprule
& \multicolumn{2}{c}{Spherical path}
& \multicolumn{2}{c}{Linear path}\\
\cmidrule(lr){2-3}\cmidrule(lr){4-5}
Dataset & Linear & $5$-NN & Linear & $5$-NN\\
\midrule
CIFAR-10 & $92.37$ & $89.68$ & $90.50$ & $88.47$\\
CIFAR-100 & $66.59$ & $56.74$ & $65.01$ & $55.49$\\
STL-10 & $89.79$ & $86.23$ & $87.94$ & $83.85$\\
Tiny ImageNet & $48.55$ & $33.01$ & $47.47$ & $32.65$\\
\bottomrule
\end{tabular}
\end{table}

\subsection{Alignment and Optimization Diagnostics}
\label{app:alignment-optimization}
We test whether shared-target flow matching can replace explicit view
alignment. In a matched 1000-epoch diagnostic that retained the complete FM
branch but changed only $\lambda$ from $50$ to $0$, the final online
linear/$5$-NN accuracies were $80.85\%/75.02\%$ and never exceeded
$80.86\%/75.19\%$. Thus the shared endpoint does not substitute for explicit
$\mathbf z_0$ alignment. Moving explicit alignment to the transported
$\mathbf z_1$ was also ineffective, peaking online at $78.96\%/72.80\%$.
Both diagnostics use online evaluation rather than the fixed protocol in
Appendix~\ref{app:configs}, so they are excluded from the fixed comparisons below.

Table~\ref{tab:ablations} gives matched fixed-evaluation comparisons on
CIFAR-10. Scaling the FM gradient reaching the encoder to
0.75 and removing the long optimizer warmup improve both metrics. Asymmetric
two-view blur mainly improves linear accuracy, whereas time weighting with
$\gamma=0.5$ mainly benefits $5$-NN.

\begin{table}[H]
\centering
\caption{Strict fixed-1000 paired ablations on CIFAR-10. Each row compares a
variant with its matched control; $\Delta$ is variant minus control.}
\label{tab:ablations}
\small
\setlength{\tabcolsep}{3.9pt}
\begin{tabular}{llcc}
\toprule
Factor & Variant / control & $\Delta$ Linear & $\Delta$ 5-NN\\
\midrule
FM encoder gradient & $0.75/1.00$ & $+0.860$ & $+0.57$\\
LR warmup steps & $0/500$ & $+0.330$ & $+0.56$\\
View blur $(p_1,p_2)$ & $(.20,.30)/(.25,.25)$ & $+0.420$ & $+0.08$\\
Time-loss exponent & $0.5/0$ & $+0.033$ & $+0.89$\\
\bottomrule
\end{tabular}
\end{table}

Other optimization, augmentation, assignment, flow, sampling, and center
variants gave no stable fixed improvement.

%% file: appendices/related_work.tex
\section{Extended Related Work}
\label{app:related}

\paragraph{Collapse prevention in SSL.}
Contrastive learning uses negative pairs and uniformity pressure to retain
information across the representation space
\citep{chen2020simple,wang2020understanding}. Non-contrastive methods rely on
predictor asymmetry, target networks, or stop-gradient mechanisms
\citep{grill2020bootstrap,chen2021exploring}. Redundancy-reduction and whitening
methods instead constrain second-order statistics
\citep{zbontar2021barlow,ermolov2021whitening,bardes2022vicreg,weng2022investigation}.
Other approaches use swapped cluster assignments \citep{caron2020swav},
self-distillation \citep{caron2021dino}, or orthogonality regularization
\citep{he2024orthogonality}.
These approaches are strong, but the resulting global representation geometry is
an indirect consequence of the anti-collapse mechanism.

\paragraph{Explicit distribution matching.}
DM replaces defensive regularization with a prescribed reference measure and
optimizes a Mallows/Wasserstein discrepancy through a critic
\citep{jiao2026distribution}. FBDM retains the explicit target but replaces the
adversarial estimator with a paired transport regression. This distinction is
both conceptual and computational: the critic asks whether two distributions
can be distinguished, whereas FBDM learns a velocity field for an explicit
coupling between them.

\paragraph{Flow matching and geometric transport.}
Neural ODEs describe continuous transformations through learned vector fields
\citep{chen2018neural}. Flow matching avoids simulation during training by
regressing analytically available conditional velocities
\citep{lipman2023flow}. Riemannian flow matching extends this construction to
manifolds through geodesic conditional paths and gives closed-form targets on
simple geometries \citep{chen2024geometries}. We use its spherical great-circle
construction so both interpolants and target velocities respect representation
geometry. Unlike generative flow matching, our goal is not sample generation:
the learned vector field is a training-time device that shapes the encoder and
is discarded after pretraining.

\paragraph{Theory of representation and flow learning.}
Theoretical accounts of SSL relate representation quality to latent-class
transfer, hyperspherical alignment and uniformity, spectral structure, linear
transferability, generalization, adversarial robustness, and explicit
Wasserstein distribution matching
\citep{saunshi2019theoretical,wang2020understanding,haochen2021provable,
haochen2022beyond,huang2023towards,duan2025advssl,jiao2026distribution}.
Separately, flow-matching theory establishes distributional convergence for
learned velocity fields, and closely related analyses of probability-flow ODEs
characterize convergence in Wasserstein or total-variation distance, numerical
discretization error, minimax optimality, and adaptation to intrinsic dimension
\citep{fukumizu2025flow,gao2025generalpfode,huang2025pfode,
cai2025minimaxpfode,li2024sharppfode,tang2025adaptivepfode}. These results
explain representation learning or generative transport in their respective
settings, but do not explain why FBDM's auxiliary flow-matching field improves
the downstream utility of the encoder. Our analysis targets precisely this
missing connection.

\paragraph{Simplex and frame geometry.}
Simplex equiangular tight frames appear in the neural-collapse literature as
maximally separated class means under suitable dimensionality
\citep{papyan2020prevalence}. The orthogonal reference construction used in DM
requires $K'\le d^\star$. FBDM instead derives a label-free finite prior from a
low-rank approximation of the ideal simplex Gram matrix, extending explicit
reference design to the practically useful regime $K'>d^\star$.

%% file: references.bib
@inproceedings{chen2020simple,
  title={A Simple Framework for Contrastive Learning of Visual Representations},
  author={Chen, Ting and Kornblith, Simon and Norouzi, Mohammad and Hinton, Geoffrey},
  booktitle={International Conference on Machine Learning},
  year={2020}
}

@inproceedings{he2020momentum,
  title={Momentum Contrast for Unsupervised Visual Representation Learning},
  author={He, Kaiming and Fan, Haoqi and Wu, Yuxin and Xie, Saining and Girshick, Ross},
  booktitle={Proceedings of the IEEE/CVF Conference on Computer Vision and Pattern Recognition},
  pages={9729--9738},
  year={2020},
  url={https://openaccess.thecvf.com/content_CVPR_2020/html/He_Momentum_Contrast_for_Unsupervised_Visual_Representation_Learning_CVPR_2020_paper.html}
}

@inproceedings{grill2020bootstrap,
  title={Bootstrap Your Own Latent: A New Approach to Self-Supervised Learning},
  author={Grill, Jean-Bastien and Strub, Florian and Altch{\'e}, Florent and Tallec, Corentin and Richemond, Pierre H. and Buchatskaya, Elena and Doersch, Carl and Avila Pires, Bernardo and Guo, Zhaohan Daniel and Gheshlaghi Azar, Mohammad and Piot, Bilal and Kavukcuoglu, Koray and Munos, R{\'e}mi and Valko, Michal},
  booktitle={Advances in Neural Information Processing Systems},
  year={2020}
}

@inproceedings{caron2020swav,
  title={Unsupervised Learning of Visual Features by Contrasting Cluster Assignments},
  author={Caron, Mathilde and Misra, Ishan and Mairal, Julien and Goyal, Priya and Bojanowski, Piotr and Joulin, Armand},
  booktitle={Advances in Neural Information Processing Systems},
  volume={33},
  year={2020},
  url={https://proceedings.neurips.cc/paper_files/paper/2020/hash/70feb62b69f16e0238f741fab228fec2-Abstract.html}
}

@inproceedings{chen2021exploring,
  title={Exploring Simple Siamese Representation Learning},
  author={Chen, Xinlei and He, Kaiming},
  booktitle={IEEE/CVF Conference on Computer Vision and Pattern Recognition},
  year={2021}
}

@inproceedings{caron2021dino,
  title={Emerging Properties in Self-Supervised Vision Transformers},
  author={Caron, Mathilde and Touvron, Hugo and Misra, Ishan and J{\'e}gou, Herv{\'e} and Mairal, Julien and Bojanowski, Piotr and Joulin, Armand},
  booktitle={Proceedings of the IEEE/CVF International Conference on Computer Vision},
  pages={9650--9660},
  year={2021},
  url={https://openaccess.thecvf.com/content/ICCV2021/html/Caron_Emerging_Properties_in_Self-Supervised_Vision_Transformers_ICCV_2021_paper.html}
}

@inproceedings{zbontar2021barlow,
  title={Barlow Twins: Self-Supervised Learning via Redundancy Reduction},
  author={Zbontar, Jure and Jing, Li and Misra, Ishan and LeCun, Yann and Deny, St{\'e}phane},
  booktitle={International Conference on Machine Learning},
  year={2021}
}

@inproceedings{bardes2022vicreg,
  title={{VICReg}: Variance-Invariance-Covariance Regularization for Self-Supervised Learning},
  author={Bardes, Adrien and Ponce, Jean and LeCun, Yann},
  booktitle={International Conference on Learning Representations},
  year={2022}
}

@inproceedings{ermolov2021whitening,
  title={Whitening for Self-Supervised Representation Learning},
  author={Ermolov, Aleksandr and Siarohin, Aliaksandr and Sangineto, Enver and Sebe, Nicu},
  booktitle={International Conference on Machine Learning},
  year={2021}
}

@inproceedings{weng2022investigation,
  title={An Investigation into Whitening Loss for Self-Supervised Learning},
  author={Weng, Xi and Huang, Lei and Zhao, Lei and Anwer, Rao Muhammad and Khan, Salman and Khan, Fahad Shahbaz},
  booktitle={Advances in Neural Information Processing Systems},
  volume={35},
  year={2022},
  doi={10.52202/068431-2157},
  url={https://proceedings.neurips.cc/paper_files/paper/2022/hash/c057cb81b8d3c67093427bf1c16a4e9f-Abstract-Conference.html}
}

@inproceedings{yu2025repa,
  title={Representation Alignment for Generation: Training Diffusion Transformers Is Easier Than You Think},
  author={Yu, Sihyun and Kwak, Sangkyung and Jang, Huiwon and Jeong, Jongheon and Huang, Jonathan and Shin, Jinwoo and Xie, Saining},
  booktitle={International Conference on Learning Representations},
  year={2025},
  url={https://proceedings.iclr.cc/paper_files/paper/2025/hash/d9e42b4d7163931f3689d6d6fbaa11d0-Abstract-Conference.html}
}

@misc{chefer2026selfflow,
  title={Self-Supervised Flow Matching for Scalable Multi-Modal Synthesis},
  author={Chefer, Hila and Esser, Patrick and Lorenz, Dominik and Podell, Dustin and Raja, Vikash and Tong, Vinh and Torralba, Antonio and Rombach, Robin},
  year={2026},
  eprint={2603.06507},
  archivePrefix={arXiv},
  url={https://arxiv.org/abs/2603.06507}
}

@inproceedings{ukita2026senflow,
  title={High-Performance Self-Supervised Learning by Joint Training of Flow Matching},
  author={Ukita, Kosuke and Okita, Tsuyoshi},
  booktitle={Proceedings of the 29th International Conference on Artificial Intelligence and Statistics},
  series={Proceedings of Machine Learning Research},
  volume={300},
  pages={4492--4500},
  year={2026},
  publisher={PMLR},
  url={https://proceedings.mlr.press/v300/ukita26a.html}
}

@inproceedings{huang2023towards,
  title={Towards the Generalization of Contrastive Self-Supervised Learning},
  author={Huang, Weiran and Yi, Mingyang and Zhao, Xuyang and Jiang, Zihao},
  booktitle={International Conference on Learning Representations},
  year={2023},
  url={https://openreview.net/forum?id=XDJwuEYHhme}
}

@inproceedings{huang2024ldreg,
  title={{LDReg}: Local Dimensionality Regularized Self-Supervised Learning},
  author={Huang, Hanxun and Campello, Ricardo J. G. B. and Erfani, Sarah Monazam and Ma, Xingjun and Houle, Michael E. and Bailey, James},
  booktitle={International Conference on Learning Representations},
  year={2024},
  url={https://openreview.net/forum?id=oZyAqjAjJW}
}

@inproceedings{he2024orthogonality,
  title={Preventing Dimensional Collapse in Self-Supervised Learning via Orthogonality Regularization},
  author={He, Junlin and Du, Jinxiao and Ma, Wei},
  booktitle={Advances in Neural Information Processing Systems},
  volume={37},
  year={2024},
  doi={10.52202/079017-3028},
  url={https://proceedings.neurips.cc/paper_files/paper/2024/hash/ad7922fd4650f8aba5d8b067e622ca84-Abstract-Conference.html}
}

@inproceedings{duan2025advssl,
  title={{Adv-SSL}: Adversarial Self-Supervised Representation Learning with Theoretical Guarantees},
  author={Duan, Chenguang and Jiao, Yuling and Lin, Huazhen and Ma, Wensen and Yang, Jerry},
  booktitle={Advances in Neural Information Processing Systems},
  volume={38},
  year={2025},
  doi={10.52202/085713-4097},
  url={https://proceedings.neurips.cc/paper_files/paper/2025/hash/b216ce49d808157ba14adec9453983a4-Abstract-Conference.html}
}

@misc{simeoni2025dinov3,
  title={{DINOv3}},
  author={Sim{\'e}oni, Oriane and others},
  year={2025},
  eprint={2508.10104},
  archivePrefix={arXiv},
  primaryClass={cs.CV},
  url={https://arxiv.org/abs/2508.10104}
}

@inproceedings{zhang2026structural,
  title={Self-Supervised Learning from Structural Invariance},
  author={Zhang, Yipeng and Ghaemi, Hafez and Lee, Jungyoon and Bakhtiari, Shahab and Muller, Eilif B. and Charlin, Laurent},
  booktitle={International Conference on Learning Representations},
  year={2026},
  url={https://proceedings.iclr.cc/paper_files/paper/2026/hash/d26d1923ce22d5afc3ee0db15ef4d900-Abstract-Conference.html}
}

@inproceedings{luthra2026alignment,
  title={On the Alignment Between Supervised and Self-Supervised Contrastive Learning},
  author={Luthra, Achleshwar and Mishra, Priyadarsi and Galanti, Tomer},
  booktitle={International Conference on Learning Representations},
  year={2026},
  url={https://proceedings.iclr.cc/paper_files/paper/2026/hash/fa76985f05e0a25c66528308dda33de0-Abstract-Conference.html}
}

@inproceedings{saunshi2019theoretical,
  title={A Theoretical Analysis of Contrastive Unsupervised Representation Learning},
  author={Saunshi, Nikunj and Plevrakis, Orestis and Arora, Sanjeev and Khodak, Mikhail and Khandeparkar, Hrishikesh},
  booktitle={Proceedings of the 36th International Conference on Machine Learning},
  series={Proceedings of Machine Learning Research},
  volume={97},
  pages={5628--5637},
  year={2019},
  url={https://proceedings.mlr.press/v97/saunshi19a.html}
}

@inproceedings{haochen2021provable,
  title={Provable Guarantees for Self-Supervised Deep Learning with Spectral Contrastive Loss},
  author={HaoChen, Jeff Z. and Wei, Colin and Gaidon, Adrien and Ma, Tengyu},
  booktitle={Advances in Neural Information Processing Systems},
  volume={34},
  pages={5000--5011},
  year={2021},
  url={https://proceedings.neurips.cc/paper_files/paper/2021/hash/27debb435021eb68b3965290b5e24c49-Abstract.html}
}

@inproceedings{haochen2022beyond,
  title={Beyond Separability: Analyzing the Linear Transferability of Contrastive Representations to Related Subpopulations},
  author={HaoChen, Jeff Z. and Wei, Colin and Kumar, Ananya and Ma, Tengyu},
  booktitle={Advances in Neural Information Processing Systems},
  volume={35},
  pages={26889--26902},
  year={2022},
  url={https://proceedings.neurips.cc/paper_files/paper/2022/hash/ac112e8ffc4e5b9ece32070440a8ca43-Abstract-Conference.html}
}

@inproceedings{hua2021feature,
  title={On Feature Decorrelation in Self-Supervised Learning},
  author={Hua, Tianyu and Wang, Wenxiao and Xue, Zihui and Ren, Sucheng and Wang, Yue and Zhao, Hang},
  booktitle={Proceedings of the IEEE/CVF International Conference on Computer Vision},
  pages={9598--9608},
  year={2021}
}

@inproceedings{zhang2022zerocl,
  title={Zero-{CL}: Instance and Feature Decorrelation for Negative-Free Symmetric Contrastive Learning},
  author={Zhang, Shaofeng and Zhu, Feng and Yan, Junchi and Zhao, Rui and Yang, Xiaokang},
  booktitle={International Conference on Learning Representations},
  year={2022},
  url={https://openreview.net/forum?id=RAW9tCdVxLj}
}

@misc{jiao2026distribution,
      title={Bringing Generative Learning to Representation Learning: Self-Supervised Transfer Learning as Distribution Matching}, 
      author={Yuling Jiao and Wensen Ma and Defeng Sun and Hansheng Wang and Yang Wang},
      year={2026},
      eprint={2502.14424},
      archivePrefix={arXiv},
      primaryClass={stat.ML},
      url={https://arxiv.org/abs/2502.14424}, 
}

@inproceedings{lipman2023flow,
  title={Flow Matching for Generative Modeling},
  author={Lipman, Yaron and Chen, Ricky T. Q. and Ben-Hamu, Heli and Nickel, Maximilian and Le, Matt},
  booktitle={International Conference on Learning Representations},
  year={2023}
}

@inproceedings{chen2024geometries,
  title={Flow Matching on General Geometries},
  author={Chen, Ricky T. Q. and Lipman, Yaron},
  booktitle={International Conference on Learning Representations},
  year={2024},
  url={https://openreview.net/forum?id=g7ohDlTITL}
}

@inproceedings{liu2023rectified,
  title={Flow Straight and Fast: Learning to Generate and Transfer Data with Rectified Flow},
  author={Liu, Xingchao and Gong, Chengyue and Liu, Qiang},
  booktitle={International Conference on Learning Representations},
  year={2023},
  url={https://openreview.net/forum?id=XVjTT1nw5z}
}

@inproceedings{fukumizu2025flow,
  title={Flow Matching Achieves Almost Minimax Optimal Convergence},
  author={Fukumizu, Kenji and Suzuki, Taiji and Isobe, Noboru and Oko, Kazusato and Koyama, Masanori},
  booktitle={International Conference on Learning Representations},
  year={2025},
  eprint={2405.20879},
  archivePrefix={arXiv},
  primaryClass={cs.LG},
  url={https://proceedings.iclr.cc/paper_files/paper/2025/hash/44a427b77e9727fcec560e2f8d6925e0-Abstract-Conference.html}
}

@inproceedings{gao2025generalpfode,
  title={Convergence Analysis for General Probability Flow {ODE}s of Diffusion Models in Wasserstein Distances},
  author={Gao, Xuefeng and Zhu, Lingjiong},
  booktitle={Proceedings of the 28th International Conference on Artificial Intelligence and Statistics},
  series={Proceedings of Machine Learning Research},
  volume={258},
  pages={1009--1017},
  year={2025},
  eprint={2401.17958},
  archivePrefix={arXiv},
  primaryClass={stat.ML},
  url={https://proceedings.mlr.press/v258/gao25c.html}
}

@article{huang2025pfode,
  title={Convergence Analysis of Probability Flow {ODE} for Score-Based Generative Models},
  author={Huang, Daniel Zhengyu and Huang, Jiaoyang and Lin, Zhengjiang},
  journal={IEEE Transactions on Information Theory},
  volume={71},
  number={6},
  pages={4581--4601},
  year={2025},
  doi={10.1109/TIT.2025.3557050},
  url={https://doi.org/10.1109/TIT.2025.3557050}
}

@misc{cai2025minimaxpfode,
  title={Minimax Optimality of the Probability Flow {ODE} for Diffusion Models},
  author={Cai, Changxiao and Li, Gen},
  year={2025},
  eprint={2503.09583},
  archivePrefix={arXiv},
  primaryClass={cs.LG},
  url={https://arxiv.org/abs/2503.09583}
}

@misc{li2024sharppfode,
  title={A Sharp Convergence Theory for the Probability Flow {ODE}s of Diffusion Models},
  author={Li, Gen and Wei, Yuting and Chi, Yuejie and Chen, Yuxin},
  year={2024},
  eprint={2408.02320},
  archivePrefix={arXiv},
  primaryClass={cs.LG},
  url={https://arxiv.org/abs/2408.02320}
}

@misc{tang2025adaptivepfode,
  title={Adaptivity and Convergence of Probability Flow {ODE}s in Diffusion Generative Models},
  author={Tang, Jiaqi and Yan, Yuling},
  year={2025},
  eprint={2501.18863},
  archivePrefix={arXiv},
  primaryClass={stat.ML},
  url={https://arxiv.org/abs/2501.18863}
}

@inproceedings{arjovsky2017wasserstein,
  title={Wasserstein Generative Adversarial Networks},
  author={Arjovsky, Martin and Chintala, Soumith and Bottou, L{\'e}on},
  booktitle={Proceedings of the 34th International Conference on Machine Learning},
  series={Proceedings of Machine Learning Research},
  volume={70},
  pages={214--223},
  year={2017},
  url={https://proceedings.mlr.press/v70/arjovsky17a.html}
}

@inproceedings{chen2018neural,
  title={Neural Ordinary Differential Equations},
  author={Chen, Ricky T. Q. and Rubanova, Yulia and Bettencourt, Jesse and Duvenaud, David},
  booktitle={Advances in Neural Information Processing Systems},
  year={2018}
}

@inproceedings{wang2020understanding,
  title={Understanding Contrastive Representation Learning through Alignment and Uniformity on the Hypersphere},
  author={Wang, Tongzhou and Isola, Phillip},
  booktitle={International Conference on Machine Learning},
  year={2020}
}

@article{papyan2020prevalence,
  title={Prevalence of Neural Collapse during the Terminal Phase of Deep Learning Training},
  author={Papyan, Vardan and Han, X. Y. and Donoho, David L.},
  journal={Proceedings of the National Academy of Sciences},
  volume={117},
  number={40},
  pages={24652--24663},
  year={2020}
}

@article{eckart1936approximation,
  title={The Approximation of One Matrix by Another of Lower Rank},
  author={Eckart, Carl and Young, Gale},
  journal={Psychometrika},
  volume={1},
  number={3},
  pages={211--218},
  year={1936}
}

@article{welch1974lower,
  title={Lower Bounds on the Maximum Cross Correlation of Signals (Corresp.)},
  author={Welch, Lloyd R.},
  journal={IEEE Transactions on Information Theory},
  volume={20},
  number={3},
  pages={397--399},
  year={1974},
  doi={10.1109/TIT.1974.1055219},
  url={https://doi.org/10.1109/TIT.1974.1055219}
}
